\documentclass[accepted]{uai2025}

\usepackage[american]{babel}

\usepackage{natbib}
\bibliographystyle{plainnat}

\usepackage{mathtools}
\usepackage{booktabs}
\usepackage{tikz}
\usepackage[utf8]{inputenc}
\usepackage[T1]{fontenc}
\usepackage{amsthm}
\usepackage{amsfonts}
\usepackage{color}
\usepackage{mathtools}
\usepackage{mathrsfs}
\usepackage{hyperref}
\usepackage{booktabs}
\usepackage{enumitem}
\usepackage{nicefrac}
\usepackage{microtype}
\usepackage{xcolor}
\usepackage{comment}
\usepackage{bbm}
\usepackage{svg}
\usepackage{algorithm}
\usepackage{algorithmic}
\usepackage{enumitem}
\usepackage{times}
\usepackage{dsfont}
\usepackage{caption,color}
\usepackage{subcaption}
\usepackage{ulem}
\usepackage{units}

\newcommand{\cA}{\mathcal{A}}
\newcommand{\cB}{\mathcal{B}}
\newcommand{\cC}{\mathcal{C}}

\newcommand{\cF}{\mathcal{F}}
\newcommand{\cG}{\mathcal{G}}

\newcommand{\cM}{\mathcal{M}}
\newcommand{\cN}{\mathcal{N}}
\newcommand{\cO}{\mathcal{O}}
\newcommand{\ctO}{\tilde{\mathcal{O}}}
\newcommand{\cP}{\mathcal{P}}
\newcommand{\cQ}{\mathcal{Q}}
\newcommand{\cR}{\mathcal{R}}
\newcommand{\cS}{\mathcal{S}}
\newcommand{\cT}{\mathcal{T}}

\newcommand{\cY}{\mathcal{Y}}
\newcommand{\cZ}{\mathcal{Z}}

\newcommand{\bE}{\mathbb{E}}
\newcommand{\bN}{\mathbb{N}}
\newcommand{\bP}{\mathbb{P}}

\newcommand{\bR}{\mathbb{R}}

\newcommand{\ust}{^{\star}}
\newcommand{\lst}{_{\star}}
\newcommand{\uc}[1]{^{(#1)}}
\newcommand{\up}{^{\prime}}
\newcommand{\upp}{^{\prime\prime}}

\newcommand{\te}{\theta}

\newcommand{\eps}{\epsilon}
\newcommand{\teps}{\tilde{\epsilon}}
\newcommand{\lm}{\lambda}

\newcommand{\ts}{\tilde{s}}
\newcommand{\ta}{\tilde{a}}

\newcommand{\algo}{\texttt{ZoRL}}

\newcommand{\deff}{d_{\text{eff.}}}

\newcommand{\norm}[1]{\left\lVert#1\right\rVert}
\newcommand{\abs}[1]{\left|#1\right|}

\newcommand{\paren}[1]{\left(#1\right)}
\newcommand{\br}[1]{\left(#1\right)}
\newcommand{\flbr}[1]{\left\{#1\right\}}
\newcommand{\sqbr}[1]{\left[#1\right]}
\newcommand{\ceil}[1]{\left\lceil#1\right\rceil}
\newcommand{\floor}[1]{\left\lfloor#1\right\rfloor}

\newcommand{\ovl}[1]{\mkern 1.5mu\overline{\mkern-1.5mu#1\mkern-1.5mu}\mkern 1.5mu}

\newcommand{\diam}[2]{\mbox{diam}_{#1}(#2)}
\newcommand{\pdiam}[2]{\widetilde{\mbox{diam}}_{#1}(#2)}
\newcommand{\diamc}[1]{\mbox{diam}\br{#1}}
\newcommand{\spn}[1]{~\textit{sp}\paren{#1}}
\newcommand{\ind}[1]{\mathbbm{1}_{\flbr{#1}}}
\newcommand{\rel}[2]{\textit{Rel}_{#1}{\br{#2}}}

\newcommand{\gap}[1]{\text{gap}\br{#1}}
\newcommand{\evi}{\texttt{EVI}}
\newcommand{\epe}{\texttt{EPE}}

\newcommand{\inv}{^{-1}}

\newcommand{\al}[1]{\begin{align}#1\end{align}}
\newcommand{\nal}[1]{\begin{align*}#1\end{align*}}

\newtheorem{thm}{Theorem}[section]
\newtheorem{lemma}[thm]{Lemma}
\newtheorem{prop}[thm]{Proposition}
\newtheorem{cor}[thm]{Corollary}

\newtheorem{defn}[thm]{Definition}

\newtheorem{assum}[thm]{Assumption}

\newtheorem*{remark}{Remark}

\title{Provably Adaptive Average Reward Reinforcement Learning for Metric Spaces}

\author[1]{\href{mailto:<avikkar@iisc.ac.in>?Subject=Your UAI 2025 paper}{Avik Kar}}
\author[1]{\href{mailto:<rahulsingh@iisc.ac.in>?Subject=Your UAI 2025 paper}{Rahul Singh}}

\affil[1]{
    Department of Electrical Communication Engineering\\
    Indian Institute of Science\\
    Bengaluru
}
  
\begin{document}
\maketitle

\begin{abstract}
    We study infinite-horizon average-reward reinforcement learning (RL) for Lipschitz MDPs, a broad class that subsumes several important classes such as linear and RKHS MDPs, function approximation frameworks, and develop an adaptive algorithm $\text{ZoRL}$ with regret bounded as $\mathcal{O}\big(T^{1 - d_{\text{eff.}}^{-1}}\big)$, where $d_{\text{eff.}}= 2d_\mathcal{S} + d_z + 3$, $d_\mathcal{S}$ is the dimension of the state space and $d_z$ is the zooming dimension. In contrast, algorithms with fixed discretization yield $d_{\text{eff.}} = 2(d_\mathcal{S} + d_\mathcal{A}) + 2$, $d_\mathcal{A}$ being the dimension of action space. $\text{ZoRL}$ achieves this by discretizing the state-action space adaptively and zooming into ''promising regions'' of the state-action space. $d_z$, a problem-dependent quantity bounded by the state-action space's dimension, allows us to conclude that if an MDP is benign, then the regret of $\text{ZoRL}$ will be small. The zooming dimension and $\text{ZoRL}$ are truly adaptive, i.e., the current work shows how to capture adaptivity gains for infinite-horizon average-reward RL. $\text{ZoRL}$ outperforms other state-of-the-art algorithms in experiments, thereby demonstrating the gains arising due to adaptivity.
\end{abstract}

\section{Introduction}\label{sec:intro}
Reinforcement Learning (RL)~\citep{sutton2018reinforcement} is a popular model for systems involving real-time sequential decision-making and has applications in many fields such as robotics, natural language processing~\citep{ibarz2021train,sodhi2023effectiveness}.~An agent interacts sequentially with an environment by applying actions and gathers rewards.~The environment is modeled as a Markov decision process (MDP)~\citep{puterman2014markov}, its transition probabilities are not known to the agent.~Its goal is to choose actions sequentially so as to maximize the cumulative rewards.

The current work develops an RL algorithm for infinite-horizon average reward Lipschitz MDPs on metric spaces.~Popular frameworks such as tabular and linear MDPs that have been well-studied in detail in RL literature, are not suitable for real-world applications since these typically involve nonlinear systems that reside on continuous spaces~\citep{kumar2021rma}.~For continuous spaces,~the learning regret could grow linearly with time horizon $T$ unless the problem has some structure~\citep{kleinberg2008multi}.~Hence, we focus on Lipschitz MDPs, which is a very general class and subsumes several popular classes such as linear MDPs~\citep{jin2020provably}, RKHS MDPs~\citep{chowdhury2019online}, linear mixture models, RKHS approximation, and the nonlinear function approximation framework~\citep{osband2014model,kakade2020information}.~See~\cite{maran2024no,maran2024projection} for more details. 

Throughout, we use $d_{\cS},d_{\cA}$ to denote the dimensions of the state-space and the action-space respectively, and $d:=d_{\cS}+d_{\cA}$.~In episodic RL for Lipschitz MDPs, the regret is known to scale as $\ctO\big(K^{1 - \deff\inv}\big)$\footnote{$\ctO$ suppresses poly-logarithmic dependence in $K$ or $T$.}, where $K$ is the number of episodes, while $\deff$ is the effective dimension associated with the \textit{underlying MDP} and also importantly the \textit{algorithm}.~A naive algorithm that uses a fixed discretization has $\deff=d+2$~\citep{song2019efficient}. One can use problem structure to reduce $\deff$; prior works on episodic Lipschitz MDPs such as~\citet{sinclair2019adaptive, cao2020provably} reduce effective dimension to $d_z+2$, where the zooming dimension $d_z$ measures the size of the near-optimal state-action pairs.~These gains are achieved by performing an adaptive discretization of the state-action space and ``zooming in'' to only the promising regions of the state-action space by creating a finer grid around these as time progresses.~However,~\cite{kar2024adaptive} show that zooming technique and algorithms developed for episodic MDPs are inappropriate for average reward RL tasks, in that $d_z\to d$ as $T\to\infty$, which is what one would have obtained via a naive fixed discretization scheme.~\cite{kar2024adaptive} derives an $\cO(\eps^{2 d_\cS + d^\eps_z + 1} \log{T})$ upper-bound on the regret with respect to an $\eps$~suboptimal comparator policy class, where $d^\eps_z$ is the ``$\eps$-zooming dimension'' and satisfies $d^\eps_z \leq d$. However, $d_z^\eps \to d$ in the limit $\eps \downarrow 0$, which shows that no adaptivity gains are achieved if the policy class contains optimal policy, i.e., one wants to attain optimal performance.~In a later version of the same paper,~\cite{kar2024policy} rectifies this issue to some extent by competing against an optimal policy class. They work directly in the policy space, and show zooming behavior in this space rather than the state-action space, i.e., their algorithm ``activates'' more number of policies from the near-optimal regions in the policy space.~They obtain $\deff = d^\Phi_z + 2$, where $d^\Phi_z$ measures the size of near-optimal policies in the set of policies $\Phi$ that can be chosen.~$d^\Phi_z$ is the log-covering number of the set consisting of $(\beta, 2\beta]$-suboptimal policies in $\Phi$.~However, $d^\Phi_z$ can be prohibitively large if either the MDP or the policy-set $\Phi$ is not structured, since it involves coverings in function spaces~\citep{guntuboyina2012l1}. The current work remedies this and upper-bounds the regret in terms of an alternative notion of zooming dimension, one that can be bounded by $d$ in the worst case.~Though the analysis of our algorithm is performed in the policy space, it relates the suboptimality of a policy with that of the associated state-action pairs, thereby deriving an upper-bound of the number of plays of suboptimal policies in terms of coverings of the state-action space.

\subsection{Contributions}
\label{subsec:contribution}
We propose a computationally efficient algorithm~\algo~for Lipschitz MDPs in the infinite-horizon average reward RL setup.~\algo~combines adaptive discretization with the principle of optimism and yields zooming behavior.~We provide a regret upper-bound of~\algo~as a function of the zooming dimension $d_z$, where $d_z$ is defined in terms of the suboptimality gap of the state action pairs~\eqref{def:subgap}. We show that the regret of~\algo~is upper-bounded as $\ctO\big(T^{1 - \deff\inv}\big)$, where $\deff = 2 d_\cS + d_z + 3$, and $d_z \le d$. In order to attain a low $\deff$, we had to overcome several challenges.~These are discussed in detail below.
\begin{enumerate}
    \item \textit{Bypassing Policy Covers}:~As is discussed above, working with policy coverings could lead to a large $\deff$.~Let $\Phi\uc{\beta}$ denote the set of all $(\beta, 2\beta]$-suboptimal policies.~By establishing an upper-bound on the total number of plays of $\Phi\uc{\beta}$ in terms of the $\beta$-covering number of the set of all $\beta$-suboptimal state-action pairs, the current work attains a small $\deff$.~Our proof hinges on the existence of certain ``key cells.''~More specifically, we show that whenever~\algo~plays a suboptimal policy $\phi$, there exists a ball in the state-action space that satisfies the following two properties: 
    (i) it has not been visited sufficiently many times, and (ii) the stationary measure under $\phi$ assigns a large probability mass to it.~Such a ball is called a ``key cell'' for that particular episode, see Fig.~\ref{fig:keycell}.~Lemma~\ref{lem:gap_phi} unveils a relation between the suboptimality of a policy, and the suboptimality gap of the state-action pairs through which this policy passes. This result plays a crucial role in proving the existence of key cells. We derive an upper-bound on the number of plays of a cell during which it is a key cell and policies from $\Phi\uc{\beta}$ are played; here $\beta$ can be chosen from $(0,1]$. This upper-bound helps us to express the regret in terms of a covering of a state-action space, which yields a bound that depends upon the zooming dimension~\eqref{def:zoomingdim}.
    
    \item \textit{Adaptive Episode Durations}: In order to attain $\deff = 2 d_{\cS} + d_z + 3$, we have to ensure that with a high probability, the key cells are visited at least a certain number of times in each episode.~This is achieved by choosing the episode durations as a function of the ``proxy diameter'' of the policy that is played currently. We note that the popular approaches for choosing episode duration, such as ending the episode upon doubling the number of visits to any cell, would fail to yield $\deff = 2 d_{\cS} + d_z + 3$.
\end{enumerate}
We verify the gains of \algo~over both popular fixed discretization-based algorithms and existing adaptive discretization-based algorithms through simulation experiments.
\subsection{Past Works}
\textit{\underline{Lipschitz Bandits}}: The idea of zooming was first proposed in ~\citep{kleinberg2008multi} for Lipschitz multi-armed bandits.~\citet{bubeck2011x} proposed a similar idea that uses a hierarchical partition of the arm space to perform adaptive discretization.

\textit{\underline{Lipschitz MDPs}}:~\citet{domingues2021kernel} uses smoothing kernels in order to construct model estimates and obtain $\ctO\Big(H^3 K^{1 - (2d+1)\inv}\Big)$ regret.~Provable gains arising due to adaptive discretization and zooming is first demonstrated in \citep{cao2020provably}. They obtain $\ctO\Big(H^{2.5+(2 d_z + 4)\inv} K^{1 - (2d_z+1)\inv}\Big)$ regret, where $d_z$ is the zooming dimension defined specifically for episodic RL.~In another work, \citet{sinclair2023adaptive} proposes a model-based algorithm with adaptive discretization and shows the regret to be upper-bounded as $\ctO\Big(L_v H^\frac{3}{2} K^{1 - (d_z + d_\cS)\inv}\Big)$, where $L_v$ is the Lipschitz constant for the value function.~As compared the general function approximation-based works, regret bounds obtained in works on Lipschitz MDPs have a worse growth rate as a function of time horizon. However, this is expected since Lipschitz MDPs are a more general class of MDPs and have a regret lower-bound of $\Omega(K^{1 - (d_z+2)\inv})$~\citep{sinclair2023adaptive}.

\textit{\underline{Non-episodic RL}}:~The minimax regret of state-of-the-art algorithms for finite MDPs~\citep{jaksch2010near,tossou2019near} with $S$ states and $A$ actions is bounded as $\ctO(\sqrt{DSAT})$ where $D$ is the diameter of the MDP.~For finite MDPs in which the transition kernel is a mixture of $d$ component transition kernels, regret is upper-bounded as $\ctO(d \sqrt{D T})$~\citep{wu2022nearly}.~The current work develops algorithm for continuous MDPs.~\cite{wei2021learning} analyzes continuous MDPs under the assumption that the relative value function is a linear function of the features, and obtains a $\ctO(\sqrt{T})$ regret.~Another work,~\citet{he2023sample} approximates the MDP, as well as the value function by using general function classes. They derive a regret upper-bound of $\ctO(\textit{poly}(d_E, B) \sqrt{d_F T})$ regret, where $B$ is the span of the relative value function, $d_E,d_F$ are the eluder dimension and log-covering number of the function class, respectively. When the underlying continuous MDP has a $\alpha$-H\"older continuous and infinitely often smoothly differentiable transition kernel, then~\citet{ortner2012online} shows how to obtain a $\ctO\Big(T^\frac{2d + \alpha}{2d + 2 \alpha}\Big)$ regret.~To the best of our knowledge, only~\citep{kar2024adaptive,kar2024policy}\footnote{~\cite{kar2024policy} is a later version of the same paper~\cite{kar2024adaptive}.} have studied adaptive discretization for average reward Lipschitz MDPs; however, they analyze regret with respect to a given class of policies.~For~\citep{kar2024adaptive}, when this class is ``sufficiently rich'' so that it contains an optimal policy, then their algorithm does not exhibit adaptivity gains, i.e., their zooming dimension reduces to $d$, which is what one would attain via a fixed discretization scheme.~In~\cite{kar2024policy}, the zooming dimension could be even larger than $d$ if the policy class is complex.

\section{Problem Setup}\label{sec:prelim}
\textbf{Notation.} The set of natural numbers is denoted by $\bN$.~We denote the span of a $\bR$-valued function $f \in \bR^X$ by $\spn{f}$, i.e., $\spn{f} = \max_{x \in X}{f(x)} - \min_{x \in X}{f(x)}$.~We abbreviate ``with high probability'' as ``w.h.p.''~For a $\sigma$-algebra $\cF$ and a measure $\mu: \cF \to \bR$, we let $\norm{\mu}_{TV}$ denote its total variation norm~\citep{folland2013real}, i.e., $\norm{\mu}_{TV} := \sup \flbr{|\mu(B)|: B \in \cF}$.~$a \vee b$ denotes the maximum, and $a \wedge b$ denotes the minimum of two real numbers $a$ and $b$.~$\ceil{a}$ denotes the smallest integer that is greater than or equal to $a$.~At certain places, we use a single variable~($z$) to denote state-action pairs.

Let $\cM = (\cS, \cA, p, r)$ be an MDP, where the state-space $\cS$ and action-space $\cA$ are compact sets of dimension $d_\cS$ and $d_\cA$, respectively.~Let $\cS$ be endowed with Borel $\sigma$-algebra $\cB_\cS$. To simplify exposition, we assume that $\cS = [0,1]^{d_\cS}$ and $\cA = [0,1]^{d_\cA}$ without loss of generality.~We denote the system state and action taken at time $t$ by $s_t,a_t$ respectively.~The state $s_t$ evolves as follows,

\begin{align*}
     \bP\left(s_{t+1}\in B| s_t=s, a_t=a\right) = p(s,a,B), \mbox{ a.s.},&\notag\\
     \forall (s,a,B) \in \cS \times \cA \times \cB_\cS,~t \in \{0\}\cup\bN,&
\end{align*}

where $p: \cS \times \cA \times \cB_\cS \to [0,1]$ is the transition kernel that is not known by the agent.~The agent earns a reward $r(s_t,a_t)$ at time $t$, where the reward function $r: \cS \times \cA \to [0,1]$ is a measurable map.~The goal of the agent is to maximize the infinite horizon average reward.~The spaces $\cS, \cA$ are endowed with metrics $\rho_\cS$ and $\rho_\cA$, respectively. The space $\cS \times \cA$ is endowed with a metric $\rho$ that is sub-additive, i.e., we have,
\begin{align*}
    \rho\br{(s,a),(s\up,a\up)} \leq \rho_\cS (s, s\up) + \rho_\cA (a, a\up),
\end{align*}
for all $(s, a), (s\up, a\up) \in \cS \times \cA$.~For $\cZ \subseteq \cS \times \cA$, $\diamc{\cZ} := \sup_{z_1,z_2 \in \cZ}{\rho(z_1,z_2)}$.~A stationary deterministic policy is a measurable map $\phi: \cS \to \cA$ that implements the action $\phi(s)$ when the system state is $s$. Let $\Phi_{SD}$ be the set of all such policies.~The infinite horizon average reward of a policy $\phi$ when it acts on an MDP $\cM$ is denoted by $J_\cM(\phi)$, and is defined as,
\begin{align*}
    J_\cM(\phi) := \underset{T \to \infty}{\lim\inf}{\frac{1}{T} \bE_{\cM,\phi}\sqbr{\sum_{t-0}^{T-1}{r(s_t,a_t)}}},
\end{align*}
where $\bE_{\cM,\phi}$ denotes expectation taken under consideration that policy $\phi$ is used to take actions throughout on the MDP $\cM$.~The optimal average reward of the MDP $\cM$ is defined as $J\ust_\cM := \sup_{\phi \in \Phi_{SD}}{J_\cM(\phi)}$.~The regret~\citep{lattimore2020bandit} of a learning algorithm $\psi$ until $T$ is defined as,
\begin{align}
    \cR(T;\psi) &:= T J\ust_\cM -  \sum_{t=0}^{T-1} r(s_t,a_t).\label{def:regret}
\end{align}
The goal of this work is to design a learning algorithm with tight regret upper bound for Lipschitz MDPs.
~An MDP is Lipschitz if it satisfies the assumption below.
\begin{assum}[Lipschitz continuity]\label{assum:lip}
    \begin{enumerate}
        \item[(i)] \label{assum:lip_r} The reward function $r$ is $L_r$-Lipschitz, i.e., $\forall ~s, s\up \in \cS, a, a\up \in \cA$,
        \begin{align*}
            | r(s,a) - r(s\up,a\up) | \le L_r \rho\left((s,a),(s\up,a\up)\right).
        \end{align*}
        \item[(ii)] \label{assum:lip_p} 
        The transition kernel $p$ is $L_p$-Lipschitz, i.e., $\forall~ s, s\up \in \cS, a, a\up \in \cA$,
        \begin{align*}
            \norm{p(s,a,\cdot) - p(s\up, a\up, \cdot)}_{TV} \le L_p \rho\br{(s,a),(s\up,a\up)}.
        \end{align*}
    \end{enumerate}
\end{assum}
The following assumption ensures that the underlying MDP is ergodic and is typically required for average reward setup~\citep{ortner2020regret,wei2021learning,hao2021adaptive}.
\begin{assum}[Uniform ergodicity]\label{assum:unif_ergodic}
    We assume that $\{s_t\}$, the controlled Markov process~(CMP) induced by transition kernel $p$ under application of any  stationary deterministic policy is uniformly ergodic~\citep{douc2018markov}, that is, for every $\phi \in \Phi_{SD}$, there exists a unique distribution $\mu\uc{\infty}_{\phi,p}$, two constants, $C \in (0,\infty)$ and $\alpha \in (0,1)$ such that
    \begin{align*}
        \norm{\mu\uc{t}_{\phi,p,s} - \mu\uc{\infty}_{\phi,p}}_{TV} \le C \alpha^t, ~\forall s \in \cS, t \in \{0\} \cup \bN,
    \end{align*}
    where $\mu\uc{t}_{\phi,p,s}$ denotes the distribution of $s_t$ under the application of policy $\phi$ given $s_0=s$.
\end{assum}
We note that even when $\cM$ is known,~\eqref{assum:unif_ergodic} is the weakest known sufficient condition that ensures a computationally efficient way to obtain an optimal policy~\citep{arapostathis1993discrete}.~Consider the Average Reward Optimality Equation~(AROE) corresponding to the MDP $\cM$, $J + h(s) = \max_{a \in \cA}{\flbr{r(s,a) + \int_{S}{h(s\up)~ p(s,a,s\up)~ ds\up}}}$.~It can be shown that under Assumption~\ref{assum:unif_ergodic}, there exists a function $h_\cM: \cS \to \bR$ such that $(J\ust_\cM, h_\cM)$ satisty the AROE~\citep{hernandez2012adaptive} where $h_{\cM}$ is the relative value function.~Imposing an additional condition $h(s\lst) = 0$ results in unique solution to the AROE, where $s\lst$ is a designated state.~Also, there exists a stationary deterministic policy $\phi\ust$ that is optimal, i.e., $J\ust_\cM = J_\cM(\phi\ust)$.~Similarly, for a policy $\phi \in \Phi_{SD}$ there is a function $h^\phi_\cM:\cS \to \bR$ such that $(J_\cM(\phi), h^\phi_\cM)$ is the solution of $J + h(s) = r(s,\phi(s)) + \int_{S}{h(s\up)~ p(s,\phi(s),s\up)~ ds\up}$.~See Appendix~\ref{app:gen_res} for more details on properties of average reward MDPs.~The suboptimality gap~\citep{burnetas1997optimal} of a state-action pair is defined as follows:
\begin{align}
    \gap{s,a} := &J\ust_{\cM} + h_{\cM}(s) - r(s,a) \notag\\
    &- \int_{\cS}{h_{\cM}(s\up)~ p(s,a,s\up)~ ds\up}.\label{def:subgap}
\end{align}

\textbf{Zooming dimension.} Let us denote the set of state-action pairs $(s,a)$ such that $\gap{s,a} \leq \beta$ by $\cZ_\beta$.~We define the zooming dimension as
\begin{align}\label{def:zoomingdim}
    d_z := \inf{\flbr{d\up > 0 ~|~ \cN_{c_s \beta}\br{\cZ_\beta} \leq c_z \beta^{-d\up},~\forall  \beta > 0}},
\end{align}
where $\cN_{c_s \beta}\br{\cZ_\beta}$ denotes the $c_s \beta$-covering number~\citep{cao2020provably} of $\cZ_\beta$, $c_s$ \eqref{def:cs} and $c_z$ are problem-dependent constants.~Note that $d_z$ is logarithm of the covering number of a subset of $\cS \times \cA$, hence $d_z \leq d$.
\section{Algorithm}\label{sec:algo}
The proposed algorithm, \algo~discretizes the state-action space in a non-uniform grid adaptively, and the grid becomes finer as time progresses. In this section, first, we explain the adaptive discretization process.
\begin{defn}[Cells]\label{def:cell}
    A cell is a dyadic cube with vertices from the set $\{2^{-\ell}(v_1, v_2, \ldots, v_d): v_j \in \{0,1,\ldots,2^\ell\}, j=1,2,\ldots,d\}$ with sides of length $2^{-\ell}$, where $\ell\in\bN$. The quantity $\ell$ is called the level of the cell. We also denote the collection of cells of level $\ell$ by $\cP^{(\ell)}$.~For a cell $\zeta\subseteq \cS \times \cA$, its $\cS$-projection is called an $\cS$-cell,
    \begin{align}
        \pi_\cS(\zeta) :& = \left\{s \in \cS \mid (s,a) \in \zeta \textit{ for some } a \in \cA \right\},
    \end{align}
    and its level is the same as that of $\zeta$.~Denote the set of $\cS$-cells of level $\ell$ by $\cQ^{(\ell)}$.~For a cell\slash $\cS$-cell $\zeta$, we let $\ell(\zeta)$ denote its level, and let $q(\zeta)$ denote a point from $\zeta$ that is its unique representative point. $q\inv$ maps a representative point to the cell\slash $\cS$-cell that the point is representing, i.e., $q\inv(z) = \zeta$ such that $q(\zeta) = z$.\footnote{With a slight abuse of notation, we use the maps $\ell(\cdot)$, $q(\cdot)$ and $q\inv(\cdot)$ for both cells and $\cS$-cells. Note that for cells and $\cS$-cells, these maps have different domains and codomains.}
\end{defn}

\begin{defn}[Partition tree]\label{def:part_tree}
    A partition tree of depth $\ell$ is a tree in which
    (i) Each node at a depth $m \leq \ell$ of the tree is a cell of level $m$. (ii) If $\zeta$ is a cell of level $m$, where $m<\ell$ then, a) all the cells of level $m+1$ that collectively generate a partition of $\zeta$, are the child nodes of $\zeta$. The corresponding cells are called child cells, and we use $\textit{Child}(\zeta)$ to denote all the child cells of $\zeta$. b) $\zeta$ is called the parent cell of these child nodes.~The set of all ancestor nodes of cell $\zeta$ is called ancestors of $\zeta$.
\end{defn}
\algo~\eqref{algo:zorl} maintains a set of ``active cells.''~The following rule is used for activating and deactivating cells.
\begin{defn}[Activation rule]\label{def:activationrule}
    For a cell $\zeta$ define, 
    \begin{align}
        N_{\max}(\zeta) &:= \frac{c_a 2^{d_\cS+2} \log{\br{\frac{T}{\delta}}}}{\diamc{\zeta}^{d_\cS+2}}, \label{Nmax} \mbox{ and},\\
        N_{\min}(\zeta) &:= \begin{cases}
            ~~1 &\mbox{ if } \zeta = \cS \times \cA\\
            \frac{c_a \log{\br{\frac{T}{\delta}}}}{\diamc{\zeta}^{d_\cS+2}}, &\mbox{otherwise,}\label{Nmin}
        \end{cases}
    \end{align}
    where $c_a>1$ is a constant that satisfies \eqref{def:ca}, and $\delta \in (0,1)$ is the confidence parameter.~The number of visits to $\zeta$ is denoted $N_t(\zeta)$ and is defined as follows.
    \begin{enumerate}
        \item Any cell $\zeta$ is said to be active if $N_{\min}(\zeta) \leq N_t(\zeta) < N_{\max}(\zeta)$.
        \item $N_t(\zeta)$ is defined for all cells as the number of times $\zeta$ or any of its ancestors has been visited while being active until time $t$, i.e.,
    \begin{align}\label{def:visitcounter}
        N_t(\zeta) &:= \sum_{i=0}^{t-1}{\ind{(s_i, a_i) \in \zeta_i}},
    \end{align}
    where $\zeta_i$ is the unique cell that is active at time $i$ and satisfies $\zeta \subseteq \zeta_i$.
    \end{enumerate}
Denote the set of active cells at time $t$ by $\cP_t$.
\end{defn}
We note that since the diameter of a child cell is half that of its parent, a parent cell is deactivated, and its child cells are activated simultaneously.~Since a cell is partitioned by its child cells, the set of active cells at time $t$, $\cP_t$ forms a partition of the state action space. \algo~clusters all the state-action pairs into the active cells by utilizing the information gathered until $t$. Each point in an active cell (cluster) $\zeta$ looks similar for the purpose of generating optimal actions, and is hence represented via its unique representative point $q(\zeta)$.~Denote the collection of representative points of the active cells at time $t$ by $\cZ_t:= \flbr{q(\zeta): \zeta \in \cP_t}$.~Let $\ell_{\max,t}$ be the level of the smallest cells in $\cP_t$.~At time $t$,~\algo~partitions the state-space into $\cS$-cells of level $\ell_{\max,t}$. We denote this $\cS$-cell partition by $\cQ_t$, i.e., $\cQ_t := \cQ\uc{\ell_{\max,t}}$, and the corresponding representative points by $\cS_t$, i.e., $\cS_t := \flbr{q(\zeta) : \zeta \in \cQ_t}$. $\cS_t$ can be thought of as the discretized state space at time $t$. \algo~maintains estimates of the transition probability kernel that has support on $\cS_t$.

Now, we introduce a generic notation for discretized transition kernels, which will be used often in this paper.~Let $\tilde{\cS}$ be a set of representative points of a partition of $\cS$ consisting of only $\cS$-cells. Then, for a continuous transition kernel $\tilde{p}$, and $\tilde{\cZ} \subseteq \cS \times \cA$, we define $\wp_{\tilde{\cZ} \to \tilde{\cS},\tilde{p}}(z,\cdot): \tilde{\cZ} \mapsto [0,1]^{\tilde{\cS}}$ as follows, 
\begin{align}\label{def:disc_p}
    \wp_{\tilde{\cZ} \to \tilde{\cS},\tilde{p}}(z,s) := \tilde{p}(z,q\inv(s)),~\forall z \in \tilde{\cZ}, s \in \tilde{\cS}.
\end{align}
The kernel $\wp_{\tilde{\cZ} \to \tilde{\cS},\tilde{p}}$ can be viewed as a discretization of $\tilde{p}$.

\textbf{Estimating the Transition Kernel.}
~Let $N_t(\zeta, \xi)$ be the total number of transitions from a cell $\zeta$, or from its active ancestors to a $\cS$-cell $\xi$ until $t$, i.e., $N_t(\zeta, \xi) := \sum_{i=1}^{t-1}{\ind{(s_i, a_i, s_{i+1}) \in \zeta_i \times \xi}}$.~For any state-action pair $z$, we let $q\inv_t(z)$ denote the active cell that contains $z$.~Denote $\tilde{\cS}_t(z) := \{q(\xi) : \xi \in \cQ^{(\ell(q\inv_t(z)))}\}$, which is the set of representative states of the $\cS$-cells of level $\ell(q\inv_t(z))$.~We first construct an estimate $\hat{p}^{(d)}_t$~\eqref{eq:kernel_estimate} of the discretized version of the true stochastic kernel as follows,
\begin{align}\label{eq:kernel_estimate}
    \hat{p}^{(d)}_t(z,s) := \frac{N_t\br{q\inv(z), q\inv(s)}}{1 \vee N_t\br{q\inv(z)}},
\end{align}
$z \in \cZ_t, s \in \tilde{\cS}_t(z)$.~Note that the distribution $\hat{p}^{(d)}_t(z,\cdot)$ is supported on a finite set $\tilde{\cS}_t(z)$, and the sets $\{\tilde{\cS}_t(z)\}$ are adaptive.~$\hat{p}^{(d)}_t(z,\cdot)$ is then extended to obtain a continuous kernel $\hat{p}_t$.~$\hat{p}_t$ is defined as,
\begin{align}\label{def:p_hat}
    \hat{p}_t(z,B) := &\sum_{s \in \tilde{\cS}_t(z)}{\frac{\lambda(B \cap q\inv(s))}{\lambda(q\inv(s))} \hat{p}^{(d)}_t(z,s)},
\end{align}
where $z \in \cZ_t$, $B \in \cB_\cS$, and $\lambda(\cdot)$ is the Lebesgue measure on $(\cS,\cB_\cS)$.~To obtain a computationally feasible algorithm, we work with the discretization $\wp_{\cZ_t \to \cS_t,\hat{p}_t}$ of $\hat{p}_t$.

Note that the set $\tilde{S}_t(z)$ depends upon the diameter of the active cell containing $z$, so that the support of the discrete kernel $\hat{p}^{(d)}_t(z,\cdot)$ varies with $z$.~The construction of $\wp_{\cZ_t \to \cS_t,\hat{p}_t}$ from $\hat{p}^{(d)}_t$ ensures that the support of the discrete kernel at every point is the same ($\cS_t$). This allows us to use the \evi~algorithm, which will be introduced later in this section.

\textbf{Concentration Inequality.}~\algo~constructs a confidence ball centered at $\wp_{\cZ_t \to \cS_t,\hat{p}_t}$ that contains discretized version of the true transition kernel, $p$ w.h.p. For a cell $\zeta \in \cP_t$, the confidence radius associated with the estimate $\wp_{\cZ_t \to \cS_t,\hat{p}_t}(q(\zeta),\cdot)$ is defined as follows,
\begin{align}\label{def:eta_k}
    \eta_t(\zeta) &:= \min \Bigg\{2,  3 \br{\frac{c_a \log{\br{\frac{T}{\delta}}}}{N_t(\zeta)}}^\frac{1}{d_S + 2} \notag\\
    &\qquad\qquad + (3 L_p + C_p) \diamc{\zeta}\Bigg\},
\end{align}
where $C_p$ is an upper bound on the derivatives of the transition density functions, as described in Assumption~\ref{assum:bdd_der}, and the constant $c_a \geq 1$ satisfies~\eqref{def:ca}.~It turns out that the following value of $c_a$ satisfies~\eqref{def:ca}:
\begin{align}
    c_a = \frac{2 d^{\frac{d_\cS}{2}}}{9} \frac{\log{\br{6 d^\frac{d}{2}}}}{\log{\br{\frac{T}{\delta}}}} + \frac{d}{d_\cS+2} + 1. \label{value:ca}
\end{align}
Lemma~\ref{lem:conc_ineq} shows that w.h.p.,
\begin{align*}
    \norm{\wp_{\cS\times\cA \to \cS_t,p}(z,\cdot) - \wp_{\cZ_t \to \cS_t,\hat{p}_t}(q(\zeta),\cdot)}_{TV}& \leq \eta_t(\zeta),\\
    &\forall z \in \zeta,
\end{align*}
for every $t$ and every $\zeta \in \cP_t$.~This leads to the definition of the confidence ball that \algo~uses.

Now, we introduce the discrete state-action space that we will use in the definition of the confidence ball.~The set of all the relevant cells for $s \in \cS$ at time $t$ are defined as $\rel{t}{s} := \{ \zeta \in \cP_t \mid \exists a \in \cA \mbox{ such that } (s,a) \in \zeta \}$.~These are those active cells whose $\cS$-projection contain the state $s$.~Thus, $\rel{t}{s}$ can be seen as the set of those cells in the state-action space that are associated with state $s$ currently. Recall that $\cS_t$ is the discrete state space at time $t$.~Define
\begin{align*}
    &\cA_t(s) \\
    &:= \cup_{\zeta \in \rel{t}{s}}{\left\{a \in \cA \mid q(\zeta) = (s\up,a) \mbox{ for some } s\up \in \cS \right\}}. 
\end{align*}
$\cA_t(s)$ denotes the set of actions that are available to the agent that can be played by it currently in state $s$. The discrete action space at time $t$ is given by $\cA_t := \{\cA_t(s) : s \in \cS_t\}$.~Let $\cS_t \times \cA_t := \{(s,a) \mid s \in \cS_t, a \in \cA_t(s)\}$.~Define the confidence ball,
\begin{align}
    &\cC_t := \notag\\
    &\big\{\te: \cS_t \times \cA_t \mapsto [0,1]^{\cS_t} \mid \sum_{s \in \cS_t}{\te(z,s)} = 1,~\forall z \in \cS_t \times \cA_t, \notag\\
    & \norm{\te(z\up,\cdot) - \wp_{\cZ_t \to \cS_t, \hat{\wp}_t}(\bar{z},\cdot)}_1 \leq \eta_t(q\inv(\bar{z})) \mbox{ for every } \notag\\
    & \bar{z} \in \cZ_t, z\up \in q\inv(\bar{z}) \cap \cS_t \times \cA_t\big\}.\label{def:confball}
\end{align}

As a consequence of Lemma F.1, $\cC_t$ contains $\wp_{\cS_t \times \cA_t \to \cS_t, p}$ w.h.p. Denote the time when the $k$-th episode of \algo~begins by $\tau_k$.~At the beginning of each episode $k$, \algo~constructs a set of discrete MDPs $\cM^{+}_{\tau_k}$ with transition kernel can be chosen from $\cC_{\tau_k}$, and reward function is equal to the true rewards at the discrete points $\cS_t\times\cA_t$, plus a bonus term. Such a set of MDPs is called the ``extended MDP'' and it is commonly used to incorporate optimism in upper confidence bound-based RL algorithms~\citep{jaksch2010near}. The optimal average reward of the extended MDP exceeds the optimal average reward of the true MDP since $\cC_t$ contains the true discretized transition kernel $\wp_{\cS_t \times \cA_t \to \cS_t, p}$ w.h.p.; this yields an ``optimistic push'' which ensures ``sufficient exploration.''~The confidence ball shrinks with the number of visits to different state-action pairs; this causes a reduction in the amount of optimism bonus. The extended MDP thus closely approximates the true MDP in the ``important regions'' (those necessary for recovering an optimal policy) of the state-action space as time progresses.~Next, we discuss the extended MDP in detail, how to solve it, and its role in \algo.

\textbf{Extended MDP.}~Consider the following modified reward function defined on $\cS_t \times \cA_t$,
\begin{align*}
    \tilde{r}_t(s,a) &= r(q(q\inv_t(s,a))) + L_r \diamc{q\inv_t(s,a)}, 
\end{align*}
in which a bonus term proportional to the diameter of the active cell that contains $(s,a)$ has been included in order to compensate for the ``discretization error.''~Consider the following collection of MDPs $\cM^{+}_t := \{(\cS_t, \cA_t, \tilde{p}, \tilde{r}_t) : \tilde{p} \in \cC_t\}$.~One may view $\cM^{+}_t$ as an MDP with the finite state space $\cS_t$ and an extended action space, hence the name extended MDP. An element from the extended action space has two components: control input from $\cA_t$, and a transition kernel from $\cC_t$. Let $\Phi_t$ be the set of those policies $\phi$ that satisfy $\phi(s) \in \cA_t(s),~\forall s \in \cS_t$.~Denote the optimal average reward of $\cM^+_t$ by $J\ust_{\cM^+_t}$.~\algo~uses the \evi~algorithm in order to obtain an optimal policy for the extended MDP at the beginning of every episode.~This is discussed next.

\begin{algorithm}[ht]
    \caption{Extended Value Iteration~(\evi)}
    \label{algo:evi}
    \begin{algorithmic}
        \STATE {\bfseries Input} Extended MDP $\cM^+$, accuracy parameter $\gamma > 0$.
        \STATE {\bfseries Initialize} $v_0 = \{0\}^{\abs{S}}$, $n = 0$.
        \WHILE{True}
            \STATE $v_{n+1} = \cT v_n$~\eqref{def:T_v}
            \IF{$\spn{v_{n+1} - v_{n}} \leq \gamma$}
                \STATE {\bfseries break}
            \ENDIF
            \STATE $n \leftarrow n+1$
        \ENDWHILE
        \RETURN Greedy Policy w.r.t. $v_n$ 
    \end{algorithmic}
\end{algorithm}
\begin{algorithm}[ht]
    \caption{Extended Policy Evaluation~(\epe)}
    \label{algo:epe}
    \begin{algorithmic}
        \STATE {\bfseries Input} Extended MDP $\cM^+$, policy $\phi$, accuracy parameter $\gamma > 0$, reference state $s\lst$.
        \STATE {\bfseries Initialize} $v_0 = \{0\}^{\abs{S}}$, $n = 0$.
        \WHILE{True}
            \STATE $v_{n+1} = \underset{\te \in \cC}{\max}\Big\{\tilde{r}(s,\phi(s)) +\underset{s\up \in S}{\sum}{\te(s,\phi(s),s\up) v_n(s\up)}\Big\}$
            \IF{$\spn{v_{n+1} - v_{n}} \leq \br{v_{n+1}(s\lst) - v_{n}(s\lst)}\gamma$}
                \STATE {\bfseries break}
            \ENDIF
            \STATE $n \leftarrow n+1$
        \ENDWHILE
        \RETURN $v_{n+1}(s\lst) - v_{n}(s\lst)$
    \end{algorithmic}
\end{algorithm}

\textbf{\evi} (Algorithm~\ref{algo:evi}) takes as input an extended MDP, and an error tolerance parameter $\gamma > 0$, and returns a policy whose average reward is $\gamma$-close to the optimal value of the extended MDP.~A generic extended MDP $\cM^+ = \{(S, A, \tilde{p}, \tilde{r}) : \tilde{p} \in \cC\}$ has a discrete state space $S$, and discrete action space $A = \{ A(s) : s \in S\}$ where $A(s)$ is the set of actions that are permissible in state $s$.~$\cC$ is a set of transition kernels which yield a distribution over $S$ for each point in $S \times A$.~$\tilde{r}$ is the reward function.~Given the extended MDP $\cM^+$, define the following operator $\cT: \bR^{S}\mapsto\bR^{S}$,
\begin{align}
    \cT v(s) =  \max_{\substack{a \in A(s)\\ \te \in \cC}}\Big\{\tilde{r}(s,a) + \sum_{s\up \in S}{\te(s,a,s\up) v(s\up)}\Big\}.\label{def:T_v}
\end{align}
See that $\mathcal{T}$ is the Bellman operator~\citep{puterman2014markov} for the extended MDP, $\mathcal{M}^+$, where maximization of the value is done over the extended action space, $A(s) \times \mathcal{C}$. Recall that the Bellman operator for usual MDPs maximizes over the set of all actions. At time $\tau_k$, \algo~calls \evi$(M^+_{\tau_k}, 1/\sqrt{T})$. The \evi~subroutine then applies the Bellman operator~\eqref{def:T_v} for $M^+_{\tau_k}$ repetitively until stopping criterion is met and returns the policy $\tilde{\phi}_k \in \Phi_{\tau_k}$, which is $1/\sqrt{T}$-near optimal (Lemma~\ref{lem:conv_evi}). \algo~then extends $\tilde{\phi}_k$ on the entire continuous space $\cS$ to obtain $\phi_k$ as follows: for every state in the $\cS$-cell $\xi \in \cQ_{\tau_k}$, $\phi_k$ plays $\tilde{\phi}_k(q(\xi))$, i.e.,
\begin{align}
    \phi_k(s) = \tilde{\phi}_k(q(\xi)),\forall s \in \xi,~\xi \in \cQ_{\tau_k}.  \label{map:phi_varphi}
\end{align}
\textbf{Episode Duration.}~\algo~chooses the duration of the $k$-th episode as a function of the expected diameter of the states visited at stationarity of the chosen policy, $\phi_k$. Define the extended MDP $\cM^{d,+}_{t} = \{(\cS_t, \cA_t, \tilde{p}, d_t) : \tilde{p} \in \cC_t\}$, where
\begin{align*}
    d_t(s,a) := \diamc{q\inv_t(s,a)},~\forall (s,a) \in \cS_t \times \cA_t.
\end{align*}
Let $\tilde{\phi} \in \Phi_t$.~We define the proxy diameter of $\tilde{\phi}$ at time $t$ as the average reward of the policy $\tilde{\phi}$ evaluated on MDP $\cM^{d,+}_{t}$ and denote it by $\pdiam{t}{\tilde{\phi}}$.~To be precise, $\pdiam{t}{\tilde{\phi}}$ is the optimal value of $\cM^{d,+}_{t}$ when the control input component of the extended action is chosen according to the policy $\tilde{\phi}$, and the transition kernel is chosen so as to maximize the average reward.~Define the diameter of a policy $\phi \in \Phi_{SD}$ at time $t$ as follows:
\begin{align}
    \diam{t}{\phi} := \int_{\cS}{\diamc{q\inv_t(s,\phi(s))} \mu\uc{\infty}_{\phi,p}(s) ds}.\label{def:diam_pol}
\end{align}
In Appendix~\ref{app:prop_pdiam}, we show that $\pdiam{\tau_k}{\tilde{\phi}_k}$ is a tight upper-bound of $\diam{\tau_k}{\phi_k}$ for every $k$.~The duration of the $k$-th episode, $H_k$ is chosen as,
\begin{align}\label{def:epi_dur}
    H_k = \frac{C_H \log{\br{T/\delta}}}{\pdiam{\tau_k}{\tilde{\phi}_k}^{2(d_\cS + 1)}},
\end{align}
where $C_H$, a problem-dependent quantity of $\cO(\log(T))$, satisfies \eqref{def:CH}. This choice of episode duration ensures a reduction of the diameter of the chosen policy in every episode. \algo~uses \epe~(Algorithm~\ref{algo:epe}) in order to compute $\pdiam{\tau_k}{\tilde{\phi}_k}$. $\epe(\cM^{d,+}_{t}, \tilde{\phi}, \gamma, s\lst)$ returns a value from $[\br{1+\gamma}\inv\pdiam{t}{\tilde{\phi}}, \br{1-\gamma}\inv\pdiam{t}{\tilde{\phi}}]$~(Corollary~\ref{cor:conv_epe}) for any $\tilde{\phi} \in \Phi_t$ where $\gamma$ is a parameter chosen by the agent.

\begin{algorithm}[ht]
    \caption{Zooming Algorithm for RL~(\algo)}
    \label{algo:zorl}
    \begin{algorithmic}
        \STATE {\bfseries Input} Horizon $T$, upper-bounds on $L_r$, $L_p$, $C_p$, constants $c_a$, $C_H$ and accuracy parameter $\gamma > 0$
        \STATE {\bfseries Initialize} $h=0$, $k=0$, $H_0 = 0$, $\cP_0 = \{\cS \times \cA\}$
        \FOR{$t= 0$ to $T-1$}
            \IF{$h \geq H_k$}
                \STATE $k \leftarrow k+1$, $h \leftarrow 0$, $\tau_k = t$, $s\lst \in \cS_t$
                \STATE Construct $\cM^{+}_{\tau_k}$ and $\cM^{d,+}_{\tau_k}$
                \STATE $\tilde{\phi}_k = \evi(\cM^+_{\tau_k}, 1/\sqrt{T})$
                \STATE Obtain $\phi_k$ from $\tilde{\phi}_k$ according to \eqref{map:phi_varphi}
                \STATE $d_k = \epe(\cM^{d,+}_{\tau_k}, \tilde{\phi_k}, \gamma, s\lst)$
                \STATE $H_k = C_H \log{\br{T/\delta}}~ d_k^{-2(d_\cS + 1)}$
            \ENDIF
            \STATE $h \leftarrow h+1$
            \STATE Play $a_t = \phi_k(s_t)$, observe $s_{t+1}$ and receive $r(s_t, a_t)$
            \IF{$N_t(q\inv_t(s_t, a_t)) = N_{\max}(q\inv_t(s_t, a_t))$}
                \STATE $\cP_{t+1} = \cP_t \cup \textit{Child}(q\inv_t(s_t, a_t)) \setminus \{q\inv_t(s_t, a_t)\}$
            \ELSE
                \STATE $\cP_{t+1} = \cP_t$
            \ENDIF
        \ENDFOR
	\end{algorithmic}
\end{algorithm}
\section{Regret Analysis}\label{sec:regret}
We let $\Delta(\phi) := J\ust_{\cM} - J_{\cM}(\phi)$ denote the suboptimality of policy $\phi$.~The following result establishes a relation between the suboptimality of a policy, and the suboptimality gap of the state-action pairs through which this policy passes, where suboptimality gap of state-action pair is defined in \eqref{def:subgap}.~Its proof is deferred to Appendix~\ref{app:gen_res}.
\begin{lemma}\label{lem:gap_phi}
    Consider the MDP $\cM = (\cS, \cA, p,r)$. For any policy $\phi \in\Phi_{SD}$, we have
    \begin{align*}
        \Delta(\phi) = \int_{\cS}{\gap{s,\phi(s)}~ \mu\uc{\infty}_{\phi,p}(s)~ ds}.
    \end{align*}
\end{lemma}
We make the following assumption on the true kernel $p$ for deriving concentration bound for the estimate of the discretized transition kernel $\wp_{\cS_t \times \cA_t \to \cS_t,p}$~\eqref{def:disc_p}.
\begin{assum}[Bounded Radon-Nikodym derivative]\label{assum:bdd_der}
    The probability measures $\{p(s,a,\cdot)\}$ are absolutely-continuous w.r.t. the Lebesgue measure on $(\cS,\cB_\cS)$, with density functions given by $\{f_{(s,a)}\}$.~We assume that these densities satisfy 
    \begin{align*}
        \norm{\frac{\partial f_{(s,a)}(s^+)}{\partial s^+(i)}}_\infty \leq C_p, \forall (s,a) \in \cS \times \cA, i = 1, 2, \ldots,  d_\cS,
    \end{align*}
    where the variable $s^+ = (s^+(1),s^+(2),\cdots,s^+(d_\cS))$ represents the next state.
\end{assum}
Assumption~\ref{assum:bdd_der} ensures that the discretizations of $p(s,a,\cdot)$ with respect to the partitions $\cQ^{(\ell(q\inv_t(s,a)))}$ and $\cQ_t$ are at most $C_p~ \diamc{q\inv_t(s,a)}$ distance apart (Lemma~\ref{lem:disc_dist}).~Using this result, Lemma~\ref{lem:conc_ineq} shows that under Assumption~\ref{assum:lip} and Assumption~\ref{assum:bdd_der}, $\cap_{t=0}^{T-1}\{\wp_{\cS_t \times \cA_t \to \cS_t, p} \in \cC_t\}$ occurs w.h.p.~The following assumption allows us to derive an upper-bound on the span of the EVI iterates, which is essential to ensure that the algorithm is not overly optimistic.
\begin{assum}[Bound on Stationary Distributions]\label{assum:statn_dist}
    There is a constant $\kappa > 0$ such that for every policy $\phi \in \Phi_{SD}$, and for every $\zeta \in \cB_\cS$, we have, $\kappa \cdot \lambda(\zeta) \leq \mu^{(\infty)}_{\phi,p}(\zeta)$, where $\lambda(\cdot)$ denotes the Lebesgue measure on $(\cS,\cB_\cS)$.
\end{assum}
\begin{remark}[Regarding Assumptions]
    In the average reward setup for continuous space MDPs, assumptions similar to Assumption~\ref{assum:statn_dist} or more restrictive assumptions are needed.~For example, \citet{ormoneit2002kernel} assumes that the transition kernel of the underlying MDP has a strictly positive Radon-Nikodyn derivative in order to show that a proposed adaptive policy converges to an optimal policy.~\citet{wang2023optimal} and \citet{shah2018q} derive optimal sample complexity for average reward RL and for discounted reward RL, respectively, under an assumption that the $m$-step transition kernel is bounded below by a known measure.~\citet{kar2024policy} also make the same assumption as ours in order to derive the regret upper-bound of their adaptive discretization-based algorithm.~\citet{wei2021learning} bounds the regret for average reward RL algorithm when the relative value function is a linear function of a set of known feature maps.~Their ``uniformly excited features'' assumption ensures that upon playing any policy, the confidence ball shrinks in each direction, which has a similar effect as Assumption~\ref{assum:statn_dist}.
\end{remark}
We now present our main result that provides an upper-bound on regret of \algo. We only provide a proof sketch here and delegate its detailed proof to the appendix.

\begin{thm}\label{thm:regupperbound}
     Under Assumptions~\ref{assum:lip}, \ref{assum:unif_ergodic}, \ref{assum:bdd_der} and \ref{assum:statn_dist}, with probability at least $1 - \delta$, $\cR(T;\algo)$ is upper-bounded as $\cO\big(T^{1-\deff\inv}\big)$ where $\deff = 2 d_\cS + d_z + 3$. 
\end{thm}

\begin{proof}[Proof sketch]
    We decompose the regret~\eqref{def:regret} in the following manner.~Let $K(T)$ denote the total number of episodes during $T$ timesteps.~Then, 
    \begingroup
    \allowdisplaybreaks
    \begin{align*}
       & \cR(T;\algo) = T J\ust_{\cM} - \sum_{k=1}^{K(T)}{\sum_{t = \tau_k}^{\tau_{k+1}-1}{r(s_t,a_t)}} \notag\\
        &= \underbrace{\sum_{k=1}^{K(T)}{H_k \br{J\ust_{\cM} - J_{\cM}(\phi_k)}}}_{(a)} \\
        &+ \underbrace{\sum_{k=1}^{K(T)}{\br{H_k~ J_{\cM}(\phi_k) - \sum_{t=\tau_k}^{\tau_{k+1}-1}{r(s_t,\phi_k(s_t))}}}}_{(b)}.
    \end{align*}
    \endgroup
    (a) captures the regret arising due to playing a suboptimal policy $\phi_k$ during the $k$-th episode, while (b) captures the possible degradation in performance during the transient stage as compared with the average rewards of the chosen policies. (a) and (b) are bounded separately below.

    \textbf{Bounding} (a): Step 1: In Lemma~\ref{lem:optimism}, we show that the policy obtained by solving~$\cM^+_t$ is optimistic, i.e., w.h.p. $J^{\star}_{\cM^+_t} \geq J\ust_{\cM}$.~Also, in Lemma~\ref{lem:ub_opt}, we show that w.h.p., $J^{\star}_{\cM^+_t} \leq J\ust_{\cM} + C_{ub}~ \diam{t}{\phi_k}$, where $C_{ub}$~is as defined in \eqref{def:Cub}.~As a consequence of the above two results, on a high probability set, a suboptimal policy $\phi$ will never be played from episode $k$ onwards if $\diam{\tau_k}{\phi} \leq C_{ub}\inv \cdot \Delta(\phi)$.~Note that the cumulative regret arising due to policies with $\Delta(\cdot)$ less than $\eps$ is at most $\eps T$. We choose $\eps$ optimally and restrict the analysis to regret arising from playing other policies.
    
    Step 2: We combine Step 1 with Lemma~\ref{lem:gap_phi} in Lemma~\ref{lem:keycell} and show that on a high probability set, in each episode $k$, there is a state $s \in \cS$ such that
    \begin{subequations}
        \begin{align}
            &\diamc{\zeta} \geq   \notag\\
            &\frac{1}{3 C_{ub}}\max\{\gap{s,\phi_k(s)}, C_{ub} \diam{\tau_k}{\phi_k}\}, \label{keycell:cond1}\\
            &\mu\uc{\infty}_{\phi_k,p}(\pi_\cS(\zeta)) \geq \br{\frac{\diam{\tau_k}{\phi_k}}{3}}^{d_\cS + 1},\label{keycell:cond2}
        \end{align}
    \end{subequations}
    where $\zeta = q\inv_{\tau_k}(s,\phi_k(s))$.~This cell $\zeta$ is called a key cell in the $k$-th episode.
    \begin{figure}[t]
        \centering
        \includegraphics[width=0.7\linewidth]{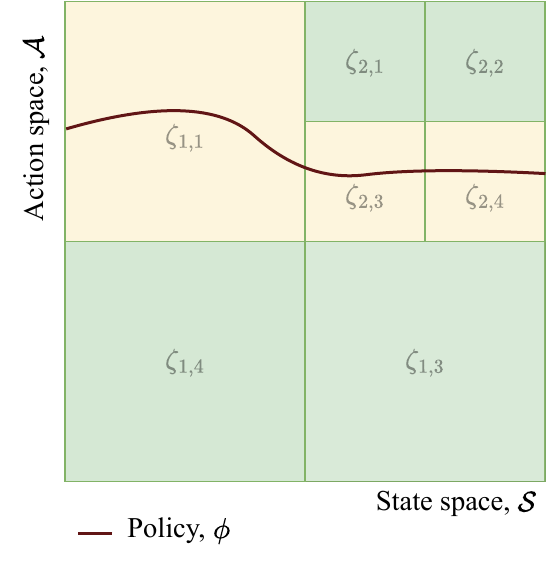}
        \caption{Key cell: The policy $\phi$ is played during the $k$-th episode. This diagram depicts the discretization grid at the beginning of the $k$-th episode.~Then, one of the cells $\zeta_{1,1}$, $\zeta_{2,3}$ and $\zeta_{2,4}$ must be a key cell with a high probability (Lemma~\ref{lem:keycell}). There must be a state $s$ such that $(s,\phi(s))$ belongs to this cell, and $s$ satisfies \eqref{keycell:cond1} and \eqref{keycell:cond2}.}
        \label{fig:keycell}
    \end{figure}
    
    Step 3: Then we show that with a high probability, the key cells of the $k$-th episode are visited at least $\cO\br{\log{\br{\frac{T}{\delta}}} \diamc{\zeta}^{-(d_\cS + 1)}}$ times during the $k$-th episode.~This is done in Lemma~\ref{lem:lb_num_visit}.

    Step 4: We obtain a bound on the cardinality of the key cells associated with playing policies from the set $\Phi_{2^{-i}} = \{\phi \in \Phi_{SD} \mid \Delta(\phi) \in (2^{-i}, 2^{-i+1}]\}$ by showing that these cells are contained within a set of cells that has a cardinality at most $\cO(2^{id_z})$. We then use this bound along with the lower-bound on the number of plays of the key cells, and conclude that the policies from $\Phi_{2^{-i}}$ are played for a maximum of $\cO\br{\log{\br{\frac{T}{\delta}}} 2^{i(2 d_\cS + d_z + 3)}}$ time-steps~(Lemma~\ref{lem:bdd_Phi_play}).
    
    Step 5: The term~(a) can be written as the sum of the regrets arising due to playing policies from the sets $\Phi_{2^{-i}}$, where $i=1,2,\ldots,\ceil{\log{\br{\frac{1}{\eps}}}}$, where $\eps = T^{-\frac{1}{2 d_\cS + d_z + 3}}$.~To bound the regret arising due to playing policies from $\Phi_{2^{-i}}$, we multiply $\cO\br{\log{\br{\frac{T}{\delta}}} 2^{i(2 d_\cS + d_z + 3}}$ by $2^{-i + 1}$.~We then add these regret terms from $i=1$ to $\ceil{\log{\br{\frac{1}{\eps}}}}$ and $\eps T$.
    
    Step 6: Lastly, we add $\sqrt{T}$ to the final bound to compensate for the inaccuracy caused by \evi~due to finite computational resources. This gives us the upper-bound on (a) w.h.p.

    \textbf{Bounding} (b): upper-bound on the term $(b)$ relies on the uniform ergodicity property~(Assumption~\ref{assum:unif_ergodic}) of $\cM$ and a trick that converts ``Markovian noise'' to ``martingale noise''~\citep{metivier1984applications}.~Proposition~\ref{prop:bddb} shows that on a high probability set, we must pay a constant penalty each time we change policy, which is $\cO(K(T) + \sqrt{T})$.~We show that the rule which decides when to start a new episode ensures that $K(T)$ is bounded above by $\cO(T^\frac{d_z + 1}{2 d_\cS + d_z + 3})$, and so is the term (b).

    Summing the upper-bounds on (a) and (b), we obtain the desired regret bound.
\end{proof}
\section{Simulations}\label{sec:sim}
We compare the performance of \algo~(Algorithm~\ref{algo:zorl})~with that of UCRL2~\citep{jaksch2010near}, TSDE~\citep{ouyang2017learning}, RVI-Q~\citep{borkar2000ode} which is a Q-learning algorithm for average-reward RL, ZoRL-$\eps$~\citep{kar2024adaptive}, and the heuristic algorithm PZRL-H~\citep{kar2024policy}.~For competitor policies that are designed for finite state-action spaces, we apply them on a uniform discretization of $\cS \times \cA$ performed at time $t=0$.~Simulation experiments are conducted on the following systems:~(i) \texttt{Continuous RiverSwim}, where the environment models an agent who is swimming in a river. (ii) Linear Quadratic~(LQ) control systems~\citep{abbasi2011regret} where the state evolves as $s_{t+1} = A s_t + B a_t + w_t$, and we truncate the state-action space in order to ensure that they are compact.~Denote the two systems of dimension $2 \times 2$ and $2 \times 4$ as \texttt{Truncated LQ-$1$} and \texttt{Truncated LQ-$2$}, respectively. (iii) \texttt{Non-linear System} where the state evolves as $s_{t+1} = A f(s_t) + B g(a_t) + w_t$, where $f$ and $g$ are non-linear functions.~Similar to the truncated LQ systems, we truncate the state-action space.~Details of the environments can be found in~\citet{kar2024adaptive}, and also in Appendix~\ref{app:sim}.~We plot the cumulative rewards averaged over $50$ runs in Figure~\ref{fig:perf}.~\algo~performs the best among all six algorithms on each of the environments.~Very recently, \citet{kar2024policy} has replaced PZRL-H with two algorithms, PZRL-MB and PZRL-MF.~In Appendix~\ref{app:sim} we compare their performance with~\algo.

\begin{figure}[ht]
    \centering
    \begin{subfigure}[b]{0.49\linewidth}
        \centering
        \includegraphics[width=\textwidth]{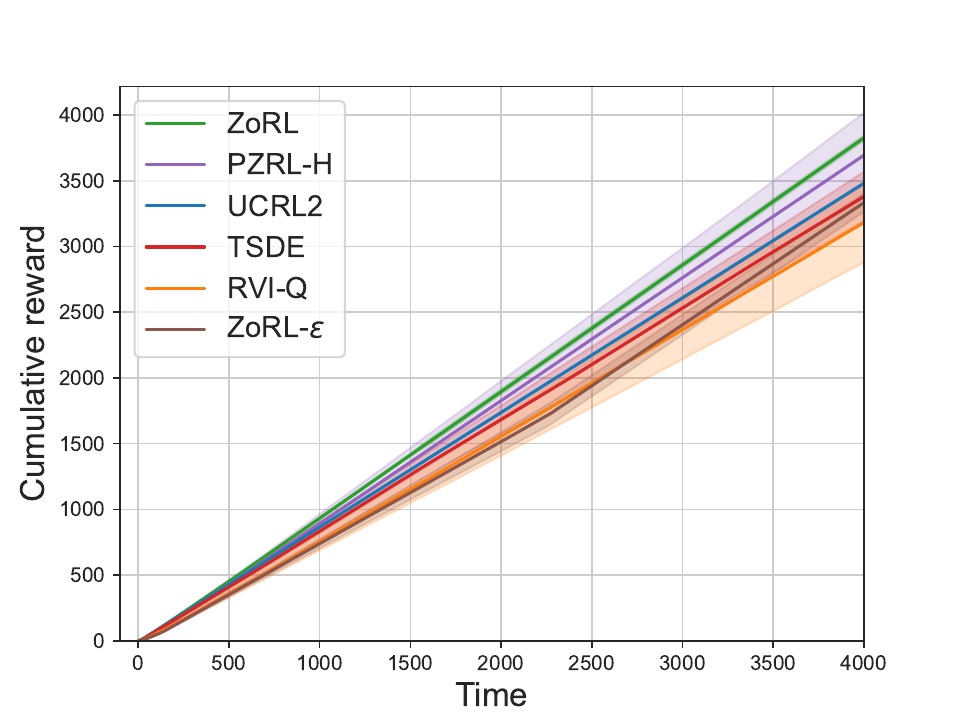} 
        \caption{Continuous RiverSwim}
        \label{fig:rm_rew}
    \end{subfigure}
    \begin{subfigure}[b]{0.49\linewidth}
        \centering
        \includegraphics[width=\textwidth]{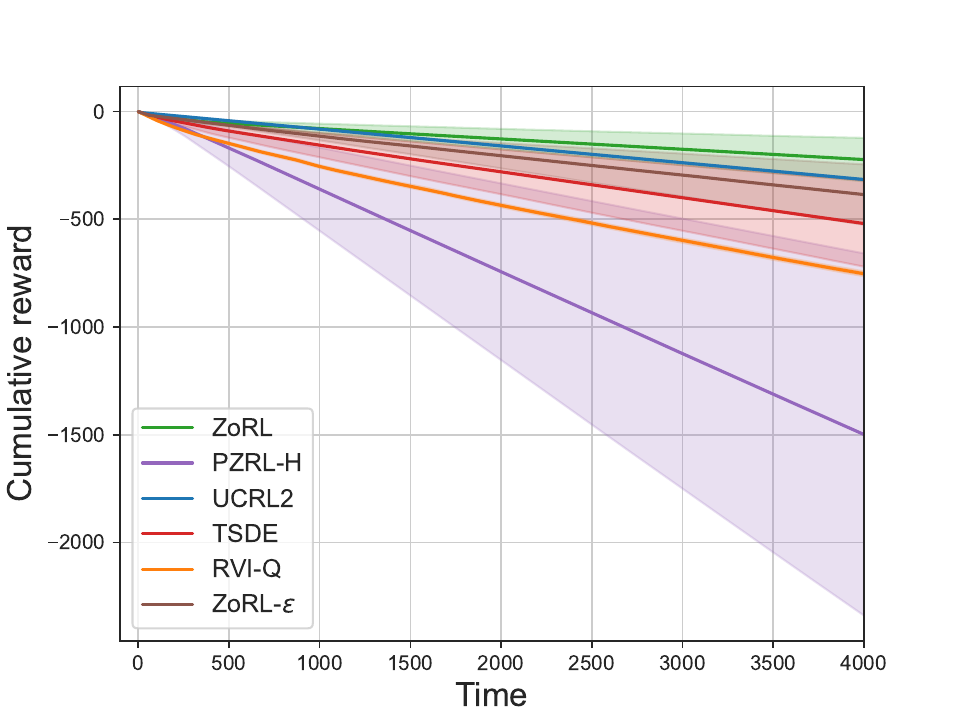} 
        \caption{Truncated LQ-$1$}
        \label{fig:ls1_rew}
    \end{subfigure}
    \begin{subfigure}[b]{0.49\linewidth}
        \centering
        \includegraphics[width=\textwidth]{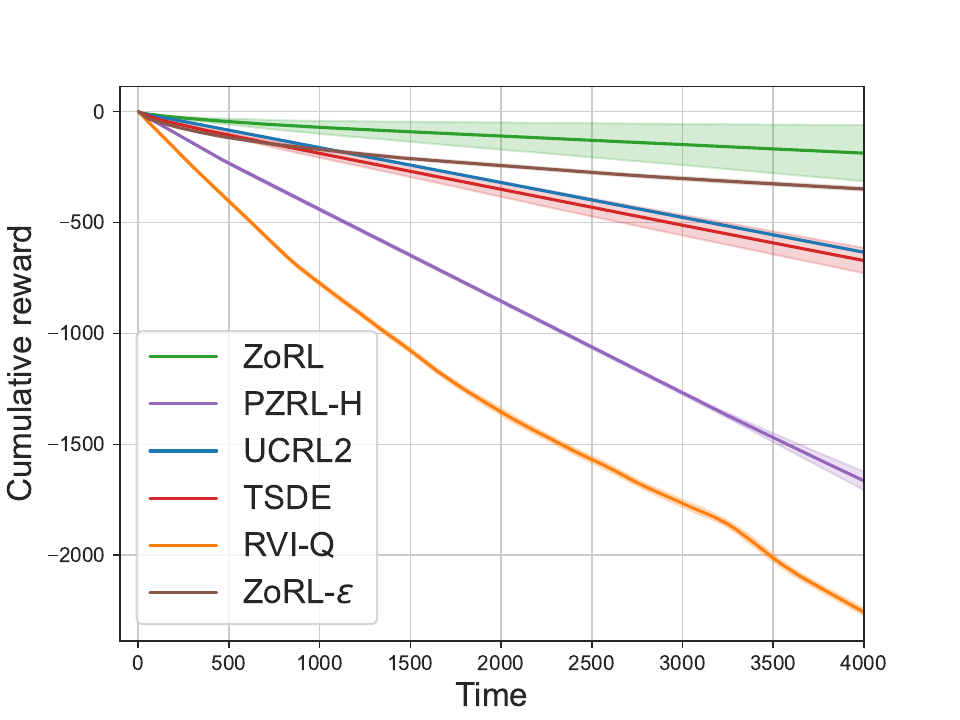}
         \caption{Truncated LQ-$2$}
        \label{fig:ls2_rew}
    \end{subfigure}
    \begin{subfigure}[b]{0.49\linewidth}
        \centering
        \includegraphics[width=\textwidth]{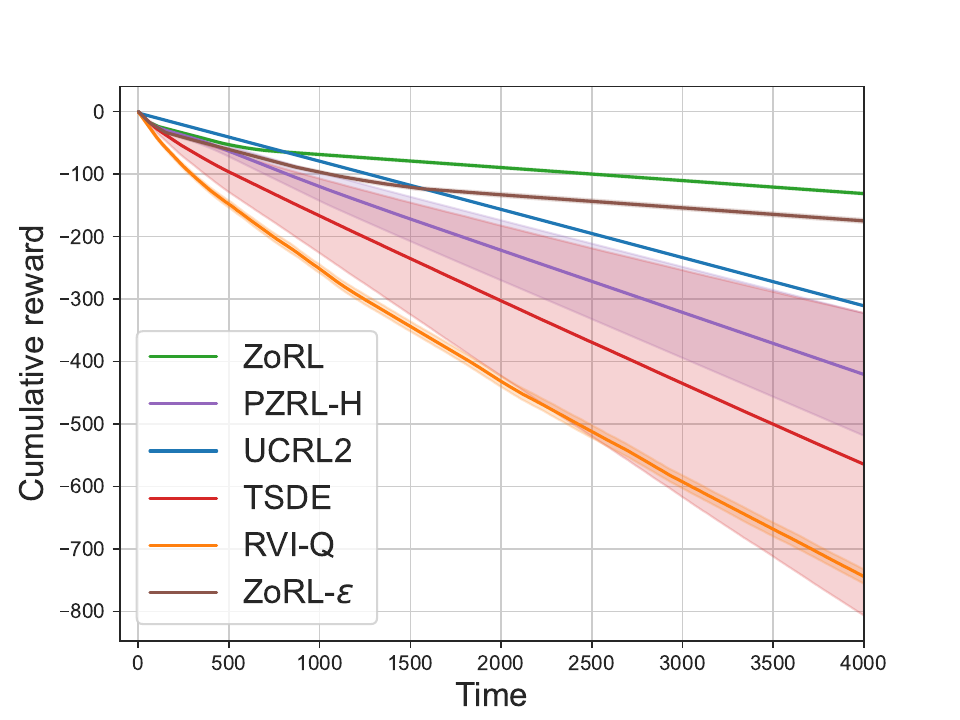}
         \caption{Non-linear System}
        \label{fig:nls_rew}
    \end{subfigure}
    \caption{Cumulative Reward Plots.}
    \label{fig:perf}
\end{figure}
\vspace{-5pt}
\section{Conclusion}\label{sec:conc}
We propose a computationally efficient algorithm for average reward RL for Lipschitz MDPs in continuous spaces, and show that it is truly adaptive, i.e. it achieves a regret of $\ctO\big(T^{1 - \deff\inv}\big)$, where $\deff = 2 d_\cS + d_z + 3$.~The zooming dimension $d_z$ is a problem-dependent quantity, measures the size of near-optimal state-action pairs and is bounded above by $d$, the dimension of the state-action space.~Simulation experiments support the theoretical findings.~\algo~overperforms the popular fixed discretization-based algorithms as well as adaptive discretization-based algorithms.

\subsubsection*{Acknowledgements}
This work is partially supported by the SERB Grant SRG/2021/002308. The authors acknowledge the Prime Minister's Research Fellowship to Avik Kar.

\bibliography{refs}

\newpage

\onecolumn

\title{Adaptive Discretization-based Non-Episodic Reinforcement Learning in Metric Spaces\\(Supplementary Material)}
\maketitle

\appendix
\textbf{Organization of the Appendix.}~Some properties of the MDPs that satisfy Assumption~\ref{assum:unif_ergodic} are discussed in Appendix~\ref{app:gen_res}. It also includes the proof of Lemma~\ref{lem:gap_phi}. Some important properties of extended MDPs can be found in Appendix~\ref{app:prop_emdp}. We use these properties while analyzing the regret of \algo. Next, in Appendix~\ref{app:prop_pdiam}, we show certain properties of the proxy diameters of policies. Results obtained in Appendix~\ref{app:prop_emdp} play a crucial role in deriving those properties. A high probability lower bound on the number of visits to the key cells in each episode is derived in Appendix~\ref{app:visits}.~In Appendix~\ref{app:regret}, we derive the desired regret bound.~Appendix~\ref{app:conc_ineq} covers the concentration results for estimates of the discretized model.~In Appendix~\ref{app:prop_evi_epe}, we derive bounds on inaccuracy that \evi~and \epe~injects into \algo~due to finite computation power.~Details of the experiments, the associated environments and additional simulation results are reported in Appendix~\ref{app:sim}.~Appendix~\ref{app:aux_res} derives some key results that are used in the proof of Lemma~\ref{lem:conc_ineq}.~Appendix~\ref{app:use_res} contains some known results that are used in this paper.

\section{General Results for MDPs}\label{app:gen_res}
Consider an MDP $\cM = (\cS, \cA, p, r)$ and a policy $\phi \in \Phi_{SD}$ that maps states in $\cS$ to actions in $\cA$.~We assume that the transition kernel $p$ satisfies Assumption~\ref{assum:unif_ergodic}.~Hence, there exists a unique invariant distribution $\mu\uc{\infty}_{\phi,p}$ for the controlled Markov process~(CMP) induced by the transition kernel $p$ under the application of policy $\phi$.~Under Assumption~\ref{assum:unif_ergodic}, there exists a solution to the following Poisson equation~\citep{hernandez2012further}:
\begin{align}
    J + h(s) = r(s,\phi(s)) + \int_{\cS}{h(s\up) p(s,\phi(s),ds\up)},~\forall s \in \cS.\label{eq:pois}
\end{align}
Specifically, $(J_\cM(\phi), h^\phi_\cM) \in \bR \times \bR^\cS$ satisfies \eqref{eq:pois}, where
\begin{align}
    J_\cM(\phi) &= \underset{T\to\infty}{\lim\inf}{\frac{1}{T} \bE\sqbr{\sum_{t=0}^{T-1}{r(s_t,\phi(s_t))} \mid s_0=s}} = \int_{\cS}{r(s,\phi(s)) \mu\uc{\infty}_{\phi,p}(ds)}, \label{def:Jphi}\\
   \mbox{ and } h^\phi_\cM(s\up) &= \sum_{t=0}^{\infty}{\int_{\cS}{r(s,\phi(s)) (\mu\uc{\infty}_{\phi,p} - \mu\uc{t}_{\phi,p,s\up})(ds)}},~\forall s\up \in \cS. \label{def:hphi}
\end{align}
Recall that $\mu\uc{t}_{\phi,p,s}$ denotes the distribution of $s_t$ when initial state is $s_0 = s$, where $\{s_t\}_t$ is the CMP induced by the transition kernel $p$ under the application of $\phi$.~$h^\phi_\cM$ is called the relative value function of $\phi$.

The following is popularly known as the average reward optimality equation~(AROE),
\begin{align*}
    &J + h(s) = \max_{a \in \cA}{\flbr{r(s,a) + \int_{\cS}{h(s\up) p(s, a, s\up) ds\up}}}, \mbox{ and}\\
    &h(s\lst) = 0,
\end{align*}
where $s\lst \in \cS$ is a designated state.~\citet{hernandez2012adaptive} shows that under Assumption~\ref{assum:unif_ergodic}, AROE has a solution.~A policy $\phi\ust$ is optimal if it satisfies the following,
\begin{align}
    \phi\ust(s) \in \underset{a \in \cA}{\arg\max}{\flbr{r(s,a) + \int_{\cS}{h^{\phi\ust}_\cM(s\up)p(s,a,s\up) d s\up}}},~\forall s \in \cS.
\end{align}
In that case, $J\ust_\cM = J_\cM(\phi\ust)$ and $h_\cM = h^{\phi\ust}_\cM$ solve AROE.

Denote the $t$-stage transition kernel under the application of policy $\phi$ by $p\uc{t}_\phi$, i.e.,
\begin{align}
    p\uc{t}_\phi(s,B) = \bP(s_{\tau+t} \in B \mid s_\tau = s, a_{t\up} = \phi(s_{t\up}), t\up = \tau, \tau + 1, \ldots, \tau+t-1),~t\in\bN,s \in \cS, B \in \cB_\cS,\tau \in \bN. \label{def:p_tstage}
\end{align}
Our next result shows that when $t$ is sufficiently large, then Assumption~\ref{assum:unif_ergodic} is equivalent to saying that $p\uc{t}_\phi$ has the ``contractive property,''~\eqref{def:contractive}. 

\begin{lemma}\label{lem:pn_contra}
    Consider an MDP $\cM = (\cS, \cA, p, r)$ such that $p$ satisfies Assumption~\ref{assum:unif_ergodic}. Then, for every policy $\phi \in \Phi_{SD}$ we have,
    \begin{align}
        \norm{p\uc{i}_\phi(s, \cdot) - p\uc{i}_\phi(s\up,\cdot)}_{TV} \leq 2 \alpha,~\forall s, s\up \in \cS, i \geq m\ust, \label{def:contractive}
    \end{align}
    where $p\uc{i}_\phi$ is the $i$-stage transition probability of the CMP induced by the transition kernel $p$ under the application of policy $\phi$ as defined in~\eqref{def:p_tstage}, and 
    \begin{align}
        m\ust := \ceil{\log_\frac{1}{\alpha}{(C)}} + 1. \label{def:mstar}
    \end{align}
    Conversely, if
    \begin{align*}
        \norm{p\uc{m}_\phi(s, \cdot) - p\uc{m}_\phi(s\up,\cdot)}_{TV} \leq 2 \alpha\up,~\forall s, s\up \in \cS,
    \end{align*}
    for some $m \in \bN$, then Assumption~\ref{assum:unif_ergodic} holds with $C = \frac{2}{\alpha\up}$ and $\alpha = {\alpha\up}^{\frac{1}{m}}$.
\end{lemma}
\begin{proof}
    We first note that $p\uc{i}_\phi(s, \cdot) = \mu\uc{i}_{\phi,p,s}$ for every $s \in \cS$.~Hence, for any $s, s\up \in \cS$,
    \begin{align*}
        \norm{p\uc{i}_\phi(s, \cdot) - p\uc{i}_\phi(s\up, \cdot)}_{TV} \leq \norm{\mu\uc{i}_{\phi,p,s} - \mu\uc{\infty}_{\phi,p}}_{TV} + \norm{\mu\uc{i}_{\phi,p,s\up} - \mu\uc{\infty}_{\phi,p}}_{TV}.
    \end{align*}
    Also, $C \alpha^i \leq \alpha$ for $i \geq \log_\frac{1}{\alpha}{(C)} + 1$.~Now, using Assumption~\ref{assum:unif_ergodic}, we have that when $i \geq m\ust$, then the following holds,
    \begin{align*}
        \norm{p\uc{i}_\phi(s, \cdot) - p\uc{i}_\phi(s\up, \cdot)}_{TV} &\leq \norm{\mu\uc{i}_{\phi,p,s} - \mu\uc{\infty}_{\phi,p}}_{TV} + \norm{\mu\uc{i}_{\phi,p,s\up} - \mu\uc{\infty}_{\phi,p}}_{TV}\\
        &\leq 2 \alpha.
    \end{align*}
    This concludes the proof of the first claim.
    
    Now, we prove the second claim. Consider the CMP that is described by the transition kernel $p$ and evolves under the application of the policy $\phi$. Consider two copies of this CMP, where these copies differ in the distribution of the initial state. Denote these distributions by $\mu\uc{0}_1$ and $\mu\uc{0}_2$. Denote the distributions of $s_i$ in the corresponding processes by $\mu\uc{i}_1$ and $\mu\uc{i}_2$, respectively. We show the following:
    \begin{align}\label{ineq:geom_close}
        \norm{\mu\uc{i}_{1} - \mu\uc{i}_{2}}_{TV} \leq \tilde{C} \cdot \tilde{\alpha}^i \norm{\mu\uc{0}_{1} - \mu\uc{0}_{2}}_{TV},~\forall i \in \bN,
    \end{align}
    where $\tilde{C} = \frac{1}{\alpha\up}$ and $\tilde{\alpha} = {\alpha\up}^{\frac{1}{m}}$.~The claim then follows by letting $\mu\uc{0}_{1} = \delta_s$ and $\mu\uc{0}_{2} = \mu\uc{\infty}_{\phi,p}$.~Note that,
    \begin{align}
        \norm{\mu\uc{m}_{1} - \mu\uc{m}_{2}}_{TV} &= 2~ \sup_{A \subseteq \cS}{\flbr{(\mu\uc{m}_{1} - \mu\uc{m}_{2})(A)}} \notag\\
        &= 2~\sup_{A \subseteq \cS}{\flbr{\int_{\cS}{p\uc{m}_\phi(s,A) ~d(\mu\uc{0}_{1} - \mu\uc{0}_{2})(s)}}} \notag\\
        &\leq \sup_{\substack{A \subseteq \cS \\ s,s\up \in \cS}}{\flbr{p\uc{m}_\phi(s,A) - p\uc{m}_\phi(s\up,A)}} \norm{\mu\uc{0}_{1} - \mu\uc{0}_{2}}_{TV} \notag\\
        &\leq \alpha\up \norm{\mu\uc{0}_{1} - \mu\uc{0}_{2}}_{TV}. \label{bdd:mu1m-mu2m}
    \end{align}
    Also, note that for any $i \in \bN$,
    \begin{align}
        \norm{\mu\uc{i}_{1} - \mu\uc{i}_{2}}_{TV} &= 2~ \sup_{A \subseteq \cS}{\flbr{(\mu\uc{i}_{1} - \mu\uc{i}_{2})(A)}} \notag\\
        &= 2~\sup_{A \subseteq \cS}{\flbr{\int_{\cS}{p(s,\phi(s),A) ~d(\mu\uc{i-1}_{1} - \mu\uc{i-1}_{2})(s)}}} \notag\\
        &\leq \sup_{\substack{A \subseteq \cS \\ s,s\up \in \cS}}{\flbr{p(s,\phi(s),A) - p(s\up,\phi(s\up),A)}} \norm{\mu\uc{i-1}_{1} - \mu\uc{i-1}_{2}}_{TV} \notag\\
        &\leq \norm{\mu\uc{i-1}_{1} - \mu\uc{i-1}_{2}}_{TV}, \label{bdd:mu1t-mu2t}
    \end{align}
    where the first step follows from the definition of the total variation norm, while the third step follows from Lemma~\ref{lem:bdd_dotdifLv}.~Combining \eqref{bdd:mu1m-mu2m} and \eqref{bdd:mu1t-mu2t}, we can write
    \begin{align*}
        \norm{\mu\uc{i}_{1} - \mu\uc{i}_{2}}_{TV} &\leq {\alpha\up}^{\floor{\frac{i}{m}}} \norm{\mu\uc{0}_{1} - \mu\uc{0}_{2}}_{TV}\\
        &\leq \frac{1}{\alpha\up}\br{{\alpha\up}^{\frac{1}{m}}}^i \norm{\mu\uc{0}_{1} - \mu\uc{0}_{2}}_{TV},~\forall i \in \bN.
    \end{align*}
    This concludes the proof of the lemma.
\end{proof}

Consider two CMPs $\{s_{1,i}\}$ and $\{s_{2,i}\}$, both of which are induced by $\phi$ operating on the MDP $\cM$ that has transition kernel $p$. Their initial state distributions are $\mu\uc{0}_1$ and $\mu\uc{0}_2$ respectively. Next, we derive an upper-bound on the cumulative sum of distances of the distributions of $s_{1,i}$ and $s_{2,i}$.
\begin{lemma}\label{lem:sum_tv_dist}
    Consider an MDP $\cM = (\cS, \cA, p, r)$ that satisfies Assumption~\ref{assum:unif_ergodic}, and a policy $\phi \in \Phi_{SD}$.~Let $\{s_{1,i}\}$ and $\{s_{2,i}\}$ be two CMPs induced by $\phi$ when it is applied to $\cM$. Let $\mu\uc{i}_1$ and $\mu\uc{i}_2$ denote the distributions of $s_{1,i}$ and $s_{2,i}$, respectively.~Then,
    \begin{align*}
        \sum_{i=0}^{\infty}{\norm{\mu\uc{i}_1 - \mu\uc{i}_2}_{TV}} \leq \frac{m\ust}{1 - \alpha} \norm{\mu\uc{0}_{1} - \mu\uc{0}_{2}}_{TV},
    \end{align*}
    where $m\ust$ is as defined in~\eqref{def:mstar}.
\end{lemma}
\begin{proof}
    From Lemma~\ref{lem:pn_contra}, we have that,
    \begin{align}\label{ineq:pn_contra}
        \norm{\mu\uc{i}_{1} - \mu\uc{i}_{2}}_{TV} \leq \alpha \norm{\mu\uc{0}_{1} - \mu\uc{0}_{2}}_{TV}, \mbox{ for } i \geq m\ust.
    \end{align}
    Also, for any $i \in \bN$ we have,
    \begin{align*}
        \norm{\mu\uc{i}_{1} - \mu\uc{i}_{2}}_{TV} &= 2~ \sup_{A \subseteq \cS}{\flbr{(\mu\uc{i}_{1} - \mu\uc{i}_{2})(A)}} \\
        &= 2~\sup_{A \subseteq \cS}{\flbr{\int_{\cS}{p(s,\phi(s),A) ~d(\mu\uc{i-1}_{1} - \mu\uc{i-1}_{2})(s)}}}\\
        &\leq \sup_{\substack{A \subseteq \cS \\ s,s\up \in \cS}}{\flbr{p(s,\phi(s),A) - p(s\up,\phi(s\up),A)}} \norm{\mu\uc{i-1}_{1} - \mu\uc{i-1}_{2}}_{TV} \\
        &\leq \norm{\mu\uc{i-1}_{1} - \mu\uc{i-1}_{2}}_{TV},
    \end{align*}
    where the first step follows from the definition of the total variation norm, and the third step follows from Lemma~\ref{lem:bdd_dotdifLv}.~Hence, 
    \begin{align}\label{ineq:non_expan}
        \norm{\mu\uc{i}_{1} - \mu\uc{i}_{2}}_{TV} \leq \norm{\mu\uc{0}_{1} - \mu\uc{0}_{2}}_{TV},~\forall i \in \bN
    \end{align}
    Using \eqref{ineq:pn_contra} iteratively, and~\eqref{ineq:non_expan}, we can write,
    \begin{align*}
        \sum_{t=0}^{\infty}{\norm{\mu\uc{i}_1 - \mu\uc{i}_2}_{TV}} &= \sum_{m=0}^{m\ust-1}{\sum_{i=0}^{\infty}{\norm{\mu\uc{m+i\cdot m\ust}_{1} - \mu\uc{m+i\cdot m\ust}_{2}}_{TV}}}\\
        &\leq \frac{m\ust}{1 - \alpha} \norm{\mu\uc{0}_{1} - \mu\uc{0}_{2}}_{TV}.
    \end{align*}
    where $m\ust = \ceil{\log_\frac{1}{\alpha}{(C)}} + 1$.~This concludes the proof.
\end{proof}
We now derive an upper-bound on the span of the relative value function $h^\phi_\cM$
~\eqref{def:hphi} associated with a policy $\phi\in\Phi_{SD}$.
\begin{lemma}[Bound on the span of relative value function]\label{lem:bdd_rvf_spn}
    Consider an MDP $\cM = (\cS, \cA, p, r)$ such that $p$ satisfies Assumption~\ref{assum:unif_ergodic}. For any policy $\phi \in \Phi_{SD}$, the span of the corresponding relative value function $h^\phi_\cM$~\eqref{def:hphi} can be bounded as,
    \al{
    \spn{h^\phi_\cM} \le \frac{m\ust\spn{r}}{1 - \alpha},
    }
    where $m\ust$ is as defined in~\eqref{def:mstar}.
\end{lemma}
\begin{proof}
    From the definition of $h^\phi_\cM$~\eqref{def:hphi} we obtain,
    \begin{align}
        \spn{h^\phi_\cM} &= \spn{\sum_{t=0}^{\infty}{\int_{\cS}{r(s,\phi(s))\br{\mu\uc{\infty}_{\phi,p} - \mu\uc{t}_{\phi,p,\cdot}}(ds)}}} \notag\\
        & \leq \sum_{t=0}^{\infty}{\spn{\int_{\cS}{r(s,\phi(s))\br{\mu\uc{\infty}_{\phi,p} - \mu\uc{t}_{\phi,p,\cdot}}(ds)}}} \notag\\
        & \leq \frac{1}{2} \sum_{t=0}^{\infty}{\max_{s}\norm{\mu\uc{\infty}_{\phi,p} - \mu\uc{t}_{\phi,p,s}}_{TV}} \spn{r}, \label{lem:bd_spn:ineq:1}
    \end{align}
    where the first inequality follows since span is a seminorm~\citep{puterman2014markov}, while the second inequality follows from Lemma~\ref{lem:bdd_dotdifLv}.~In Lemma~\ref{lem:sum_tv_dist} we let $\mu\uc{0}_1 = \mu\uc{\infty}_{\phi,p}$ and $\mu\uc{0}_2 = \delta_s$, where $\delta_s$ is the Dirac measure on $(\cS,\cB_\cS)$ centered at $s$, and get the following, 
    \begin{align*}
        \frac{1}{2} \sum_{t=0}^{\infty}{\max_{s}\norm{\mu\uc{\infty}_{\phi,p} - \mu\uc{t}_{\phi,p,s}}_{TV}} \spn{r} &\leq \frac{m\ust\spn{r}}{1 - \alpha}.
    \end{align*}
    This concludes the proof.
\end{proof}

\begin{lemma}[Bound on the span of policy evaluation iterates]\label{lem:bdd_pval_spn}
    Consider an MDP $\cM = (\cS, \cA, p, r)$ such that $p$ satisfies Assumption~\ref{assum:unif_ergodic}, and consider the policy evaluation algorithm applied to obtain the average reward of a policy $\phi \in \Phi_{SD}$ on $\cM$ i.e.,
    \begin{align}
        V^\phi_0(s) &= 0,\notag\\
        V^\phi_{i+1}(s) &= r(s,\phi(s)) + \int_{\cS}{p(s,\phi(s),s\up) V^\phi_i(s\up) ds\up},~i=1,2,\ldots.\label{def:epe_true}
    \end{align}
    We have,
    \al{
    \spn{V^\phi_i} \leq \frac{m\ust + 1}{1 - \alpha},
    }
where $m\ust = \ceil{\log_{\frac{1}{\alpha}}(C)} + 1$.
\end{lemma}
\begin{proof}
    Since Assumption~\ref{assum:unif_ergodic} holds, Lemma~\ref{lem:pn_contra} gives us the following,
    \begin{align*}
        \norm{p\uc{m\ust}_\phi(s, \cdot) - p\uc{m\ust}_\phi(s\up,\cdot)}_{TV} \leq 2 \alpha,~\forall s, s\up \in \cS,
    \end{align*}
    where $p\uc{m}_\phi$~\eqref{def:p_tstage} is the $m$-step transition kernel of the CMP induced by the transtion kernel $p$ under the application of policy $\phi$.~Also, note that
    \begin{align*}
        V^\phi_{i+m\ust}(s) = \sum_{j=0}^{m\ust}{\bE\sqbr{r(s_{i+j}, \phi(s_{i+j})) \mid s_i = s}} + \int_{\cS}{p\uc{m\ust}_\phi(s,s\up) V^\phi_i(s\up) ds\up}.
    \end{align*}
    Hence,
    \begin{align*}
        \spn{V^\phi_{i+m\ust}} &\leq \spn{\sum_{j=0}^{m\ust}{\bE\sqbr{r(s_{i+j}, \phi(s_{i+j})) \mid s_i = s}}} + \spn{\int_{\cS}{p\uc{m\ust}_\phi(s,s\up) V^\phi_i(s\up) ds\up}}\\
        &\leq m\ust + 1 + \frac{1}{2} \spn{V^\phi_i}  \norm{p\uc{m\ust}_\phi(s, \cdot) - p\uc{m\ust}_\phi(s\up,\cdot)}_{TV} \\
        &\leq m\ust + 1 + \alpha \spn{V^\phi_i},
    \end{align*}
    where the second inequality follows from Lemma~\ref{lem:bdd_dotdifLv}.~Using the above inequality, we have that for every $k \leq m\ust$,
    \begin{align*}
        \spn{V^\phi_{i\cdot m\ust + k}} &\leq (m\ust + 1) \sum_{j=0}^{i-1}{\alpha^j} + \alpha^i \spn{V^\phi_{k}} \\
        &\leq (m\ust + 1) \sum_{j=0}^{i-1}{\alpha^j} + m\ust \alpha^i \\
        &\leq \frac{m\ust + 1}{1 - \alpha}.
    \end{align*}
    This concludes the proof.
\end{proof}

\subsection{Proof of Lemma~\ref{lem:gap_phi}}
\begin{proof}
    Using the definition of $\gap{s,\phi(s)}$~\eqref{def:subgap}, we obtain that,
    \al{
        \int_{\cS}{\gap{s,\phi(s)}~ \mu\uc{\infty}_{\phi,p}(s)~ ds} &= \int_{\cS}{\br{J\ust_{\cM} + h_{\cM}(s) - r(s,\phi(s)) - \int_{\cS}{h_{\cM}(s\up)~p(s,\phi(s),s\up)~ ds\up}} \mu\uc{\infty}_{\phi,p}(s)~ ds} \notag\\
        &= J\ust_{\cM}\int_{\cS}{\mu\uc{\infty}_{\phi,p}(s)~ ds} + \int_{\cS}{h_{\cM}(s) \mu\uc{\infty}_{\phi,p}(s)~ ds} - \int_{\cS}{r(s,\phi(s)) \mu\uc{\infty}_{\phi,p}(s)~ ds} \notag\\
        &- \int_{\cS}{\br{\int_{\cS}{h_{\cM}(s\up)~p(s,\phi(s),s\up)~ ds\up}} \mu\uc{\infty}_{\phi,p}(s)~ ds} \notag\\
        &= J\ust_{\cM} + \int_{\cS}{h_{\cM}(s) \mu\uc{\infty}_{\phi,p}(s)~ ds} - J_\cM(\phi) \notag\\
        &- \int_{\cS}{h_{\cM}(s\up) \br{\int_{\cS}{p(s,\phi(s),s\up) \mu\uc{\infty}_{\phi,p}(s)~ds}}ds\up} \notag\\
        &= J\ust_{\cM} - J_\cM(\phi) + \int_{\cS}{h_{\cM}(s) \mu\uc{\infty}_{\phi,p}(s)~ ds} - \int_{\cS}{h_{\cM}(s) \mu\uc{\infty}_{\phi,p}(s)~ ds}\notag\\
        &= \Delta(\phi),
    }
    where the third equality follows from \eqref{def:Jphi} and the fourth equality follows from the property of the stationary distribution. This concludes the proof.
\end{proof}
\section{Properties of Extended MDP}\label{app:prop_emdp}
We present three results in this section.~We begin by showing that extended MDPs constructed by \algo~are optimistic, i.e., on the set $\cG_1$~\eqref{def:G_1}, the optimal average reward of the extended MDP $\cM^+_t$ is greater than or equal to the optimal average reward of the true MDP for all $t \in \{0,1,\ldots,T-1\}$.~Next, we show that the span of the \epe~iterates~\eqref{iter:v_epe} for the extended MDP $\cM^+_t$ and any $\phi \in \Phi_t$ are bounded for all $t \in \{0,1,\ldots,T-1\}$.~Lastly, we derive an upper-bound on the average reward of policy $\phi \in \Phi_t$ evaluated on MDP $\cM^+_t$ for every $t \in \{0,1,\ldots,T-1\}$.
\begin{lemma}[Optimism]\label{lem:optimism}
    On the set $\cG_1$, we have,
    \al{
        J\ust_{\cM^+_t} \geq J\ust_\cM, \mbox{ for every } t \in \{0,1,\ldots, T-1\},\label{ineq:optimism}
    }
    where $J\ust_{\cM^+_t}$ is the optimal average reward of the extended MDP~$\cM^+_t$, and $J\ust_\cM$ is the optimal average reward of the MDP $\cM$.
\end{lemma}
\begin{proof}
    Consider the value iteration algorithm applied to the MDP $\cM$. For every $s \in \cS$,
    \begin{align}
        V_0(s) &= 0, \notag\\
        V_{n+1}(s) &=  \max_{a \in \cA}\Big\{r(s,a) + \int_{\cS}{p(s, a, s\up) V_n(s\up) d s\up}\Big\},~\forall n \in \bN. \label{iter:vi}
    \end{align}
    We assumed that $\cM$ is uniformly ergodic in Assumption~\ref{assum:unif_ergodic}, and hence the following value iteration algorithm converges, i.e., $\lim_{n \to \infty}{\spn{V_{n+1} - V_n}} = J\ust_\cM$.~Also, it follows from~\citep{hernandez2012adaptive} that $\lim_{n \to \infty} |V_n(s) - (n J\ust_\cM + h_\cM(s)) |=0$ for every $s \in \cS$.~Since we have shown in Lemma~\ref{lem:bdd_rvf_spn} that $h_\cM$ is bounded, it then follows that 
    \al{
    \lim_{n \to \infty}{\frac{1}{n}V_n(s)} = J\ust_\cM,~\forall s\in \cS. \label{eq:V/n=j}
    }
    
    We will prove that $V_n(s\up) \leq v_n(s)$ for every $n \in \bN$, $s \in \cS_t$ and $s\up \in q\inv(s)$. We prove this via induction. The base case, i.e. $n=0$ is seen to hold trivially. Next, assume that the following hold for all $i \in [n]$, where $n \in \bN$,
    \begin{align}\label{ineq:opt_ind_hyp}
        v_i(s) &\geq V_i(s\up),~\forall s \in \cS_t,~\forall s\up \in q\inv(s).
    \end{align}
    Consider a state-action pair $(s,a) \in \cS \times \cA$ and let $\ts \in \cS_t$ such that $s \in q\inv(\ts)$.~Then,
    \begingroup
    \allowdisplaybreaks
    \begin{align}
        r(s,a) + \int_{\cS}{p(s,a,s\up) V_n(s\up) ds\up} &\leq r(s,a) + \sum_{s\up \in \cS_t}{\wp_{\cS \times \cA \to \cS_t,p}(s,a,s\up) v_n(s\up)} \notag\\
        &\leq r(q(\zeta)) + L_r \diamc{\zeta} + \sum_{s\up \in \cS_t}{\wp_{\cS \times \cA \to \cS_t,p}(s,a,s\up) v_n(s\up)} \notag\\
        & \leq \max_{\substack{\ta \in A_t(\bar{s})\\ \te \in \cC_t}}{\flbr{\tilde{r}_t(\ts,\ta) + \sum_{s\up \in \cS_t}{\te(\ts, \ta, s\up) v_n(s\up)}}} \notag\\
        &= v_{n+1}(\ts),\label{ineq:opt1}
    \end{align}
    \endgroup
    where the first inequality follows from~\eqref{ineq:opt_ind_hyp}, the second inequality follows from Assumption~\ref{assum:lip}~(i), while the third inequality follows from the definition of the set $\cG_1$.~Since we have shown the above inequality for an arbitrary action $a$, we get,
    \begin{align}
        V_{n+1}(s) &= \max_{a \in \cA}{\flbr{r(s,a) + \int_{\cS}{p(s,a,s\up) V_n(s\up) ds\up}}} \notag\\
        &\leq v_{n+1}(\ts).
    \end{align}
    This completes the induction argument. The proof is then completed by dividing both sides of this inequality by $n$ and then taking limit $n \to \infty$.
\end{proof}

\begin{lemma}\label{lem:bd_span_epe}
    Let $t \in \{0,1,\ldots,T-1\}$.~Consider the extended MDP $\cM^{+}_t$, a policy $\phi \in \Phi_t$ and the corresponding \epe~\eqref{algo:epe}~iterates:
    \begin{align}
        v^{\phi,t}_0(s) &= 0, \notag\\
        v^{\phi,t}_{n+1}(s) &= \max_{\te \in \cC_t} \flbr{\tilde{r}_t(s,\phi(s)) + \sum_{s\up \in \cS_t}{\te(s,\phi(s),s\up) v^{\phi,t}_n(s\up)}},~\forall s \in \cS_t, n \in \bN. \label{iter:v_epe}
    \end{align}
    On the set $\cG_1$, we have 
    \nal{
    \spn{v^{\phi,t}_n} \leq C_v,~\forall n \in \bN, t \in \bN,
    }
    where,
    \begin{align}
        C_v &:= \max{\flbr{\frac{\ovl{m} (\ovl{m} + 5)}{2} + \frac{3}{C \alpha^{\ovl{m}+1}} + \frac{4 \tilde{m}}{1 - \alpha}, \frac{\ceil{\log_{\br{\frac{1}{\alpha}}^{\tilde{m}\inv}}{\br{\frac{2}{\alpha}}} + 1}}{1 - \alpha^{\tilde{m}\inv}}}},\label{def:Cv}\\
        \ovl{m} &:= \ceil{\log_{\frac{1}{\alpha}}{\br{\frac{2C}{\kappa} \br{\frac{C_\eta \tilde{m} \sqrt{d}}{1-\alpha}}^{d_\cS}}}},\label{def:m_bar} \mbox{ and } \\
        \tilde{m} &:= \ceil{\log_{\frac{1}{\alpha}}\br{\frac{2C}{3\alpha - 1}}}. \label{def:tm}
    \end{align}
    $C$ and $\alpha$ are as in Assumption~\ref{assum:unif_ergodic}.
\end{lemma}
\begin{proof}
    We first note that $v^{\phi,t}_n(s)$ is the optimal value of the expected reward for the extended MDP $\cM_t^+$ that is accumulated during the first $n$ steps when the process starts in state $s$. The first component of the extended action of the extended MDP is taken to be policy $\phi$ and doesn't need to be optimized, while the second component is the transition kernel that maximizes the r.h.s. of \eqref{iter:v_epe} in every step $i \in \{0,1,\ldots,n-1\}$.~We consider the following two cases separately. 

    \textbf{Case 1:} When,
    \al{
    \max_{s \in \cS_t}\diamc{q_t\inv(s,\phi(s))}\geq \frac{1 - \alpha}{2 (3(1 + L_p) + C_p) \br{\tilde{m}+1}}. \label{cond:1}
    }
    Let $\zeta$ be the cell with the largest diameter from the set $\{q_t\inv(s,\phi(s)):~s\in \cS_t\}$.~We first show that $\{s_i\}_{i=0}^{\infty}$, the CMP induced by the transition kernel $p$ under the application of policy $\phi$, hits $\pi_\cS(\zeta)$ within 
    \begin{align*}
        \frac{\ovl{m} (\ovl{m} + 5)}{2} + \frac{3}{C \alpha^{\ovl{m}+1}}
    \end{align*}
    steps in expectation, where $\ovl{m}$ is as defined in~\eqref{def:m_bar}.~From Assumption~\ref{assum:unif_ergodic}, Assumption~\ref{assum:statn_dist} and~\eqref{cond:1}, we have that for any $s\up \in \cS$,
    \begin{align*}
        \mu\uc{i}_{\phi,p,s\up}(\pi_\cS(\zeta)) \geq \frac{1}{2}\mu\uc{\infty}_{\phi,p}(\pi_\cS(\zeta)), \mbox{ and } \mu\uc{i}_{\phi,p,s\up}(\pi_\cS(\zeta)) \leq \frac{3}{2}\mu\uc{\infty}_{\phi,p}(\pi_\cS(\zeta))~\forall i \geq \ovl{m}.
    \end{align*}
    Now, consider another process $\{x_i\}_{i=0}^{\infty}$ that is independent across time; $x_i$ assumes the value $1$ with a probability $\mu\uc{i}_{\phi,p,s\up}(\pi_\cS(\zeta))$, and $0$ with a probability $1 - \mu\uc{i}_{\phi,p,s\up}(\pi_\cS(\zeta))$.~Define the following random variables $T\uc{x}_{\{1\}}$ and $T\uc{s}_{\pi_\cS(\zeta),s\up}$,
    \begin{align*}
        T\uc{x}_{\{1\}} &:= \inf{\{i\geq 0 \mid x_i = 1\}}, \mbox{ and}\\
        T\uc{s}_{\pi_\cS(\zeta),s\up} &:= \inf{\{i\geq 0 \mid s_i \in \pi_\cS(\zeta), s_0 = s\up\}}.
    \end{align*}
    We note that the distributions of $T\uc{x}_{\{1\}}$ and $T\uc{s}_{\pi_\cS(\zeta),s\up}$ are identical, so that $\bE\sqbr{T\uc{x}_{\{1\}}} = \bE\sqbr{T\uc{s}_{\pi_\cS(\zeta),s\up}}$.~We derive an upper-bound on $\bE\sqbr{T\uc{x}_{\{1\}}}$, and this would also serve as the upper-bound on $\bE\sqbr{T\uc{s}_{\pi_\cS(\zeta),s\up}}$.~We have,
    \begin{align*}
        \bE\sqbr{T\uc{x}_{\{1\}}} &= \sum_{i=0}^{\infty}{i \cdot \mu\uc{i}_{\phi,p}(\pi_\cS(\zeta)) \prod_{j=0}^{i-1}{\br{1 - \mu\uc{j}_{\phi,p,s}(\pi_\cS(\zeta))}}} \\
        &\leq \frac{\ovl{m}(\ovl{m} -1)}{2} + \sum_{i=\ovl{m}}^{\infty}{\frac{3i}{2} \mu\uc{\infty}_{\phi,p}(\pi_\cS(\zeta)) \prod_{j=\ovl{m}}^{i-1}{\br{1 - \frac{1}{2}\mu\uc{\infty}_{\phi,p}(\pi_\cS(\zeta))}}} \\
        &\leq \frac{\ovl{m}(\ovl{m} -1)}{2} + \frac{3}{2}\mu\uc{\infty}_{\phi,p}(\pi_\cS(\zeta))\sum_{i=0}^{\infty}{i \br{1 - \frac{1}{2}\mu\uc{\infty}_{\phi,p}(\pi_\cS(\zeta))}}^i + \frac{3 \ovl{m}}{2}\mu\uc{\infty}_{\phi,p}(\pi_\cS(\zeta))\sum_{i=0}^{\infty}{\br{1 - \frac{1}{2}\mu\uc{\infty}_{\phi,p}(\pi_\cS(\zeta))}^i} \\
        &\leq \frac{\ovl{m} (\ovl{m} + 5)}{2} + \frac{6}{\mu\uc{\infty}_{\phi,p}(\pi_\cS(\zeta))}.
    \end{align*}
    Furthermore, from Assumption~\ref{assum:statn_dist}, and since $\bE\sqbr{T\uc{x}_{\{1\}}}=\bE\sqbr{T\uc{s}_{\pi_\cS(\zeta),s\up}}$, we get, 
    \begin{align*}
        \bE\sqbr{T\uc{s}_{\pi_\cS(\zeta),s\up}} \leq \frac{\ovl{m} (\ovl{m} + 5)}{2} + \frac{6}{\kappa} \br{\frac{\sqrt{d}}{\diamc{\zeta}}}^{d_\cS}.
    \end{align*}
    From \eqref{cond:1} we can write,
    \begin{align*}
        \bE\sqbr{T\uc{s}_{\pi_\cS(\zeta),s\up}} &\leq \frac{\ovl{m} (\ovl{m} + 5)}{2} + \frac{6}{\kappa} \br{\frac{(3(1 + L_p) + C_p) \sqrt{d} (\tilde{m} + 1)}{1 - \alpha}}^{d_\cS}  \\
        &\leq \frac{\ovl{m} (\ovl{m} + 5)}{2} + \frac{3}{C \alpha^{\ovl{m}+1}}.
    \end{align*}
    Next, consider two states $\ovl{s} \in \cS_t$, and $\tilde{s} \in q\inv(\ovl{s})$.~We note that on the set $\cG_1$, for the extended MDP $\cM_t^+$ whenever the state is $\ovl{s}$, there is an extended action such that the next state transition distribution is $p(\tilde{s},\phi(\tilde{s}),\cdot)$.~Hence, on the set $\cG_1$, there is a sequence of extended actions such that starting from any state, in expectation, within $\frac{\ovl{m} (\ovl{m} + 5)}{2} + \frac{3}{C \alpha^{\ovl{m}+1}}$ steps the process hits $q(\pi_\cS(\zeta))$ where $\pi_\cS(\zeta)$ is the $\cS$-projection of $\zeta$, the largest cell in $\{q_t\inv(s,\phi(s)):~s\in \cS_t\}$.
    
    Now, consider the process $\{s_t\}$ associated with the extended MDP, in which the initial state is $s \in \cS_t$.~We claim that for any state $s\up$, there exists a sequence of extended actions where the first components of the extended actions are chosen by $\phi$ such that $s\up$ can be reached in $\frac{2}{(3(1 + L_p) + C_p) ~\diamc{q_t\inv(s,\phi(s))}}$ steps in expectation.~This is true because there is a transition kernel in $\cC_t$ that assigns at least $\frac{3(1 + L_p) + C_p}{2} \diamc{q_t\inv(s,\phi(s))}$ transition probability to $s\up$ when the current state is from $s$.~To summarize, starting from any state using a sequence of actions the state process can reach $q(\zeta)$ in $\frac{\ovl{m} (\ovl{m} + 5)}{2} + \frac{3}{C \alpha^{\ovl{m}+1}}$ steps in expectation, and from $q(\zeta)$, again it can reach any other state using a sequence of actions in $\frac{2}{(3(1 + L_p) + C_p) \diamc{q_t\inv(s,\phi(s))}}$.~Therefore, there cannot be state $s\up$ such that 
    \nal{
    \max_{s \in \cS_t}{v^{\phi,t}_n(s)} > v^{\phi,t}_n(s\up) + \frac{\ovl{m} (\ovl{m} + 5)}{2} + \frac{3}{C \alpha^{\ovl{m}+1}} + \frac{2}{(3(1 + L_p) + C_p) \diamc{\zeta}}.
    }
    Now, from the lower-bound on $\diamc{\zeta}$~\eqref{cond:1}, we obtain that
    \begin{align}
        \spn{v^{\phi,t}_n} \leq \frac{\ovl{m} (\ovl{m} + 5)}{2} + \frac{3}{C \alpha^{\ovl{m}+1}} + \frac{4 \tilde{m}}{1 - \alpha}. \label{ub:case_1}
    \end{align}

    \textbf{Case 2:}~In this case, we have that
    \begin{align}
        \max{\{\diamc{q_t\inv(s,\phi(s))} : s \in \cS_t\}} < \frac{1 - \alpha}{2 (3(1 + L_p) + C_p) \br{\tilde{m}+1}}. \label{cond:2}
    \end{align}
    Let $\bar{\phi} \in \Phi_{SD}$ be the extension of policy $\phi \in \Phi_t$ such that
    \begin{align*}
        \bar{\phi}(s) = \phi(q(\pi_\cS(\zeta))), \mbox{ for ever } s \in \pi_\cS(\zeta), \mbox{ for every } \pi_\cS(\zeta) \in \cQ_t.
    \end{align*}
    Claim:~We claim that there is a sequence of extended actions for the extended MDP $\cM^+_t$ such that the first components of the extended actions are governed by $\phi$ and on the set $\cG_1$, the $m$-step state transition kernel prescribed by the sequence of extended actions is the same as the discretization of the $m$-step composition of true transition kernel induced under application of policy $\bar{\phi}$. Let the state process of the extended MDP be denoted by $\{\tilde{s}_i\}$ and let the state process of the extended MDP be denoted by $\{s_i\}$. Then, mathematically, our claim says that there exists a sequence of probability kernels $\{\tilde{p}_i \in \cC_t: i \in \{1,2,\ldots\}\}$ such that
    \begin{align*}
        \bP(\tilde{s}_i = s\up \mid \tilde{s}_0 = s, \tilde{p}, \phi) = \bP(s_i \in q_t\inv(s\up)\mid s_0 = s, \bar{\phi}),~\forall s, s\up \in \cS_t,
    \end{align*}
    where $\bP$ denotes the joint probability distribution of the processes $\{\tilde{s}_i\}$ and $\{s_i\}$, condition on $\tilde{p}$ and $\phi$ implies that the extended actions are governed by $\tilde{p}$ and $\phi$.~Similarly, ~condition on $\bar{\phi}$ implies that the actions are governed by $\bar{\phi}$.~We show this using mathematical induction. The base cases follow from Lemma~\ref{lem:conc_ineq}.~Let us assume that for every $s, s\up \in \cS_t$ and for every $j \in \{1, 2, \ldots i\}$,
    \begin{align*}
        \bP(\tilde{s}_j = s\up \mid \tilde{s}_0 = s, \tilde{p}, \phi) = \bP(s_j \in q_t\inv(s\up)\mid s_0 = s, \bar{\phi}).
    \end{align*}
    See that
    \begin{align*}
        \bP(\tilde{s}_{i+1} = s\up \mid \tilde{s}_0 = s, \tilde{p}, \phi) &=  \sum_{\ts \in \cS_t}{\bP(\tilde{s}_{i+1} = s\up \mid \tilde{s}_i = \ts, \tilde{p}, \phi) \bP(\tilde{s}_i = \ts \mid \tilde{s}_0 = s, \tilde{p}, \phi)} \\
        &= \sum_{\ts \in \cS_t}{\tilde{p}_{i+1}(\ts,\phi(\ts),s\up) \bP(s_i = q\inv_t(\ts) \mid s_0 = s, \bar{\phi})}.
    \end{align*}
    Here, we note that for every $s \in \cS \times \cA$, there is a kernel $\te_s \in \cC_t$ such that $\te_s(q\inv_t(s, \phi(s)), s\up) = p(s,\bar{\phi}(s), q\inv_t(s\up))$ for every $s\up \in \cS_t$. As the set $\cC_t$ is convex, for any probability measure $\nu$ on $(\cS,\cB_\cS)$,
    \begin{align*}
        \int_{\cS}{\te_s(\ts,\phi(\ts), s\up) d\nu(s)} \in \cC_t.
    \end{align*}
    Taking $\nu$ to be a measure that satisfies $\nu(B) = \bP(s_i \in B \mid s_i \in q\inv_t(\ts))$ for every $B \in \cB_\cS$, we get that
    \begin{align*}
        \int_{\cS}{\te_s(\ts,\phi(\ts), s\up) d\nu(s)} = \bP(s_{i+1} \in q\inv_t(s\up) \mid s_i \in q\inv_t(\ts)).
    \end{align*}
    Taking $\tilde{p}_{i+1}(\ts, \phi(\ts), \cdot) = \int_{\cS}{\te_s(\ts,\phi(\ts), \cdot) d\nu(s)}$, we get that
    \begin{align*}
        \bP(\tilde{s}_{i+1} = s\up \mid \tilde{s}_0 = s, \tilde{p}, \phi) &= \sum_{\ts \in \cS_t}{\tilde{p}_{i+1}(\ts,\phi(\ts),s\up) \bP(s_i = q\inv_t(\ts) \mid s_0 = s, \bar{\phi})}\\
        &= \sum_{\ts \in \cS_t}{\bP(s_{i+1} \in q\inv_t(s\up) \mid s_i \in q\inv_t(\ts)) \bP(s_i = q\inv_t(\ts) \mid s_0 = s, \bar{\phi})} \\
        &= \bP(s_{i+1} \in q\inv_t(s\up) \mid s_0 = s, \bar{\phi}).
    \end{align*}
    This completes the proof of our claim.

    From \eqref{cond:2}, we have that for any $\te \in \cC_t$,
    \begin{align*}
        \max_{s\in \cS_t}{\norm{\te(s,\phi(s),\cdot) - \tilde{p}_i(s,\phi(s),\cdot)}_1} \leq \frac{1 - \alpha}{2\tilde{m}},~\forall s \in \cS_t, s\up \in q_t\inv(s).
    \end{align*}
    Define the discretization of the $m$-step transition kernel under the application of policy $\bar{\phi}$ as follows:
    \begin{align*}
        \wp_{t,\phi}\uc{m}(s,s\up) := p_\phi\uc{m}(s,q\inv_t(s\up)),~\forall s \in \cS, s\up \in \cS_t.
    \end{align*}
    Let $\te_\phi\uc{m}$ denote the $m$-step transition kernel of the CMP induced by $\te$ under application of policy $\phi$.~From the previous claim and Lemma~\ref{lem:diff_kern_comp}, we have that
    \begin{align}
        \norm{\wp_{t,\phi}\uc{\tilde{m}}(s,\cdot) - \te_\phi\uc{\tilde{m}}(s,\cdot)}_1 \leq \frac{1 - \alpha}{2}, \label{diff:te_p}
    \end{align}
    where $p_\phi\uc{m}$ is defined in~\eqref{def:p_tstage}.~Also, observe that
    \begin{align}
        \max_{s,s\up\in \cS_t}{\norm{\wp\uc{\tilde{m}}_{t,\phi}(s,\cdot) - \wp\uc{\tilde{m}}_{t,\phi}(s\up,\cdot)}_1} \leq \frac{3\alpha - 1}{2}. \label{diff:pmpm}
    \end{align}
    Hence, combining \eqref{diff:te_p} and \eqref{diff:pmpm}, we have that for any $\te \in \cC_t$,
    \begin{align*}
        \max_{s,s\up\in \cS_t}{\norm{\te\uc{\tilde{m}}_\phi(s,\cdot) - \te\uc{\tilde{m}}_\phi(s\up,\cdot)}_1} &\leq \max_{s,s\up\in \cS_t}\bigg\{\norm{\te\uc{\tilde{m}}_\phi(s,\cdot) - \wp\uc{\tilde{m}}_{t,\phi}(s,\cdot)}_1 + \norm{\wp\uc{\tilde{m}}_{t,\phi}(s,\cdot) - \wp\uc{\tilde{m}}_{t,\phi}(s\up,\cdot)}_1 \\
        &\quad + \norm{\wp\uc{\tilde{m}}_{t,\phi}(s\up,\cdot) - \te\uc{\tilde{m}}_\phi(s\up,\cdot)}_1 \bigg\}\\
        &\leq \frac{1 - \alpha}{2}+ 3\alpha - 1 + \frac{1 - \alpha}{2} \\
        &= 2\alpha.
    \end{align*}
    Now, from Lemma~\ref{lem:pn_contra}, we have that the Markov chain induced by the transition kernel $\te$ under the application of policy $\phi$ is uniformly ergodic with constants $\frac{2}{\alpha}$ and $\alpha^{\tilde{m}\inv}$, i.e.,
    \begin{align*}
        \norm{\mu\uc{i}_{\phi,\te,s} - \mu\uc{\infty}_{\phi,\te}}_1 \leq \frac{2}{\alpha} \cdot \br{\alpha^{\tilde{m}\inv}}^i,~\forall i \in \bN.
    \end{align*}
    Hence, from Lemma~\ref{lem:bdd_pval_spn}, we conclude that
    \begin{align}
        \spn{v^{\phi,t}_n} \leq \frac{\ceil{\log_{\br{\frac{1}{\alpha}}^{\tilde{m}\inv}}{\br{\frac{2}{\alpha}}}} + 1}{1 - \alpha^{\tilde{m}\inv}}. \label{ub:case_2}
    \end{align}
    Combining the upper-bounds from \eqref{ub:case_1} and \eqref{ub:case_2}, we obtain the desired upper-bound.
\end{proof}

In the next lemma, we establish that the optimism injected by \algo~is not huge.

\begin{lemma}\label{lem:ub_opt}
    Consider time $t \in \bN$ and a policy $\phi \in \Phi_t$.~Let $\bar{\phi} \in \Phi_{SD}$ be the extension of $\phi$ as follows:
    \begin{align*}
        \bar{\phi}(s) = \phi(q(\xi)), \mbox{ for every } s \in \xi, \mbox{ for every } \xi \in \cQ_t.
    \end{align*}
    Then, we have that on the set $\cG_1$,
    \begin{align}\label{eq:lb_index}
        J_{\cM^+_t}(\phi) \leq J_\cM(\bar{\phi}) + C_{ub}~ \diam{t}{\bar{\phi}},~\forall t \in \bN, \phi \in \Phi_t,
    \end{align}
    where 
    $J_{\cM^+_t}(\phi)$ is the optimal value of $\cM^{+}_{t}$ when the control input component of the extended action is chosen according to the policy $\phi$, and the transition kernel is chosen so as to maximize the average reward, $\diam{t}{\bar{\phi}}$ is as defined in \eqref{def:diam_pol}, and
    \al{
    C_{ub} := 2 L_r + (3(1 + L_p) + C_p) C_v. \label{def:Cub}
    }
    $L_r, L_p$ are as stated in Assumption~\ref{assum:lip}, $C_p$ is as stated in Assumption~\ref{assum:statn_dist}, and $C_v$ is as defined in~\eqref{def:Cv}.
\end{lemma}
\begin{proof}
    Consider the iteration \eqref{iter:v_epe}. From Corollary~\ref{cor:conv_epe} it follows that
    \begin{align*}
        \lim_{n \to \infty}{\br{v^\phi_{n+1}(s) - v^\phi_n(s)}} = J_{\cM^+_t}(\phi),~\mbox{ for every } s \in \cS_t.
    \end{align*}
    As the sequence of Cesaro means converges to the same limit, we can write
    \begin{align*}
        \lim_{n \to \infty}{\frac{1}{n}v^\phi_n(s)} = J_{\cM^+_t}(\phi).
    \end{align*}
    Similarly, from the policy evaluation iteration for the true MDP~\eqref{def:epe_true}, we have that
    \begin{align*}
        \lim_{n \to \infty}{\frac{1}{n}V^{\bar{\phi}}_n(s)} = J_{\cM}(\bar{\phi}).
    \end{align*}
    In order to prove the lemma, we will show that on the set $\cG_1$, for every $n \in \bN$, for every $s \in \cS_t$ and for every $s\up \in q\inv(s)$, the following holds,
    \begin{align}\label{eq:lb_Vindex}
        v^\phi_n(s) \leq V^{\bar{\phi}}_n(s\up) + C_{ub}~ \bE_{p,\bar{\phi}}\sqbr{\sum_{i=0}^{n-1}{\diamc{q_t\inv(s_i, \bar{\phi}(s_i))}} \middle| s_0 = s\up},
    \end{align}
    where $\bE_{p,\phi}$ denotes that the expectation is taken with respect to the measure induced by $\phi$ when it is applied to MDP with transition kernel $p$.~We prove this using induction. The base case~$(n=0)$ is seen to hold trivially. Next, we assume that the following holds for $i \in \{0,1,\ldots,n\}$, where $n \in \bN$,
    \begin{align}\label{indhyp}
        v^\phi_i(s) \leq V^{\bar{\phi}}_i(s\up) + C_{ub}~ \bE_{p,\bar{\phi}}\sqbr{\sum_{j=0}^{i-1}{\diamc{q_t\inv(s_j, \bar{\phi}(s_j))}} \middle| s_0 = s\up},
    \end{align}
    for every $s \in \cS_t$ and for every $s\up \in q\inv(s)$.~Let us fix $s \in \cS_t$ and $s\up \in q\inv(s)$ arbitrarily, then from~\eqref{iter:v_epe} we obtain the following,
    \begingroup
    \allowdisplaybreaks
    \begin{align*}
        v^\phi_{n+1}(s) &= r(q(q\inv_t(s,\phi(s)))) + \max_{\te \in \cC_t}{\sum_{s\upp \in \cS_t}{\te(q(q\inv_t(s,\phi(s))), s\upp) v^\phi_n(s\upp)}} + L_r~ \diamc{q\inv_t(s,\phi(s))}\\
        &= r(q(q\inv_t(s,\phi(s)))) + \sum_{s\upp \in \cS_t}{\te_n(q(q\inv_t(s,\phi(s))), s\upp) \bar{V}^\phi_n(s\upp)} + L_r~ \diamc{q\inv_t(s,\phi(s))}\\
        &\leq r(s\up, \phi(s\up)) + \sum_{s\upp \in \cS_t}{\wp(s\up, \phi(s\up), s\upp; \cS_t \times A_t, \cQ_t) ~v^\phi_n(s\upp)} + \eta_t(q\inv_t(s,\phi(s))) \spn{v^\phi_n} + 2L_r~ \diamc{q\inv_t(s,\phi(s))} \\
        &\leq r(s\up, \phi(s\up)) + \int_\cS{p(s\up, \phi(s\up), s\upp) V^\phi_n(s\upp) ds\upp} + C_{ub}~ \bE_{p,\phi}\sqbr{\sum_{i=1}^{n}{\diamc{q_t\inv(s_i, \phi(s_i))}} \middle| s_0 = s\up} \\
        &\quad + \br{2L_r + (3(1 + L_p) + C_p) C_v} \diamc{q\inv_t(s,\phi(s))} \\
        &\leq r(s\up, \phi(s\up)) + \int_\cS{p(s\up, \phi(s\up), s\upp) V^\phi_n(s\upp) ds\upp} + C_{ub}~ \bE_{p,\phi}\sqbr{\sum_{i=1}^{n}{\diamc{q_t\inv(s_i, \phi(s_i))}} \middle| s_0 = s\up}\\
        &\quad +  \br{2L_r + (3(1 + L_p) + C_p) C_v} \diamc{q\inv_t(s,\phi(s))}\\
        &= V^\phi_{n+1}(s) + C_{ub}~ \bE_{p,\phi}\sqbr{\sum_{i=0}^{n}{\diamc{q_t\inv(s_i, \phi(s_i)))}} \middle| s_0 = s},
    \end{align*}
    \endgroup
    where $\te_n$ is a transition kernel belonging to the set $\cC_t$ that maximizes the expression in the r.h.s. of the first equality.~The first inequality follows from Lipschitz continuity of the reward function, the definition of event $\cG_1$ and from Lemma~\ref{lem:bdd_dotdifLv}.~The second inequality is obtained by invoking the induction hypothesis~\eqref{indhyp}, and by using the upper-bound on $\spn{v^\phi_n}$ from Lemma~\ref{lem:bd_span_epe}.~This concludes the induction argument, and proves~\eqref{eq:lb_Vindex}.~The proof of the claim follows by dividing both side of~\eqref{eq:lb_Vindex} by $n$ and taking limit $n \to \infty$.
\end{proof}


\section{Properties of Proxy Diameter}\label{app:prop_pdiam}
In this section, we present three results as the corollaries of the results obtained in the previous section.
\begin{cor}\label{cor:opt_pdiam}
    Fix a time $t$. Let $\phi \in \Phi_t$ and $\bar{\phi} \in \Phi_{SD}$ be the unique extension of $\phi$ such that 
    \al{
    \bar{\phi}(s\up) = \phi(s), \mbox{ for every } s \in \cS_t \mbox{ and } s\up \in q\inv(s).\label{def:pol_ext}
    }
    On the set $\cG_1$, we have,
    \al{
        \pdiam{t}{\phi} \geq \diam{t}{\phi},~\forall t \in \{0,1,\ldots, T-1\}, \phi \in \Phi_t.\label{ineq:opt_pdiam}
    }
    where $\pdiam{t}{\phi}$ is the average reward of policy $\phi$ evaluated on the extended MDP~$\cM^{d,+}_t$ and $\diam{t}{\bar{\phi}} = \int_{\cS}{q_t\inv(s,\phi(s)) \mu\uc{\infty}_{\phi,p}(s) ds}$.
\end{cor}
\begin{proof}
    Define the MDP, $\cM^d_t := (\cS, \cA, p, \tilde{d})$ where
    \begin{align*}
        \tilde{d}(s,a) = \diamc{q\inv_t(s,a)},\mbox{ for every } (s,a) \in \cS \times \cA.
    \end{align*}
    As $p$ satisfy Assumption~\ref{assum:unif_ergodic},
    \begin{align*}
        J_{\cM^d_t}(\bar{\phi}) = \diam{t}{\bar{\phi}},\mbox{ for every } \bar{\phi} \in \Phi_{SD}.
    \end{align*}
    Note that the extended policy evaluation~\eqref{iter:v_epe} and policy evaluation~\eqref{def:epe_true} algorithms are equivalent to extended value iteration~\eqref{iter:evi} and value iteration~\eqref{iter:vi} algorithms, respectively, except that the control inputs have to be chosen from singleton sets.~Then the proof follows from Lemma~\ref{lem:optimism}.
\end{proof}

\begin{cor}\label{cor:bd_span_epe}
    Let $t \in \{0,1,\ldots,T-1\}$.~Consider the extended MDP $\cM^{d,+}_t$, a policy $\phi \in \Phi_t$ and the corresponding \epe~\eqref{algo:epe}~iterates:
    \begin{align*}
        g^{\phi,t}_0(s) &= 0, \notag\\
        g^{\phi,t}_{n+1}(s) &= \max_{\te \in \cC_t} \flbr{d_t(s,\phi(s)) + \sum_{s\up \in \cS_t}{\te(s,\phi(s),s\up) g^{\phi,t}_n(s\up)}},~\forall s \in \cS_t, n \in \bN.
    \end{align*}
    On the set $\cG_1$, we have 
    \nal{
        \spn{g^{\phi,t}_n} \leq C_v,~\forall n \in \bN, t \in \bN,
    }
    where, $C_v$, $\ovl{m}$ and $\tilde{m}$ are defined in \eqref{def:Cv}, \eqref{def:m_bar} and \eqref{def:tm}, respectively.
\end{cor}
\begin{proof}
    Follows from Lemma~\ref{lem:bd_span_epe}.
\end{proof}

\begin{cor}\label{cor:ub_pdiam}
    Consider time $t \in \bN$ and a policy $\phi \in \Phi_t$.~Let $\bar{\phi} \in \Phi_{SD}$ be the extension of $\phi$ as defined in \eqref{def:pol_ext}.~Then, we have that on the set $\cG_1$,
    \begin{align*}
        \pdiam{t}{\phi} \leq  (C_{ub} + 1)~ \diam{t}{\bar{\phi}},~\forall t \in \bN, \phi \in \Phi_t,
    \end{align*}
    where $C_{ub}$ is as defined in \eqref{def:Cub}.
\end{cor}
\begin{proof}
    Noting that $J_{\cM^d_t}(\bar{\phi}) = \diam{t}{\bar{\phi}}$ and $J_{\cM^{d,+}_t}(\phi) = \pdiam{t}{\phi}$, the claim follows from Lemma~\ref{lem:ub_opt} and Corollary~\ref{cor:bd_span_epe}.
\end{proof}

\section{Guarantee on Number of Visits to Cells}\label{app:visits}
Recall that $\mu^{(t)}_{\phi,p,s}$ denotes the distribution of $s_t$ when policy $\phi$ is applied to the MDP that has the transition kernel $p$ and the initial state is $s$, and $\mu^{(\infty)}_{\phi,p}$ denotes the unique invariant distribution of the Markov chain induced by the policy $\phi$ on the MDP with transition kernel $p$.~Consider an $\cS$-cell $\xi$ for which the diameter is greater than $\eps$, and $\mu^{(\infty)}_{\phi,p}(\xi) \geq (\eps/3)^{d_\cS + 1}$ for all stationary deterministic policies $\phi$, where $\eps>0$.~Later we will choose an appropriate value for $\eps$.~From Assumption~\ref{assum:unif_ergodic} we get that for all $\phi \in \Phi_{SD}$ and for every initial state $s \in \cS$ we have,
\begin{align*}
    \mu^{(t)}_{\phi,p,s}(\xi) \geq \mu^{(\infty)}_{\phi,p}(\xi) - \frac{C}{2} \alpha^t.
\end{align*}
Since $\mu^{(\infty)}_{\phi,p}(\xi) \geq (\eps/3)^{d_\cS + 1}$, we have
\al{
    \mu^{(t)}_{\phi,p,s}(\xi) \geq \frac{1}{2} \mu^{(\infty)}_{\phi,p}(\xi),~\forall t \geq t\ust(\eps),\label{def:t_star_1}
}
where,
\al{
t\ust(\eps) := \ceil{\log_{\frac{1}{\alpha}}{\br{C\br{\frac{3}{\eps}}^{d_\cS + 1}}}}. \label{def:t_star_2}
}

\begin{lemma}\label{lem:lb_n_epi}
    Fix $k \in \bN$ and consider a $\cS$-cell $\xi \in \cQ_{\tau_k}$ such that $\mu^{(\infty)}_{\phi,p}(\xi) \geq (\eps/3)^{d_\cS + 1}$. Let $\zeta \in \cP_{\tau_k}$ denote the active cell that contains $\left\{(s,\phi_k(s))\right\}_{s \in \xi}$.~Let $n_k(\zeta)$ be the number of visits to $\zeta$ in the $k$-th episode, and $H_k$ be the duration of the $k$-th episode. Then, with a probability at least $1 - \frac{\delta}{3}$, we have,
    \begin{align*}
        n_k(\zeta) \geq \frac{H_k~ \mu^{(\infty)}_{\phi,p}(\xi)}{2 t\ust(\eps)} - \sqrt{\frac{H_k}{t\ust(\eps)} \log{\br{\frac{6 T}{t\ust(\eps) \delta}}}} - 1.
    \end{align*}
\end{lemma}
\begin{proof}
    Denote $m: = \floor{H_k / t\ust(\eps)}$ and $t_i := \tau_k + i ~t\ust(\eps)$. Let $i\ust \in \{0\}\cup\bN$ be such that $t_{i\ust} \leq T < t_{i\ust+1}$. Define the following martingale difference sequence $\{b_i\}_i$ w.r.t. the filtration $\{\cF_{t_i}\}_i$,
    \begin{align*}
        b_i := \ind{s_{t_i} \in \xi} - \bE\sqbr{\ind{s_{t_i} \in \xi} \mid \cF_{t_{i-1}}},~i=1,2,\ldots,i\ust.
    \end{align*}
    Also, define
    \begin{align*}
        g_i := \ind{(i-1) t\ust(\eps) \leq H_k},~i=1,2,\ldots,i\ust,
    \end{align*}
    and note that it is $\{\cF_{t_i}\}_i$-predictable sequence. It can be shown that $b_i$'s are conditionally $\frac{1}{2}$ sub-Gaussian, i.e., $\bE[\exp(\beta~ b_i)\mid \cF_{t_{i-1}}] \leq \exp(\beta^2/8)$~\citep{raginsky2013concentration}. Also, note that $\{g_i\}_i$ is a $\{0,1\}$-valued, $\{\cF_{t_i}\}$-predictable stochastic process. Hence, we can use Corollary \ref{cor:self_norm_vec} and obtain,
    \begin{align}\label{ineq:7}
        \bP\br{\sum_{i=1}^{m+1}{\ind{s_{t_i} \in \xi}} \leq \sum_{i=1}^{m+1}{\bE\sqbr{\ind{s_{t_i} \in \xi} \mid \cF_{t_{i-1}}}} - \sqrt{\frac{m+2}{2} \log{\br{\frac{3(m+2)}{\delta}}}}} \leq \frac{\delta}{3}.
    \end{align}
    From~\eqref{def:t_star_1},~\eqref{def:t_star_2} we have that
    \al{
        \bE\sqbr{\ind{s_{t_{i-1}} \in \xi} \mid \cF_{t_{i-1}}} \geq \frac{1}{2} \mu^{(\infty)}_{\phi,p}(\xi).\label{ineq:8}
        }
    Also, observe that $m + 1 > \frac{H_k}{t\ust(\eps)}$ and $m \leq \frac{H_k}{t\ust(\eps)}$. Since under \algo~algorithm we have $H_k \geq 2 t\ust(\eps)$, we get $m+2 \leq 2m$. Upon using~\eqref{ineq:8} and $m+2 \leq 2m$ in~\eqref{ineq:7}, we obtain,
    \begin{align*}
        \bP\br{\sum_{i=1}^{m}{\ind{s_{t_i} \in \xi}} \leq \frac{H_k~ \mu^{(\infty)}_{\phi,p}(\xi)}{2 t\ust(\eps)} - \sqrt{\frac{H_k}{t\ust(\eps)} \log{\br{\frac{6 H_k}{t\ust(\eps) \delta}}}}- 1} \leq \frac{\delta}{3}.
    \end{align*}
    The claim then follows since $H_k \leq T$, and $\sum_{i=1}^{m}{\ind{s_{t_i} \in \xi}} \leq n_k(\zeta)$.
\end{proof}

\begin{cor}\label{cor:G_2}
    Fix an $\eps > 0$. Consider the triplet $(k, \xi, \zeta)$ such that $k \in \{0\}\cup\bN$, $\xi \in \cQ_{\tau_k}$, $\diamc{\xi} \geq \eps$, $\mu\uc{\infty}_{\phi,p}(\xi) \geq (\eps/3)^{d_\cS + 1}$, $\zeta \in \cP_{\tau_k}$, and for every $s \in \xi$, $(s,\phi_k(s)) \in \zeta$.~Define the event,
    \begin{align}
        \cG_{2,\eps}:= \flbr{n_k(\zeta) \geq \frac{H_k~ \mu^{(\infty)}_{\phi,p}(\xi)}{2 t\ust(\eps)} - \sqrt{\frac{H_k}{t\ust(\eps)} \log{\br{\frac{12 T^2 d^\frac{d}{2}}{t\ust(\eps) \eps^d \delta}}}} - 1,~ \forall (k,\xi,\zeta) \mbox{ that satisfies the above conditions.}}, \label{def:G2}
    \end{align}
    where $t\ust(\eps) = \ceil{\log_{\frac{1}{\alpha}}{\br{C\br{\frac{3}{\eps}}^{d_\cS + 1}}}}$. We have, $\bP(\cG_{2,\eps}) \geq 1 - \frac{\delta}{3}$.
\end{cor}
\begin{proof}
    Since $k$ denotes the episode number, it can not exceed $T$. By definition of $\cP_{\tau_k}$ and $\cQ_{\tau_k}$, $\diamc{\zeta} \geq \diamc{\xi}$. Also, the number of cells that have a diameter greater than $\eps$ is less than $(\sqrt{d}/\eps)^d$. So, the total number of possible combinations of $(k, \xi, \zeta)$ that satisfies the given condition is at most $T (\sqrt{d}/\eps)^d$.~The proof then follows from Lemma \ref{lem:lb_n_epi} by taking a union bound over all $(k, \xi, \zeta)$ and by the fact that $H_k \leq T$.
\end{proof}
\section{Regret Analysis}\label{app:regret}
\textbf{Regret decomposition:} 
Recall the regret~\eqref{def:regret} decomposition of $\algo$,
\begin{align}
    \cR(T;\algo) &= T J\ust_{\cM} - \sum_{k=1}^{K(T)}{\sum_{t = \tau_k}^{\tau_{k+1}-1}{r(s_t,a_t)}} \notag\\
    &= \underbrace{\sum_{k=1}^{K(T)}{H_k \br{J\ust_{\cM} - J_{\cM}(\phi_k)}}}_{(a)} + \underbrace{\sum_{k=1}^{K(T)}{\br{H_k~ J_{\cM}(\phi_k) - \sum_{t=\tau_k}^{\tau_{k+1}-1}{r(s_t,\phi_k(s_t))}}}}_{(b)}.\label{eq:decompregret}
\end{align}

The term (a) captures the regret arising due to the gap between the optimal value of the average reward and the average reward of the policies $\{\phi_k\}$ that are actually played in different episodes, while (b) captures the sub-optimality arising since the distribution of the induced Markov chain does not reach the stationary distribution in finite time.~(a) and (b) are bounded separately.

\textbf{Bounding} (a): 
This term can be further decomposed into the sum of the regrets arising due to playing policies from the sets $\Phi\uc{2^{-i}}$, for $i = 1, 2, \ldots, \ceil{\log{\br{1/\eps}}}$, and the regret arising from playing all $\eps$-optimal policies.~To bound the regret arising due to policies from $\Phi\uc{2^{-i}}$, we count the number of timesteps in which policies from $\Phi\uc{2^{-i}}$ are played, and then multiply it by $2^{-i+1}$.~We then add these regret terms from $i=1$ to $\ceil{\log{\br{1/\eps}}}$. Note that the cumulative regret arising from playing the set of $\eps$-optimal policies is upper-bounded by $\eps T$.~Recall that at the beginning of the $k$-th episode, \algo~solves $\cM^+_{\tau_k}$ with the accuracy parameter set equal to $\frac{1}{\sqrt{T}}$.~This ``loss of accuracy'' as compared to the case where \algo~could have solved $\cM^+_{\tau_k}$ accurately at the beginning of every episode, leads to an additional term in the upper-bound of (a).~From Lemma~\ref{lem:conv_evi}, the difference between the two solutions is at most $\frac{1}{\sqrt{T}}$ for each episode, hence this term can be upper-bounded as $\sqrt{T}$.~Hence, we bound (a) by firstly considering that~\algo~solves $\cM^+_{\tau_k}$ for the optimal policy (with complete accuracy), and then add $\sqrt{T}$ to obtain the upper-bound of term (a).

The regret arising due to playing policies from the set $\Phi\uc{2^{-i}}$ is bounded as follows.~Lemma~\ref{lem:keycell} proves the existence of a key cell in every episode on the set $\cG_1$.~Its proof relies crucially on Lemma~\ref{lem:gap_phi} and on the properties of the index of policies that are derived in Section~\ref{app:prop_emdp}.~Lemma~\ref{lem:lb_num_visit} gives a lower-bound of the number of plays of a key cell in any episode by \algo~using Lemma~\ref{lem:keycell}, Corollary~\ref{cor:G_2}, and Lemma~\ref{lem:bdd_epi_tool}.~Next, Lemma~\ref{lem:bdd_Phi_play} establishes an upper-bound on the number of timesteps when policies from $\Phi\uc{2^{-i}}$ are played. This upper-bound multiplied by $2^{-i+1}$, is the regret arising from playing policies from $\Phi\uc{2^{-i}}$.~Next, we derive an important property of the policy $\phi \in \Phi_{SD}$ that is played in the $k$-th episode. This is used to upper-bound the number of plays of sub optimal policies.
\begin{lemma}\label{lem:keycell}
    Consider a sample path from the set $\cG_1$~\eqref{def:G_1}.~For each $k=1,2,\ldots$, there exists at least one $s \in \cS$ (where $s$ could vary with $k$, and here we are suppressing dependence upon $k$) such that 
    \begin{align*}
        &\diamc{q\inv_{\tau_k}(s,\phi_k(s))} \geq \frac{1}{3 C_{ub}} \max \left\{\gap{s,\phi_k(s)}, C_{ub}~ \diam{\tau_k}{\phi_k}\right\}, \\
    \mbox{ and }&\mu\uc{\infty}_{\phi_k,p}(\pi_\cS(q\inv_{\tau_k}(s,\phi_k(s)))) \geq (\diam{\tau_k}{\phi_k} / 3)^{d_\cS + 1}.
    \end{align*}
    Such a $q\inv_{\tau_k}(s,\phi_k(s))$ is called a key cell for the $k$-th episode.
\end{lemma}
\begin{proof}
     Let us fix $k \in \bN$ and a policy $\phi \in \Phi_{\tau_k}$. Let $\bar{\phi}$ be the unique continuous extension of $\phi$ as defined in \eqref{def:pol_ext}.~We will first show that if
    \begin{align}\label{cond:noplayphi_1}
        \diam{\tau_k}{\bar{\phi}} \leq \Delta(\bar{\phi})/C_{ub},
    \end{align}
    then $\bar{\phi}$ will not be played from episode $k$ onwards.~From Lemma~\ref{lem:optimism} we have that on the set $\cG_1$, $J\ust_{\cM^+_{\tau_k}} = J_{\cM^+_{\tau_k}}(\tilde{\phi}_k) \geq J\ust_{\cM}$.~Hence, if $J_{\cM^+_{\tau_k}}(\phi) < J\ust_{\cM}$, then the algorithm will not play $\bar{\phi}$.~From Lemma~\ref{lem:ub_opt} we have that on the set $\cG_1$, $J_{\cM^+_{\tau_k}}(\phi) \leq J_{\cM}(\bar{\phi}) + C_{ub}~ \diam{\tau_k}{\bar{\phi}}$.~Thus, on $\cG_1$, $\bar{\phi}$ will never be played from the $k$-th episode onwards if
    \begin{align*}
        J_{\cM}(\bar{\phi}) + C_{ub}~ \diam{\tau_k}{\bar{\phi}} \leq J\ust_{\cM},
    \end{align*}
    or, if $\diam{\tau_k}{\bar{\phi}} \leq \Delta(\bar{\phi})/ C_{ub}$.~In other words, on the set $\cG_1$,
    \begin{align}
        \diam{\tau_k}{\phi_k} > \Delta(\phi_k)/ C_{ub}.\label{cond:phi_k}
    \end{align}
    We will prove the result by contradiction.~Let us assume that for all $s \in \cS$ that satisfy $\mu\uc{\infty}_{\phi_k,p}(\pi_\cS(q\inv_{\tau_k}(s,\phi_k(s)))) \geq (\diam{\tau_k}{\phi_k} / 3)^{d_\cS + 1}$, the following is true:
    \begin{align}
        \diamc{q\inv_{\tau_k}(s,\phi_k(s))} \leq \frac{1}{3 C_{ub}} \max{\{\gap{s,\phi_k(s)}, C_{ub} \diam{\tau_k}{\phi_k}\}}. \label{assum:contra}
    \end{align}
    Define the following sets of $\cS$-cells:
    \begin{align*}
        \cQ\uc{1} &:= \{\xi \in \cQ_{\tau_k} \mid \mu\uc{\infty}_{\phi_k,p}(\xi) < (\diam{\tau_k}{\phi_k} /3)^{d_\cS + 1},~\diamc{q\inv_{\tau_k}(q(\xi),\phi_k(q(\xi)))} \geq \diam{\tau_k}{\phi_k}/ 3 \},\\
        \cQ\uc{2} &:= \{\xi \in \cQ_{\tau_k} \mid \diamc{q\inv_{\tau_k}(q(\xi),\phi_k(q(\xi)))} < \diam{\tau_k}{\phi_k}/ 3 \},\\
        \cQ\uc{3} &:= \{\xi \in \cQ_{\tau_k} \mid \mu\uc{\infty}_{\phi_k,p}(\xi) \geq (\diam{\tau_k}{\phi_k} /3)^{d_\cS + 1},~\diamc{q\inv_{\tau_k}(q(\xi),\phi_k(q(\xi)))} \geq \diam{\tau_k}{\phi_k}/ 3 \}.
    \end{align*}
    We observe that $\cQ_{\tau_k}$ is partitioned by $\cQ\uc{1}$, $\cQ\uc{3}$ and $\cQ\uc{3}$.~Note that $\abs{\cQ\uc{1}} \leq (\diam{\tau_k}{\phi_k}/3)^{-d_\cS}$.~Also, note that by the necessary condition for $\phi_k$ to be played and by our assumption, for every $\xi \in \cQ\uc{3}$, $\frac{1}{3} \diam{\tau_k}{\phi_k} \leq \diamc{q\inv_{\tau_k}(q(\xi),\phi_k(q(\xi)))} \leq \frac{1}{3 C_{ub}}\min_{s\in \zeta}\{\gap{s,\phi_k(s)}\}$. Then,
    \begin{align*}
        \diam{\tau_k}{\phi_k} &= \int_{\cS}{\diamc{q\inv_{\tau_k}(s,\phi_k(s))} \mu\uc{\infty}_{\phi_k,p}(s) ~ds}\\
        &= \sum_{\xi \in \cQ_{\tau_k}}{\diamc{q\inv_{\tau_k}(q(\xi),\phi_k(q(\xi)))} \mu\uc{\infty}_{\phi_k,p}(\xi)} \\
        &= \sum_{\xi \in \cQ\uc{1}}{\diamc{q\inv_{\tau_k}(q(\xi),\phi_k(q(\xi)))} \mu\uc{\infty}_{\phi_k,p}(\xi)} + \sum_{\xi \in \cQ\uc{2}}{\diamc{q\inv_{\tau_k}(q(\xi),\phi_k(q(\xi)))} \mu\uc{\infty}_{\phi_k,p}(\xi)}\\
        &\quad + \sum_{\xi \in \cQ\uc{3}}{\diamc{q\inv_{\tau_k}(q(\xi),\phi_k(q(\xi)))} \mu\uc{\infty}_{\phi_k,p}(\xi)}\\
        &\leq \frac{\diam{\tau_k}{\phi_k}}{3} + \frac{\diam{\tau_k}{\phi_k}}{3} + \frac{1}{3 C_{ub}}\int_{\cS}{\gap{s,\phi_k(s)} \mu\uc{\infty}_{\phi_k,p}(s)~ ds} \\
        &= \frac{\diam{\tau_k}{\phi_k}}{3} + \frac{\diam{\tau_k}{\phi_k}}{3} + \frac{\Delta(\phi_k)}{3~ C_{ub}} \\
        &< \diam{\tau_k}{\phi_k},
    \end{align*}
which yields us a contradiction.~Hence, we conclude that our assumption~\eqref{assum:contra} was wrong.~This concludes the proof.
\end{proof}

Define, 
\al{
\eps(T) := T^{-\frac{1}{2d_\cS + d_z + 3}},~~\teps(T) := T^{-\frac{1}{2d_\cS + d + 3}}
}
Note that $\eps(T) \geq \teps(T)$ since $d_z \leq d$.~Also, note that $t\ust(\eps(T)) \leq t\ust(\teps(T))$, where $t\ust(\cdot)$ is defined ins \eqref{def:t_star_2}.

\textbf{Choosing $C_H$:} We choose the constant associated with the episode duration~\eqref{def:epi_dur} of \algo~as,
\begin{align}
    C_H \geq 16~ t\ust(\teps(T))~ \br{\frac{3(1 + C_{ub})}{1-\gamma}}^{2(d_\cS + 1)} \frac{\log{\br{\frac{12 T^2 d^\frac{d}{2}}{t\ust(\eps(T)) \teps(T)^d \delta}}} + 1}{\log(T/\delta)}. \label{def:CH}
\end{align}
\begin{lemma}\label{lem:lb_num_visit}
    Pick a sample path from the set $\cG_1 \cap \cG_{2,\eps}$, where $\cG_1$ and $\cG_{2,\eps}$ are as in \eqref{def:G_1} and \eqref{def:G2}, respectively.~Let $\zeta$ be a key cell in episode $k$ (such key cells have been shown to exist in Lemma~\ref{lem:keycell}), i.e., for some $\xi \subseteq \pi_\cS(\zeta)$ such that $\xi \in \cQ_{\tau_k}$, and for some $s \in \xi$, the following holds,
    \begin{align*}
        &\diamc{\zeta} > \frac{1}{3 C_{ub}} \max{\{\gap{s,\phi_k(s)}, C_{ub}~ \diam{\tau_k}{\phi_k}\}}, \mbox{ and},\\
        &\mu\uc{\infty}_{\phi_k,p}(\xi) \geq (\diam{\tau_k}{\phi_k} / 3)^{d_\cS + 1}.
    \end{align*}
    Then, if~$\Delta(\phi_k) \geq \eps(T) C_{ub}$, then the number of visits to $\zeta$ during the $k$-th episode can be lower-bounded as follows, 
    \al{
    n_k(\zeta) \geq \frac{4 t\ust(\teps(T))}{t\ust(\eps(T))} \br{\log{\br{\frac{12 T^2 d^\frac{d}{2}}{t\ust(\eps(T)) \teps(T)^d \delta}}} + 1} \diamc{\zeta}^{-(d_\cS + 1)}.
    }
\end{lemma}

\begin{proof}
    Recall that on $\cG_1$ we have $\diam{\tau_k}{\phi_k} \geq \frac{\Delta(\phi_k)}{C_{ub}}$~\eqref{cond:phi_k}. Hence, $\diam{\tau_k}{\phi_k} > \eps(T)$ and $\mu\uc{\infty}_{\phi_k,p}(\xi) \geq (\eps(T) / 3)^{d_\cS + 1}$.~So, upon using Corollary~\ref{cor:G_2} we obtain,
    \begin{align*}
        n_k(\zeta) \geq \frac{H_k~\mu^{(\infty)}_{\phi_k,p}(\xi)}{2 t\ust(\eps(T))} - \sqrt{\frac{H_k}{t\ust(\eps(T))} \log{\br{\frac{8 T^2 d^\frac{d}{2}}{t\ust(\eps(T)) \eps(T)^d \delta}}}} - 1.
    \end{align*}
    Next, we note that the duration of the $k$-th episode $H_k$ can be lower-bounded as follows,
    \begin{align}
        H_k &\geq \frac{C_H (1 - \gamma)^{2(d_\cS + 1)} \log{\br{T/\delta}}}{\pdiam{\tau_k}{\phi_k}^{2(d_\cS + 1)}} \notag\\
        &\geq \frac{C_H (1 - \gamma)^{2(d_\cS + 1)} \log{\br{T/\delta}}}{(3(1 + C_{ub}))^{2(d_\cS + 1)}} \br{\frac{3}{\diam{\tau_k}{\phi_k}}}^{2(d_\cS + 1)} \notag\\
        &\geq \frac{16 t\ust(\eps(T))}{\mu\uc{\infty}_{\phi_k,p}(\xi)^2} \br{\log{\br{\frac{8 T^2 d^\frac{d}{2}}{t\ust(\eps(T)) \eps(T)^d \delta}}} + 1}, \label{lb:Hk}
    \end{align}
    where the first inequality follows from the lower-bound of $H_k$~\eqref{bdd:hk}, the second inequality follows since from Corollary~\ref{cor:ub_pdiam} we have $\pdiam{\tau_k}{\phi_k} \leq (1 + C_{ub}) \diam{\tau_k}{\phi_k}$.~The third inequality follows from the fact that $\mu\uc{\infty}_{\phi_k,p}(\xi) \geq (\diam{\tau_k}{\phi_k} / 3)^{d_\cS + 1}$.~Lemma~\ref{lem:bdd_epi_tool} when combined with \eqref{lb:Hk} yields
    \begin{align*}
        n_k(\zeta) &\geq \frac{H_k~ \mu^{(\infty)}_{\phi_k,p}(\xi)}{2 t\ust(\eps(T))} - \sqrt{\frac{H_k}{t\ust(\eps(T))} \log{\br{\frac{8 T^2 d^\frac{d}{2}}{t\ust(\eps(T)) \eps(T)^d \delta}}}} - 1 \\
        &\geq \frac{H_k~ \mu^{(\infty)}_{\phi_k,p}(\xi)}{4 t\ust(\eps(T))},
    \end{align*}
    or,
    \begin{align*}
        n_k(\zeta) &\geq \frac{C_H (1 - \gamma)^{2(d_\cS + 1)} \log{\br{\frac{T}{\delta}}}}{4~t\ust(\eps(T))}~\pdiam{\tau_k}{\phi_k}^{-2(d_\cS + 1)} \times (\diam{\tau_k}{\phi_k} / 3)^{d_\cS + 1}\\
        &\geq \frac{C_H \log{\br{\frac{T}{\delta}}}}{4~t\ust(\eps(T))~(3(1 + C_{ub})^2)^{d_\cS + 1}}~\diam{\tau_k}{\phi_k}^{-(d_\cS + 1)}\\
        &\geq \frac{C_H \log{\br{\frac{T}{\delta}}}}{4~t\ust(\eps(T))~(3(1 + C_{ub}))^{2(d_\cS + 1)}}~\diamc{\zeta}^{-(d_\cS + 1)}\\
        &\geq \frac{4 t\ust(\teps(T))}{t\ust(\eps(T))} \br{\log{\br{\frac{12 T^2 d^\frac{d}{2}}{t\ust(\eps(T)) \teps(T)^d \delta}}} + 1} \diamc{\zeta}^{-(d_\cS + 1)},
    \end{align*}
    where the first inequality follows from the lower-bound of $H_k$~\eqref{bdd:hk} and from the fact that $\mu^{(\infty)}_{\phi_k,p}(\xi) \geq (\diam{\tau_k}{\phi_k} / 3)^{d_\cS + 1}$. The second and the third inequality follow from the fact that $\pdiam{\tau_k}{\phi_k} \leq (1+C_{ub})\diam{\tau_k}{\phi_k}$, and $\diam{\tau_k}{\phi_k} < 3~\diamc{\zeta}$, respectively. The fourth inequality follows from \eqref{def:CH}. This concludes the proof.
\end{proof}

\begin{lemma}\label{lem:bdd_Phi_play}
    Consider the set of policies $\Phi\uc{2^{-i}} = \{\phi \in \Phi_{SD} \mid \Delta(\phi) \in (2^{-i}, 2^{-i+1}]\}$, where $i \in \bN$.~On the set $\cG_1$,~\algo~can play policies from the set $\Phi\uc{2^{-i}}$ for a maximum of $\cO(\log{\br{\frac{T}{\delta}} 2^{i (2d_\cS + d_z + 3)}})$ time steps.
\end{lemma}
\begin{proof}
    We prove this lemma in the following three steps: First, we derive the number of episodes in which a cell can serve as a key cell while policies from $\Phi\uc{2^{-i}},~i \in \bN$ are being played. Secondly, we derive an upper-bound on the episode duration when policies from $\Phi\uc{2^{-i}}$ are played. Thirdly, we multiply upper-bounds on the number of episodes with the upper-bound on the duration of the episodes and then sum it over all possible key cells corresponding to policies in $\Phi\uc{2^{-i}}$, and this yields the desired upperbound on cumulative plays from $\Phi\uc{2^{-i}}$. 
    
    Before proceeding with proving these three properties, we begin with some preliminary results.~Recall that for $\beta>0$, the set $\cZ_\beta \subseteq \cS \times \cA$ consists of those state-action pairs $(s, a)$ for which $\gap{s,a} \leq \beta$.~Let us denote the smallest subset of $\cP_t$ that covers $\cZ_\beta$, as the active covering of $\cZ_\beta$ at time $t$.~From Lemma~\ref{lem:keycell}, we obtain that if for all $j = 0, 1, \ldots, i$, the active covering of $\cZ_{2^{-j}}$ at time $\tau_k$ does not contain a cell $\zeta$ that satisfies the following conditions, 
    \begin{enumerate}
        \item $\diamc{\zeta} \geq \frac{\sqrt{d}}{3 C_{ub}}~2^{-j}$,  and 
        \item $\mu\uc{\infty}_{\phi,p}(\xi) \geq \br{\Delta(\phi)/3 C_{ub}}^{d_\cS + 1}$ for all $\xi$ which satisfy $\xi \in \cQ_{\tau_k}$ and $\xi \subseteq \pi_\cS(\zeta)$,
    \end{enumerate}
    then there is no cell that qualifies to be a key cell for a policy from the set $\Phi\uc{2^{-i}}$.~Thus, under the above condition,~\algo~will not play a policy from $\Phi\uc{2^{-i}}$ $k$-th episode onwards.~Let $\cY_j$ be the covering of $\cZ_{2^{-j}}$ by cells of diameter $\frac{\sqrt{d}}{3 C_{ub}}~2^{-j}$.~We make the following observation: If every cell in $\cY_j$ for $j = 1, 2, \ldots, i$ is split, then no cell in the active covers of $\cZ_{2^{-j}}$ for $j = 1, 2, \ldots, i$ can serve as the key cell while playing policies from $\Phi\uc{2^{-i}}$. This is a sufficient condition for any policy from $\Phi\uc{2^{-i}}$ to be not played by \algo.
    
    \texttt{Step 1:} First, we bound the number of episodes when a cell $\zeta \in \cY_i$ or any of its ancestors has served as a key cell.~From the cell activation rule~\eqref{def:activationrule}, we have that $\zeta$ would be split when the number of visits to $\zeta$ exceeds $c_a 2^{d_\cS+2} \log{\br{\frac{T}{\delta}}} \diamc{\zeta}^{-(d_\cS+2)}$. In Lemma~\ref{lem:bdd_Phi_play}, we derived the lower-bound on the number of visits to a key cell. Invoking that lower-bound, we obtain that $\zeta$ can be played in at most
    \begin{align*}
        \frac{c_a t\ust(\eps(T)) 2^{d_\cS+2} \log{\br{\frac{T}{\delta}}}}{4 t\ust(\teps(T)) \br{\log{\br{\frac{12 T^2 d^\frac{d}{2}}{t\ust(\eps(T)) \teps(T)^d \delta}}} + 1}}  \diamc{\zeta}^{-1}
    \end{align*}
    episode as a key cell when the corresponding episode plays a policy from $\Phi\uc{2^{-i}}$. Replacing $\diamc{\zeta}$ with $\frac{\sqrt{d}}{3 C_{ub}}~2^{-j}$, we obtain that $\zeta$ can be played in at most
    \begin{align*}
        \frac{3 c_a t\ust(\eps(T)) C_{ub} 2^{d_\cS+2} \log{\br{\frac{T}{\delta}}}}{4 t\ust(\teps(T)) \sqrt{d} \br{\log{\br{\frac{12 T^2 d^\frac{d}{2}}{t\ust(\eps(T)) \teps(T)^d \delta}}} + 1}} ~2^{j}
    \end{align*}
    episode as a key cell when the corresponding episode plays a policy from $\Phi\uc{2^{-i}}$.
    
    \texttt{Step 2:} Now, we produce an upper-bound on the length of the episodes while playing policies from $\Phi\uc{2^{-i}}$. See that
    \begin{align*}
        H_k &\leq \frac{C_H (1+\gamma)^{2(d_\cS + 1)} \log{\br{\frac{T}{\delta}}}}{\pdiam{\tau_k}{\phi_k}^{2(d_\cS + 1)}} \\
        &\leq \frac{C_H (1+\gamma)^{2(d_\cS + 1)} \log{\br{\frac{T}{\delta}}}}{\diam{\tau_k}{\phi_k}^{2(d_\cS + 1)}} \\
        &\leq \frac{C_H ((1+\gamma) C_{ub})^{2(d_\cS + 1)} \log{\br{\frac{T}{\delta}}}}{2^{-i 2(d_\cS + 1)}},
    \end{align*}
    where the first inequality follows from the upper-bound on $H_k$~\eqref{bdd:hk}, the second inequality follows from Corollary~\ref{cor:opt_pdiam}, and the third inequality follows from the definition of $\Phi\uc{2^{-i}}$.
    
    \texttt{Step 3:} First, we note that the cardinality of $\cY_j$ is at most $c_z 2^{j d_z}$ for every $j \in \bN$, where the scaling constant of the zooming dimension,
    \al{
        c_s := \frac{\sqrt{d}}{3C_{ub}}. \label{def:cs}
    }
    This follows from the definition of the zooming dimension~\eqref{def:zoomingdim}.~Multiplying the bounds from step $1$ and step $2$, we obtain an upper-bound on the number of plays of a cell $\zeta \in \cY_j$ as a key cell while playing policies from $\Phi\uc{2^{-i}}$. Summing this upper-bound for all cells in $\cY_j$ and then summing those terms over $j = 1, 2, \ldots, i$, we obtain that the total number of time steps in which policies from $\Phi\uc{2^{-i}}$ is played, can be bounded above by 
    \begin{align*}
        &\sum_{j = 1}^{i}{\sum_{\zeta \in \cY_j}{\br{\frac{3 c_a t\ust(\eps(T)) C_{ub} 2^{d_\cS+2} \log{\br{\frac{T}{\delta}}}}{4 t\ust(\teps(T)) \sqrt{d} \br{\log{\br{\frac{12 T^2 d^\frac{d}{2}}{t\ust(\eps(T)) \teps(T)^d \delta}}} + 1}} ~2^{j}} \times \br{\frac{C_H ((1+\gamma)C_{ub})^{2(d_\cS + 1)} \log{\br{\frac{T}{\delta}}}}{2^{-i 2(d_\cS + 1)}}}}}\\
        &=\frac{3 c_a c_z t\ust(\eps(T)) C_H C_{ub}^{2d_\cS + 3} (1+\gamma)^{2(d_\cS + 1)} 2^{d_\cS+2} \br{\log\br{\frac{T}{\delta}}}^2}{4 t\ust(\teps(T)) \sqrt{d} \br{\log{\br{\frac{12 T^2 d^\frac{d}{2}}{t\ust(\eps(T)) \teps(T)^d \delta}}} + 1}} 2^{i 2(d_\cS + 1)} \sum_{j = 0}^{i}{2^{j (d_z + 1)}} \\
        &\leq \frac{3 c_a c_z t\ust(\eps(T)) C_H C_{ub}^{2d_\cS + 3} (1+\gamma)^{2(d_\cS + 1)} 2^{d_\cS+1} \br{\log\br{\frac{T}{\delta}}}^2}{ t\ust(\teps(T)) \sqrt{d} \br{\log{\br{\frac{12 T^2 d^\frac{d}{2}}{t\ust(\eps(T)) \teps(T)^d \delta}}} + 1}} 2^{i (2d_\cS + d_z + 3)}.
    \end{align*}
    This concludes the proof.
\end{proof}
Let us denote
\begin{align}
    C\up := \frac{3 c_a c_z t\ust(\eps(T)) C_H C_{ub}^{2d_\cS + 3} (1+\gamma)^{2(d_\cS + 1)} 2^{d_\cS+1} \br{\log\br{\frac{T}{\delta}}}^2}{ t\ust(\teps(T)) \sqrt{d} \br{\log{\br{\frac{12 T^2 d^\frac{d}{2}}{t\ust(\eps(T)) \teps(T)^d \delta}}} + 1}}. \label{def:Cup}
\end{align}
As has been discussed earlier at the beginning of this section, we derive an upper-bound on (a) of \eqref{eq:decompregret} by summing the three terms: the regret due to playing policies from the set $\Phi\uc{2^{-i}},~i = 1, 2, \ldots, \ceil{\log{1/\eps(T)}}$, the regret due to playing other policies, and the suboptimality that arises due to the inaccuracy in the solution of the extended MDPs at the beginning of every episode, which can be bounded by $\sqrt{T}$. The first term is bounded using the bound obtained on the number of plays of policies from $\Phi\uc{i}$ in Lemma~\ref{lem:bdd_Phi_play}.~The regret arising from playing policies that are not in $\cup_{i=1}^{\ceil{\log{1/\eps}}}{\Phi\uc{2^{-i}}}$ is at most $\eps(T) T$. Hence,
\begin{align}
    \sum_{k=1}^{K(T)}{H_k (J\ust_\cM - J_\cM(\phi_k)} &\leq C\up \sum_{i=1}^{i\ust}{2^{i(2d_\cS + d_z + 3)} \times 2^{-i+1}} + \eps(T) T + \sqrt{T}\notag\\
    & \leq 2 C\up~ 2^{i\ust(2d_\cS + d_z + 2)} + T^\frac{2d_\cS + d_z + 2}{2d_\cS + d_z + 3} + \sqrt{T} \notag\\
    & \leq (2 C\up + 1)~ T^\frac{2d_\cS + d_z + 2}{2d_\cS + d_z + 3} + \sqrt{T},\label{bdda}
\end{align}
where the second step follows from Lemma~\ref{lem:bdd_Phi_play}.

\textbf{Bounding} (b): We now provide an upper-bound on the term $(b)$ of~\eqref{eq:decompregret}. This proof relies on the uniform ergodicity property~(Assumption~\ref{assum:unif_ergodic}) of the underlying MDP $\cM$ and a trick that converts Markovian noise to martingale noise using the Poisson equation~\eqref{eq:pois} \citep{metivier1984applications}.
\begin{prop}\label{prop:bddb}
Define
\al{
\cG_3 := \left\{ \omega:~\eqref{bdd:fluctuation} \mbox{ holds } \right\},
}
    \begin{align}
        \sum_{k=1}^{K(T)}{\sum_{t=\tau_k}^{\tau_{k+1}-1}{J_\cM(\phi_k) - r(s_t,\phi_k(s_t))}} \leq \frac{m\ust}{1-\alpha} \sqrt{\frac{T}{2} \log{\br{\frac{3}{\delta}}}} + \frac{m\ust}{1 - \alpha} (1 + K(T)), \label{bdd:fluctuation}
    \end{align}
    where $K(T)$ denotes the total number of episodes until time $T$, and $m\ust = \ceil{\log_{\frac{1}{\alpha}}\br{C}} + 1$. 
Then, we have,
\al{
\bP\br{\cG_3} \geq 1 - \frac{\delta}{3},~\delta \in (0,1). \label{bdd:fluc}
}
\end{prop}
\begin{proof}
    Let us denote the episode index at time $t$ by $k(t)$.~We begin by converting the Markovian noise to a martingale difference sequence, i.e.,
    \begingroup
        \allowdisplaybreaks
        \begin{align}
            &\sum_{t=0}^{T-1}{J_\cM(\phi_{k(t)}) - r(s_t,\phi_{k(t)}(s_t))} \notag\\
            &= \sum_{t=0}^{T-1}{\int_{\cS}{h^{\phi_{k(t)}}_\cM(s) p(s_t,\phi_{k(t)}(s_t), ds)} - h^{\phi_{k(t)}}_\cM(s_t)} \notag\\
            &= \sum_{t=1}^{T-1}{\int_{\cS}{h^{\phi_{k(t)}}_\cM(s) p(s_{t-1},\phi_{k(t-1)}(s_{t-1}), ds)} - h^{\phi_{k(t)}}_\cM(s_t)} \notag\\
            &\quad + \sum_{t=1}^{T-1}{\int_{\cS}{ h^{\phi_{k(t)}}_\cM(s) p(s_t,\phi_{k(t)}(s_t), ds)} - \int_{\cS}{ h^{\phi_{k(t)}}_\cM(s) p(s_{t-1},\phi_{k(t-1)}(s_{t-1}), ds)}} \notag\\
            &\quad + \int_{\cS}{h^{\phi_1}_\cM(s) p(s_0,\phi_1(s_0), ds)} - h^{\phi_1}_\cM(s_0) \notag\\
            &= \sum_{t=1}^{T-1}{\int_{\cS}{h^{\phi_{k(t)}}_\cM(s) p(s_{t-1},\phi_{k(t-1)}(s_{t-1}), ds)} - h^{\phi_{k(t)}}_\cM(s_t)} \notag\\
            &\quad + \sum_{t=1}^{T-1}{\int_{\cS}{\br{h^{\phi_{k(t)}}_\cM(s) -  h^{\phi_{k(t-1)}}_\cM(s)} p(s_{t-1},\phi_{k(t-1)}(s_{t-1}), ds)}} \notag\\
            &\quad + \int_{\cS}{h^{\phi_{k(T-1})}_\cM(s) p(s_{T-1},\phi_{k(T-1)}(s_{T-1}), ds)} - h^{\phi_1}_\cM(s_0). \label{ineq:abs_dif_2}
        \end{align}
    \endgroup    
    Now consider the first summation term in the r.h.s. of \eqref{ineq:abs_dif_2}.~Denote $m_t = \int_{\cS}{h^{\phi_{k(t)}}_\cM(s) p(s_{t-1},\phi_{k(t-1)}(s_{t-1}), ds)} - h^{\phi_{k(t)}}_\cM(s_t)$.~Noting that $\phi_k$ is $\cF_{\tau_k-1}$-measurable, we obtain the following:
    \begin{align*}
        \bE\sqbr{m_t \mid \cF_{t-1}} &= \bE\sqbr{\int_{\cS}{h^{\phi_{k(t)}}_\cM(s) p(s_{t-1},\phi_{k(t-1)}(s_{t-1}), ds)} - h^{\phi_{k(t)}}_\cM(s_t) \mid \cF_{t-1}} \\
        &= \int_{\cS}{h^{\phi_{k(t)}}_\cM(s) p(s_{t-1},\phi_{k(t-1)}(s_{t-1}), ds)} - \int_{\cS}{h^{\phi_{k(t)}}_\cM(s) p(s_{t-1},\phi_{k(t-1)}(s_{t-1}), ds)}\\
        &=0.
    \end{align*}
    Hence, $\flbr{m_t}$ is a martingale difference sequence.~Also, from the bound on the span of $h^\phi_\cM$ that was derived in Lemma~\ref{lem:bdd_rvf_spn}, we have that $m_t \in \sqbr{-\frac{m\ust}{1-\alpha}, \frac{m\ust}{1-\alpha}}$.~An application of Azuma-Hoeffding inequality~(Lemma~\ref{lem:ah_ineq}), yields the following: for each $\delta \in (0,1)$, with probability at least $1 - \frac{\delta}{3}$ we have,
    \begin{align}
        \sum_{t=1}^{T-1}{\int_{\cS}{h^{\phi_{k(t)}}_\cM(s) p(s_{t-1},\phi_{k(t-1)}(s_{t-1}), ds)} - h^{\phi_{k(t)}}_\cM(s_t)} \leq \frac{m\ust}{1-\alpha} \sqrt{\frac{T}{2} \log{\br{\frac{3}{\delta}}}}. \label{bd:1}
    \end{align}
    Now, consider the second summation term in the r.h.s. of \eqref{ineq:abs_dif_2}. The $t$-th element in this summation can assume a non-zero value only when a new episode starts at time $t$.~Hence, upon using Lemma~\ref{lem:bdd_rvf_spn}, we conclude that this summation can be upper-bounded as
    \begin{align}
        \sum_{t=1}^{T-1}{\int_{\cS}{\br{h^{\phi_{k(t)}}_\cM(s) -  h^{\phi_{k(t-1)}}_\cM(s)} p(s_{t-1},\phi_{k(t-1)}(s_{t-1}), ds)}} \leq \frac{m\ust}{1 - \alpha} K(T), \label{bd:2}
    \end{align}
    where $K(T)$ denotes the number of episodes that have been started until time $T$ by the learning algorithm.~Again by using Lemma~\ref{lem:bdd_rvf_spn}, the third term can be bounded as,
    \begin{align}
        \int_{\cS}{h^{\phi_{k(T-1})}_\cM(s) p(s_{T-1},\phi_{k(T-1)}(s_{T-1}), ds)} - h^{\phi_1}_\cM(s_0)\leq \frac{m\ust}{1 - \alpha}. \label{bd:3}
    \end{align}
    Putting all the individual bounds from \eqref{bd:1}, \eqref{bd:2} and \eqref{bd:3} together, we have that for any $\delta \in (0,1)$ with probability at least $1 - \delta$,
    \begin{align}
        \sum_{t=1}^{T-1}{J_\cM(\phi_{k(t)}) - r(s_t,\phi_{k(t)}(s_t))} \leq \frac{m\ust}{1-\alpha} \sqrt{\frac{T}{2} \log{\br{\frac{3}{\delta}}}} + \frac{m\ust}{1 - \alpha} (1 + K(T)).\label{ub:b}
    \end{align}
    This concludes the proof.
\end{proof}
Upon combining the upper-bounds on all the terms of the regret decomposition, we obtain the upper-bound on the regret. This is done in the next section.

\subsection{Proof of Theorem~\ref{thm:regupperbound}}
\begin{proof}
    We first derive an upper-bound on $K(T)$, which is the total number of episodes.~The number of episodes of length greater than $T^{\frac{2 d_\cS + 2}{2 d_\cS + d_z + 3}}$ is trivially bounded above by $T^{\frac{d_z + 1}{2 d_\cS + d_z + 3}}$. Now let us bound the number of episodes of length less than $T^{\frac{2 d_\cS + 2}{2 d_\cS + d_z + 3}}$. If the length of the $k$-th episode is less than $T^{\frac{2 d_\cS + 2}{2 d_\cS + d_z + 3}}$, then from the rule of setting episode duration~\eqref{def:epi_dur}, we have
    \begin{align*}
        \frac{C_H \log{\br{\frac{T}{\delta}}}}{\pdiam{\tau_k}{\phi_k}^{2(d_\cS + 1)}} \leq T^{\frac{2 d_\cS + 2}{2 d_\cS + d_z + 3}},
    \end{align*}
    or
    \begin{align*}
        \pdiam{\tau_k}{\phi_k} \geq \br{C_H \log{\br{\frac{T}{\delta}}}}^{\frac{1}{2(d_\cS+1)}} T^{-\frac{1}{2 d_\cS + d_z + 3}}.
    \end{align*}
    From Corollary~\ref{cor:opt_pdiam} and Corollary~\ref{cor:ub_pdiam}, we obtain that
    \begin{align*}
        \frac{1}{3(C_{ub}+1)}\pdiam{\tau_k}{\phi_k} \leq \frac{1}{3} \diam{\tau_k}{\phi_k}.
    \end{align*}
    Also, from the condition of a cell $\zeta$ to be a key cell in the $k$-th episode, we have that
    \begin{align*}
        \diamc{\zeta} \geq \frac{1}{3} \diam{\tau_k}{\phi_k}.
    \end{align*}
    Combining the above three relations, we obtain that if the length of the $k$-th episode is less than $T^{\frac{2 d_\cS + 2}{2 d_\cS + d_z + 3}}$, then the diameter of the corresponding key cell is greater than 
    \begin{align*}
        \frac{\br{C_H \log{\br{\frac{T}{\delta}}}}^{\frac{1}{2(d_\cS+1)}}}{3(C_{ub}+1)} T^{-\frac{1}{2 d_\cS + d_z + 3}}.
    \end{align*}
    From the definition of the zooming dimension~\eqref{def:zoomingdim}, it follows that there can at most be $\cO\br{T^{\frac{d_z}{2 d_\cS + d_z + 3}}}$ such key cells activated by \algo, and each key cell of level $\ell$ becomes deactivated when it has been played in $\cO(2^\ell)$ episodes.~Hence there can be at most $\cO\br{T^{\frac{d_z + 1}{2 d_\cS + d_z + 3}}}$ episodes of length less than $T^{\frac{2 d_\cS + 2}{2 d_\cS + d_z + 3}}$. Hence,
    \begin{align*}
        K(T) \leq C_K T^{\frac{d_z + 1}{2 d_\cS + d_z + 3}},
    \end{align*}
    where $C_K$ is a constant.
    
    We now add all the upper-bounds of various regret components from \eqref{bdda} and \eqref{ub:b}, and use the upper-bound on $K(T)$ derived above. This yields,
    \begin{align*}
        \cR(T;\algo) & \leq (2 C\up + 1)~ T^\frac{2d_\cS + d_z + 2}{2d_\cS + d_z + 3} + \sqrt{T} + \frac{m\ust}{1-\alpha} \sqrt{\frac{T}{2} \log{\br{\frac{3}{\delta}}}} + \frac{m\ust}{1 - \alpha} (1 + K(T)) \\
        & \leq (2 C\up + 1)~ T^\frac{2d_\cS + d_z + 2}{2d_\cS + d_z + 3} +  \br{1 + \frac{m\ust}{1-\alpha} \sqrt{\frac{1}{2} \log{\br{\frac{3}{\delta}}}}} \sqrt{T} + \frac{m\ust}{1 - \alpha} \br{1 + C_K T^{\frac{d_z + 1}{2 d_\cS + d_z + 3}}} \\
        &= \ctO\br{T^{\frac{2 d_\cS + d_z + 2}{2 d_\cS + d_z + 3}}}.
    \end{align*}
    Note that $\bP(\cG_1 \cap \cG_{2,\eps} \cap \cG_3) \geq 1 - \delta$.~Thus, we have the desired regret upper-bound with probability at least $1 - \delta$.
\end{proof}
\section{Concentration Inequality}\label{app:conc_ineq}
In this section, we will show that the discretized MDP kernel belongs to a confidence ball around its estimate.~First, let us introduce some notations.~Let $\tilde{\cZ} \subseteq \cS \times \cA$, and $\tilde{\cQ}$ be a partition of $\cS$ that is made of $\cS$-cells.~Let $\tilde{\cS}$ be the set of representative points of the $\cS$-cells in $\tilde{\cQ}$.~Recall the discretization of $p$ given $\tilde{\cZ}$ and $\tilde{\cS}$, $\wp_{\tilde{\cZ} \to \tilde{\cS},p}$~\eqref{def:disc_p}.~Denote the continuous extension of $\wp_{\tilde{\cZ} \to \tilde{\cS},p}$ by $\bar{\wp}_{\tilde{\cZ} \to \tilde{\cS},p}$, i.e.,
\begin{align*}
    \bar{\wp}_{\tilde{\cZ} \to \tilde{\cS},p}(z,B) := \sum_{\xi \in \cQ}{\frac{\lambda(B \cap \xi)}{\lambda(\xi)} \wp_{\tilde{\cZ} \to \tilde{\cS},p}(z,q(\xi))},
\end{align*}
for every $z \in \cZ, B \in \cB_\cS$.~Define the set,
\begin{align}\label{def:G_1}
    \cG_1 := \cap_{t=0}^{T-1}{\flbr{\norm{\wp_{\cS \times \cA \to \cS_t, p}(z\up, \cdot) - \wp_{\cZ_t \to \cS_t, \hat{p}_t}(z,\cdot)}_1} \leq \eta_t(\zeta) \mbox{ for every } z \in \cZ_t, z\up \in q\inv(z)}.
\end{align}
We show that $\cG_1$ holds with a high probability.
\begin{lemma}\label{lem:conc_ineq}
    $\bP(\cG_1) \geq 1 - \frac{\delta}{3}$, where $\cG_1$ is as in~\eqref{def:G_1}.
\end{lemma}

\begin{proof}
    Fix $t$, and consider a point $z \in \cZ_t$.~Within this proof, we denote $q\inv_t(z)$ by $\zeta$.~Let $\zeta$ be of level $\ell$, and note that $\zeta$ is active at time $t$.~Let $z\up$ be an arbitrary point in $\zeta$.~We want to get a high probability bound on $\norm{\wp_{\cZ_t \to \cS_t,\hat{p}_t}(z,\cdot) - \wp_{\cS \times \cA \to \cS_t,p}(z,\cdot)}_1$.~We have,
    \begin{align}
        &\norm{\wp_{\cZ_t \to \cQ_t, \hat{p}_t}(z,\cdot) - \wp_{\cS \times \cA \to \cS_t,p}(z,\cdot)}_1 \notag\\
        = & \norm{\hat{p}_t(z,\cdot) - \bar{\wp}_{\cS \times \cA \to \cS_t,p}(z\up,\cdot)}_{TV} \notag\\
        \leq & \norm{\hat{p}_t(z,\cdot) - \bar{\wp}_{\cS \times \cA \to \cS\uc{\ell},p}(z\up,\cdot)}_{TV} + \norm{\bar{\wp}_{\cS \times \cA \to \cS\uc{\ell},p}(z\up,\cdot) - \bar{\wp}_{\cS \times \cA \to \cS_t,p}(z\up,\cdot)}_{TV} \notag\\
        \leq & \norm{\hat{p}\uc{d}_t(z,\cdot) - \wp_{\cS \times \cA \to \cS\uc{\ell},p}(z\up,\cdot)}_{1} + \norm{\bar{\wp}_{\cS \times \cA \to \cS\uc{\ell},p}(z\up,\cdot) - \bar{\wp}_{\cS \times \cA \to \cS_t,p}(z\up,\cdot)}_{TV}. \label{eq:con_ineq_decomp}
    \end{align}
    By definition, $\cQ_t$ is a finer partition of $\cS$ than $\cQ\uc{\ell}$.~Hence, from Lemma~\ref{lem:disc_dist}, we have that \nal{\norm{\bar{\wp}_{\cS \times \cA \to \cS\uc{\ell},p}(z\up,\cdot) - \bar{\wp}_{\cS \times \cA \to \cS_t,p}(z\up,\cdot)}_{TV} \leq C_p~\diamc{\zeta}.}
    
    Next, we will provide a high probability upperbound on the first term of r.h.s. of \eqref{eq:con_ineq_decomp}.~We will denote $\wp_{\cS \times \cA \to \cS\uc{\ell},p}(z\up,\cdot)$ by $p\uc{d}_t(z\up,\cdot)$ in order to simplify the notation.~Note that both $\hat{p}\uc{d}_t(z,\cdot)$ and $p\uc{d}_t(z\up,\cdot)$ have the support $\tilde{\cS}_t(z)$, where $|\tilde{\cS}_t(z)| \leq d^{\frac{d_\cS}{2}} \diamc{\zeta}^{-d_\cS}$.~Let $\tilde{\cS}^+_t(z)$ denote the collection of those points in $\cS_t$ such that for any $s \in \tilde{\cS}^+_t(z)$, we have $\hat{p}\uc{d}_t(z,s) - p\uc{d}_t(z\up,s) > 0$.~So, we can write the following:
    \begin{align}
        \bP\br{\norm{\hat{p}\uc{d}_t(z,\cdot) - p\uc{d}_t(z\up,\cdot)}_1 \geq \iota} &= \bP\br{\max_{\cS\up \subset \tilde{\cS}^+_t(z)}{\sum_{s \in \cS\up}{\hat{p}\uc{d}_t(z,s) - p\uc{d}_t(z\up,s)}} \geq \frac{\iota}{2}} \notag\\
        &= \bP\br{\cup_{\cS\up \subset \tilde{\cS}^+_t(z)}{\flbr{\sum_{s \in \cS\up}{\hat{p}\uc{d}_t(z,s) - p\uc{d}_t(z\up,s) \geq \frac{\iota}{2}}}}}.\label{cineq:union}
    \end{align}
    Note that if $\cS\up \subset \tilde{\cS}^+_t(z)$, then $\tilde{\cS}_t(z) \setminus \cS\up \not\subset \tilde{\cS}^+_t(z)$. Hence the number of subsets of $\tilde{\cS}^+_t(z)$ is at most $2^{|\tilde{\cS}_t(z)|-1}$.~If $\bP\br{\sum_{s \in \cS\up}{\hat{p}\uc{d}_t(z,s) - p\uc{d}_t(z\up,s) \geq \frac{\iota}{2}}} \leq b_\iota,~\forall \cS\up \subset \tilde{\cS}^+_t(z)$, then by an application of union bound in~\eqref{cineq:union}, we obtain that the following must hold,
    \begin{align}
        \bP\br{\norm{\hat{p}\uc{d}_t(z,\cdot) - p\uc{d}_t(z\up,\cdot)}_1 \geq \iota} \leq 2^{|\tilde{\cS}_t(z)|-1} b_\iota.\label{ineq:l1byunion}
    \end{align}
    Consider a fixed $\xi \subseteq \cS$.~Define the following random processes,
    \begin{align}
        v_i(z) &:= \ind{(s_i, a_i) \in \zeta_i},\\
        v_i(z,\xi) &:= \ind{(s_i, a_i, s_{i+1}) \in \zeta_i \times \xi},\\
        w_i(z,\xi) &:= v_i(z,\xi) - p(s_i,a_i,\xi) v_i(z),
    \end{align}
    where $i = 0, 1, \ldots, T-1$.~Let $\cS\up \subset S^+_t$ and $\xi = \cup_{s \in \cS\up}{q\inv(s)}$. Then we have,
    \begin{align}
        \sum_{s \in \cS\up}{\hat{p}\uc{d}_t(z,s) - p\uc{d}_t(z\up,s)} &= \frac{N_t\br{\zeta, \xi}}{N_t\br{\zeta}} - p(z\up,\xi) \notag\\
        &= \frac{N_t\br{\zeta, \xi} - p(z\up,\xi) N_t\br{\zeta}}{N_t\br{\zeta}} \notag\\
        &\leq \frac{1}{N_t\br{\zeta}}\br{\sum_{i = 0}^{t - 1}{w_i(z,\xi)}} + \frac{L_p}{2 N_t\br{\zeta}} \sum_{i=0}^{N_t(\zeta)}{\diamc{\zeta_{t_i}}}\notag\\
        &\leq \frac{1}{N_t\br{\zeta}}\br{\sum_{i = 0}^{t - 1}{w_i(z,\xi)}} + 1.5 L_p~ \diamc{\zeta}, \label{ineq:determ}
    \end{align}
    where the last step follows from Lemma~\ref{lem:avg_diam}.~Note that $\flbr{w_i(z,\zeta)}_{i \in [T-1]}$ is martingale difference sequence w.r.t. $\flbr{\cF_i}_{i \in [T-1]}$.~Moreover, $\abs{w_i(z,\zeta)} \leq 1$. Hence from Lemma~\ref{lem:ah_ineq} we have,
    \begin{align*}
        \bP\br{\flbr{\frac{\sum_{i=0}^{t-1}{w_i(z,\xi)}}{N_t\br{\zeta}} \geq \sqrt{\frac{2}{N_t(\zeta)} \log{\br{\frac{3}{\delta}}}}, N_t(\zeta) = N}} \leq \frac{\delta}{3}.
    \end{align*}
    Upon combining this with~\eqref{ineq:determ} we get,
    \begin{align*}
        \bP\br{\flbr{\sum_{s \in \cS\up}{\hat{p}\uc{d}_t(z,s) - p\uc{d}_t(z\up,s)} \geq \sqrt{\frac{2}{N_t(\zeta)} \log{\br{\frac{3}{\delta}}}} + 1.5 L_p~ \diamc{\zeta}, N_t(\zeta) = N}}  \leq \frac{\delta}{3}.
    \end{align*}
    Upon using~\eqref{ineq:l1byunion} in the above, and taking a union bound over all possible values of $N$, we obtain,
    \begin{align*}
        \bP\br{\flbr{\norm{\hat{p}\uc{d}_t(z,\cdot) - p\uc{d}_t(z\up,\cdot)}_1 \geq \sqrt{\frac{2 |\tilde{\cS}_t(z)|}{N_t(\zeta)} \log{\br{\frac{3T}{\delta}}}} + 3 L_p~ \diamc{\zeta}, N_t(\zeta) = N}} \leq \frac{\delta}{3}.
    \end{align*}
    Note that we do not have to take a union over all possible values of $\tilde{\cS}_t(z)$ because of the one-to-one correspondence between $N_t(\zeta)$ and $\tilde{\cS}_t(z)$.~Replacing $|\tilde{\cS}_t(z)|$ by its upper-bound $d^{\frac{d_\cS}{2}} \diamc{\zeta}^{-d_\cS}$, we have,
    \begin{align}
        \bP\br{\norm{\hat{p}\uc{d}_t(z,\cdot) - p\uc{d}_t(z\up,\cdot)}_1  \geq \diamc{\zeta}^{-\frac{d_\cS}{2}} \sqrt{\frac{2~d^{\frac{d_\cS}{2}} \log{\br{\frac{3 T}{\delta}}}}{N_t(\zeta)}} + 3 L_p~ \diamc{\zeta}} \leq \frac{\delta}{3}.
    \end{align}
    Let $\cN_1 := 2 d^\frac{d}{2} \br{\frac{T}{c_a \log{\br{T/\delta}}}}^\frac{d}{d_\cS + 2}$, which is the number of cells the \algo~can activate under all sample paths.~Upon taking union bound over all the cells that could possibly be activated in all possible sample paths at some $t$ and using the fact that $N_t(\zeta) \geq N_{\min}(\zeta)$, the above inequality yields that with a probability at least $1 - \frac{\delta}{3}$, the following holds,
    \begin{align}
        \norm{\hat{p}\uc{d}_t(z,\cdot) - p\uc{d}_t(z\up,\cdot)}_1 &\leq 3~\br{\frac{c_a \log{\br{\frac{T}{\delta}}}}{N_t(\zeta)}}^\frac{1}{d_\cS + 2} + 3 L_p~ \diamc{\zeta},\label{ineq:mu_z}
    \end{align}
    for every $z \in \zeta$, $\zeta \in \cP_t$, and $t \in \{0,1, \ldots, T-1\}$, where $c_a$ is a constant that satisfies
    \al{
        d^{\frac{d_\cS}{2}} \log{\br{\frac{3 T \cN_1}{\delta}}} \leq 4.5 c_a \log{\br{\frac{T}{\delta}}}. \label{def:ca}
    }
    After some algebraic manipulation, we obtain that it suffices to have,
    \nal{
        c_a = \frac{2 d^{\frac{d_\cS}{2}}}{9} \frac{\log{\br{6 d^\frac{d}{2}}}}{\log{\br{\frac{T}{\delta}}}} + \frac{d}{d_\cS+2} + 1.
    }
    The proof follows upon combining the upper-bounds of the first and the second terms of \eqref{eq:con_ineq_decomp}.
\end{proof}

\begin{remark}\label{rem:conc_ineq}
    See that $\cap_{t=0}^{T-1}\{\wp_{\cS_t \times \cA_t \to \cS_t, p}(\cdot, \cdot) \in \cC_t\} \subseteq \cG_1$, where $\cC_t$ is as defined in~\eqref{def:confball}. Hence, 
    \nal{
        \bP\br{\cap_{t=0}^{T-1}\{\wp_{\cS_t \times \cA_t \to \cS_t, p} \in \cC_t\}} \geq 1 - \frac{\delta}{3}.
    }
\end{remark}
\section{Properties of Extended Value Iteration~(EVI) and Extended Policy Evaluation~(EPE)}\label{app:prop_evi_epe}
We recall the definition of Extended MDP at time $t$ that was discussed in Section~\ref{sec:algo},
\begin{align*}
    \cM^+_t = \{(\cS_t, \cA_t,\tilde{p}, \tilde{r}_t) : \tilde{p} \in \cC_t\}, 
\end{align*}
where $\cS_t$ and $\cA_t$ are the discretized state and action space respectively, at time $t$, while $\tilde{r}$ is the discretized reward function with an additional bonus term. $\cC_t$ is a set of plausible discrete transition kernels.~Note that \algo~calls the \evi~subroutine~(Algorithm~\ref{algo:evi}) with a parameter $\gamma$ which specifies the desired accuracy; upon calling \evi~with accuracy parameter $\gamma$, it returns a policy that is $\gamma$-optimal for the extended MDP.~We begin with introducing some notation. For $\phi \in \Phi_t$, $J_{\cM^+_t}(\phi)$ denotes the value of the policy $\phi$ evaluated on the extended MDP $\cM^+_t$. To be precise, this is the optimal average reward when the control action for the extended MDP is chosen according to the policy $\phi$, and the kernel is chosen so as to maximize the average reward.~The next result is similar in spirit to~\citet[Theorem 7]{jaksch2010near}.

\begin{lemma}\label{lem:conv_evi}
    Fix a time $t \in \bN$.~Consider the extended MDP $\cM^+_t$ and the corresponding \evi~iterates:
    \begin{align}
        v_0(s) &= 0, \notag\\
        v_{n+1}(s) &= \max_{\substack{a \in \cA_t(s)\\ \te \in \cC_t}} \flbr{\tilde{r}_t(s,a) + \sum_{s\up \in \cS_t}{\te(s,a,s\up) v_n(s\up)}},~\forall s \in \cS_t, n \in \bN. \label{iter:evi}
    \end{align}
    Then,
    \begin{align*}
        \lim_{n \to \infty}{\br{v_{n+1}(s) - v_n(s)}} = J\ust_{\cM^{+}_t}.
    \end{align*}
    Moreover, whenever $\spn{v_{n+1} - v_{n}} \leq \gamma$, the policy that chooses greedy actions which are optimal w.r.t. $v_n$, is $\gamma$-optimal.
\end{lemma}
\begin{proof}
    Consider the $n$-th step of the \evi~iteration, and let the action $a_n(s)$ and the kernel $\te_n$ maximize the r.h.s. of \eqref{iter:evi}, i.e.,
    \nal{
        (a_n(s), \te_n) \in \underset{\substack{a \in \cA_t(s)\\ \te \in \cC_t}}{\arg\max} \flbr{\tilde{r}_t(s,a) + \sum_{s\up \in \cS_t}{\te(s,a,s\up) v_n(s\up)}}, \mbox{ for every } s \in \cS_t.
    }
    Let $s\ust \in \arg\max_{s \in \cS_t}{v_n}(s)$. Then, $\te_i(s,\cdot)$ has to be chosen from the set $\cC_t$ in such a manner that one assigns the maximum possible probability to a state in $s\ust$. Thus, we must have $\te_i(s, s\ust) \geq \min{\flbr{1, \frac{1}{2}\eta_t(q_t\inv(s,a_n(s)))}}$, where $q_t\inv(s,a_n(s))$ is the active cell at time $t$ that contains $(s,a_n(s))$. Since $\eta_t(q_t\inv(s,a_n(s))) > 0$ for all $s \in \cS_t$, it follows that $\te_i(s\ust, s\ust) > 0$. It is evident that the associated Markov chain is aperiodic. The proof then follows from \citet[Theorem $9.4.4$]{puterman2014markov}. The second claim follows from \citet[Theorem 8.5.6]{puterman2014markov}.
\end{proof}

The next result follows from the previous result. It proves the convergence of the \epe~algorithm~\eqref{algo:epe}, also derives the gap between the true value of a policy and that returned by the~\epe.
\begin{cor}\label{cor:conv_epe}
    Fix a time $t \in \bN$.~Recall the extended MDP $\cM^{d,+}_{t} = \{(\cS_t, \cA_t, \tilde{p}, d_t) : \tilde{p} \in \cC_t\}$, where
    \begin{align*}
        d_t(s,a) = \diamc{q\inv_t(s,a)},~\forall (s,a) \in \cS_t \times \cA_t,
    \end{align*}
    policy $\phi \in \Phi_t$ and the corresponding \epe~iterates:
    \begin{align}
        g^\phi_0(s) &= 0, \notag\\
        g^\phi_{n+1}(s) &= \max_{\te \in \cC_t} \flbr{d_t(s,\phi(s)) + \sum_{s\up \in \cS_t}{\te(s,\phi(s),s\up) g^\phi_n(s\up)}},~\forall s \in \cS_t, n \in \bN. \label{iter:epe}
    \end{align}
    Then
    \begin{align*}
        \lim_{n \to \infty}{\br{g^\phi_{n+1}(s) - g^\phi_n(s)}} = \pdiam{t}{\phi}.
    \end{align*}
    Moreover, when $\spn{g^\phi_{n+1} - g^\phi_{n}} \leq \gamma (g^\phi_{n+1}(s\lst) - g^\phi_{n}(s\lst))$, i.e. the stopping criteria is met, then $(g^\phi_{n+1}(s\lst) - g^\phi_{n}(s\lst))$ satisfies the following:
    \begin{align}
        \frac{\pdiam{t}{\phi}}{1 + \gamma} \leq (g^\phi_{n+1}(s\lst) - g^\phi_{n}(s\lst)) \leq \frac{\pdiam{t}{\phi}}{1 - \gamma}. \label{bd:err_epe}
    \end{align}
\end{cor}

\begin{proof}
    Similar to the proof of Lemma~\ref{lem:conv_evi}, one can show that the transition kernels which maximize the r.h.s. in every iteration of \epe~\eqref{iter:epe} are aperiodic. The convergence of \epe~ then follows from~\citet[Theorem $9.4.4$]{puterman2014markov}.~From \citet[Theorem 8.5.6]{puterman2014markov}, it follows that 
    \begin{align*}
        \abs{(g^\phi_{n+1}(s\lst) - g^\phi_{n}(s\lst)) - \pdiam{t}{\phi}} \leq \gamma~ (g^\phi_{n+1}(s\lst) - g^\phi_{n}(s\lst)),
    \end{align*}
    or
    \begin{align*}
        g^\phi_{n+1}(s\lst) - g^\phi_{n}(s\lst) \leq \frac{\pdiam{t}{\phi}}{1 - \gamma}, \mbox{ and},
        g^\phi_{n+1}(s\lst) - g^\phi_{n}(s\lst) \geq \frac{\pdiam{t}{\phi}}{1 + \gamma}.
    \end{align*}
    This concludes the proof.
\end{proof}

\begin{remark}[Upper and lower-bounds of episode duration]\label{rem:bdd_hk}
Let $d_k = \epe(\cM^{d,+}_{\tau_k}, \tilde{\phi_k}, \gamma, s\lst)$ be the value of the policy $\tilde{\phi}_k$ evaluated on $\cM^{d,+}_{\tau_k}$.~From Corollary~\ref{cor:conv_epe} we have,
    \begin{align*}
        \frac{\pdiam{\tau_k}{\tilde{\phi}_k}}{1 + \gamma} \leq d_k \leq \frac{\pdiam{\tau_k}{\tilde{\phi}_k}}{1 - \gamma}.
    \end{align*}
    As $H_k = \frac{C_H \log{\br{\frac{T}{\delta}}}}{d_k^{2(d_\cS + 1)}}$, we conclude that
    \begin{align}
        \frac{C_H (1 - \gamma)^{2(d_\cS + 1)} \log{\br{\frac{T}{\delta}}}}{\pdiam{\tau_k}{\tilde{\phi}_k}^{2(d_\cS + 1)}} \leq H_k \leq \frac{C_H (1 + \gamma)^{2(d_\cS + 1)} \log{\br{\frac{T}{\delta}}}}{\pdiam{\tau_k}{\tilde{\phi}_k}^{2(d_\cS + 1)}}.\label{bdd:hk}
    \end{align}
\end{remark}
\section{Simulation Experiments}\label{app:sim}
We perform simulations on the following environments.
\begin{enumerate}
    \item \texttt{Continuous RiverSwim}: This environment models an agent who is swimming in a river~\citep{strehl2008analysis}.~Though the original MDP is discrete, we use a continuous version of it.~The state denotes the location of the agent in the river in a single dimension, and the action captures the movement of the agent. The state and action spaces are $[0,6]$ and $[0,1]$, respectively. The state of the system evolves as follows:
    \begin{align*}
        s_{t+1} =
        \begin{cases}
             \min\{\max\{0, s_t - \frac{1}{2}(1 + \frac{w_t}{2})\}, 6\} &\mbox{ w.p. } \frac{2(1-a_t)}{5}\\
             s_t &\mbox{ w.p. } 0.2\\
             \min\{\max\{0, s_t + \frac{1}{2}(1 + \frac{w_t}{2})\}, 6\} &\mbox{ w.p. } \frac{2(1+a_t)}{5},
        \end{cases}
    \end{align*}
    where $\{w_t\}$ is a $0$-mean i.i.d. Gaussian random sequence.~The reward function is given by 
    \nal{
r(s,a) = 0.005(((s-6)/6)^4 + ((a-1)/2)^4) + 0.5((s/6)^4 + ((a+1)/2)^4).
    }
    \item \texttt{Truncated LQ System}: The state of an LQ~\citep{abbasi2011regret} system evolves as follows: \nal{s_{t+1} = A s_t + B a_t + w_t,} where $A, B$ are matrices of appropriate dimensions, and $w_t$ is i.i.d. Gaussian noise. The reward at time $t$ is $- s_t^\top P s_t - a_t^\top Q a_t$.~We clip the state vector since our framework allows only compact state-action spaces. More specifically, we ensure that the state value for each coordinate lies within the interval $[c_{\ell},c_u]$, and restrict the action space to be $[-1,1]^{d_\cA}$. Hence, the $i$-th coordinate of the state process evolves as \nal{s_{t+1}(i) = \max{\{\min{\{(A s_t + B a_t + w_t)(i), c_u\}}, c_\ell\}}.} We have used the following two sets of system parameters:
    \begin{enumerate}
        \item \texttt{Truncated LQ-$1$}: 
            \begin{align*}
                A = \begin{bmatrix}
                    -0.2 & -0.07\\
                    0.6 & 0.07
                \end{bmatrix}, \quad
                &B = \begin{bmatrix}
                    0.07 & 0.09\\
                    -0.03 & -0.1
                \end{bmatrix},
            \end{align*}
            $P = 0.4~I_{2}$\footnote{$I_n$ denotes identity matrix of size $n \times n$.}, $Q = 0.6~I_{2}$ and mean and standard deviation of $w_t$ are $0$ and $0.05$, respectively.~We consider $c_u = -c_\ell = 4$.
        \item \texttt{Truncated LQ-$2$}: 
            \begin{align*}
                A = \begin{bmatrix}
                    -0.2 & -0.07\\
                    0.6 & 0.07
                \end{bmatrix}, \quad
                &B = \begin{bmatrix}
                    0.1 & -0.01 & 0.12 & 0.08\\
                    0.02 & -0.1 & 0.3 & 0.001
                \end{bmatrix}.
            \end{align*}
            Values of $P$, $Q$, $c_u$, $c_\ell$ and mean and standard deviation of $w_t$ are the same as \texttt{Truncated LQ-$1$}.
    \end{enumerate}
        
    \item \texttt{Non-linear System}: We consider a non-linear system~\citep{kakade2020information} where the state evolves as \nal{s_{t+1}(i) = \max{\{\min{\{(A f(s_t) + B g(a_t) + w_t)(i), c_u\}}, c_\ell\},}} where $f$ and $g$ are non-linear functions, $A, B$ are matrices of appropriate dimensions, and $w_t$ is noise sequence. This system can be viewed as a generalization of the LQ control system in which the dynamics are linear in the feature vectors corresponding to state-action values. The feature maps $f(\cdot),g(\cdot)$ can be non-linear functions.~The reward function is a function of the state and the actions. We have set the values for the matrices $A, B$, $P$, $Q$, $c_u$ and $c_\ell$ to be the same as that of \texttt{Truncated LQ-$1$}. We set 
    \begin{align*}
        f(s)(i) = 0.5 s(i) + 0.5 s(i)^2,~\mbox{for } i \in \flbr{1, 2}, \mbox{ and }
        g(a) = a^2,
    \end{align*}
    where $v(i)$ denotes the $i$-th element of vector $v$. Similar to the \texttt{LQ system}, we consider the action space to be $[-1,1]^{d_\cA}$.
\end{enumerate}

\subsection{Choosing Hyperparameters}
Since $L_r$~(Assumption~\ref{assum:lip}), $c_a$~\eqref{def:ca}, $C_\eta$, $C_H$~\eqref{def:CH} may not be known, we instead provide their estimates\slash appropriate upper-bounds to~\algo~in lieu of these parameters.~Our theoretical upper-bounds on regret continue to hold, we simply replace these parameters with the chosen upper-bounds.~In addition to these~\algo~we pass $\delta$ and $\gamma$ as hyperparameters to~\algo.~A brief description of these quantities are as follows:
\begin{enumerate}
    \item $L_r$: We assume the knowledge of an upper-bound on $L_r$, the Lipschitz constant for the reward function~(Assumption~\ref{assum:lip}).
    \item $c_a$: \algo~activates a cell $\zeta$ if $N_t(\zeta) \geq \frac{c_a \log{\br{\frac{T}{\delta}}}}{\diamc{\zeta}^{d_\cS+2}}$~\eqref{Nmin}, and deactivates $\zeta$ if $N_t(\zeta) \geq \frac{c_a 2^{d_\cS+2} \log{\br{\frac{T}{\delta}}}}{\diamc{\zeta}^{d_\cS+2}}$~\eqref{Nmax}.
     \item $C_\eta$: Recall from Section~\ref{sec:algo} that if $\zeta$ is an active cell at time $t$, then its confidence radius $\eta_t(\zeta)$ satisfies $\eta_t(\zeta) \leq C_\eta ~\diamc{\zeta}$, where $C_\eta = 3(1 + L_p) + C_p$.~In order to avoid computing $\eta_t(\zeta)$, we use $C_\eta~ \diamc{\zeta}$ as a substitute for  $\eta_t(\zeta)$, and choose $C_\eta$ as a hyperparameter for ZoRL
    \item $C_H$: $C_H$ is the multiplicative constant associated with the episode duration that satisfies \eqref{def:CH}.
    \item $\delta$: $\delta \in (0,1)$ is the probability parameter.
    \item $\gamma$: $\gamma > 0$ is the accuracy parameter for \epe~subroutine that is used by \algo~in order to compute the proxy diameter of the chosen policy in an episode.
\end{enumerate}
The values of the following three hyperparameters are kept unchanged across four experiments: $L_r = 0.001$, $\delta = 0.1$ and $\gamma = 0.05$. Values of the rest of the parameters are reported in Table~\ref{tab:hyp_param}.

\begin{table}
    \centering
    \begin{tabular}{|c|c|c|c|}
        \hline
        Experiments & $C_a$ & $C_\eta$ & $C_H$\\
        \hline
        Truncated LQ-$1$ & 0.2 & 1 & 0.1\\
        \hline
        Truncated LQ-$2$ & 0.1 & 1 & 0.001\\
        \hline
        Continuous RiverSwim & 0.1 & 1 & 0.001\\
        \hline
        Non-linear System & 1 & 5 & 0.1\\
        \hline
    \end{tabular}
    \caption{ZoRL hyper-parameters.}
    \label{tab:hyp_param}
\end{table}

\subsection{Comparison with PZRL-MF and PZRL-MB}
\begin{figure}[ht]
    \centering
    \begin{subfigure}[b]{0.49\textwidth}
        \centering
        \includegraphics[width=\textwidth]{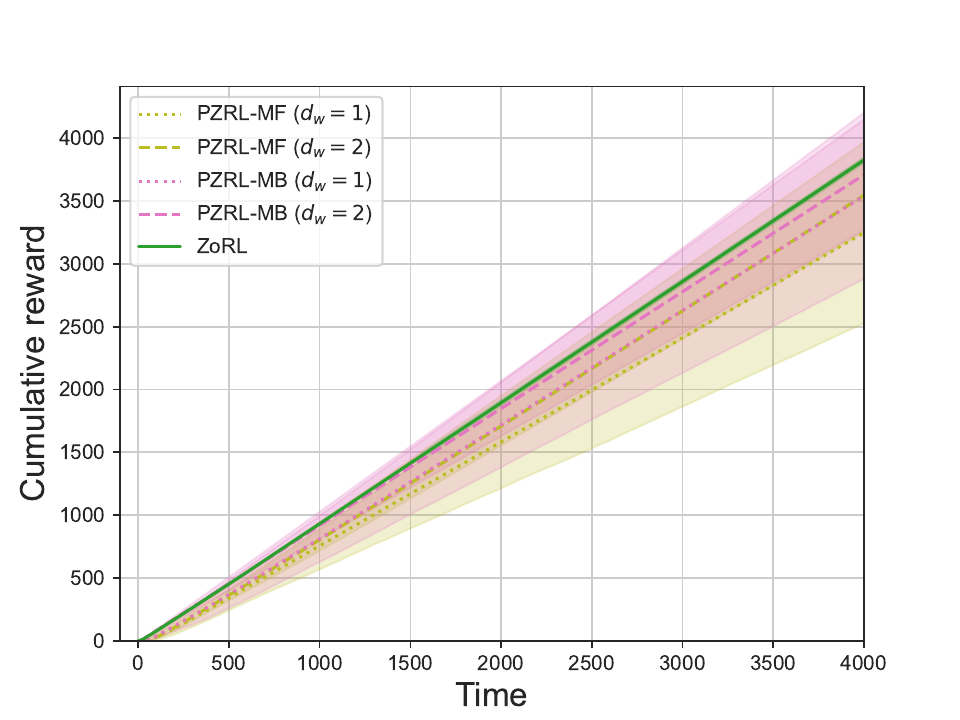}
        \caption{Continuous RiverSwim}
        \label{fig:rswim}
    \end{subfigure}
    \hfill
    \begin{subfigure}[b]{0.49\textwidth}
        \centering
        \includegraphics[width=\textwidth]{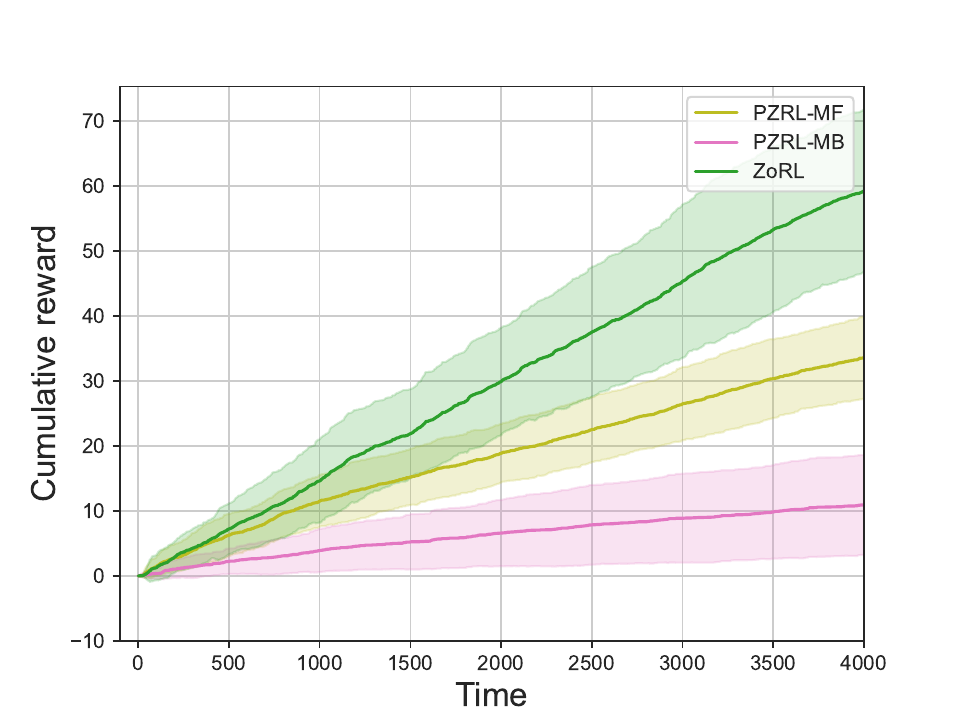} 
        \caption{Continuous GridWorld}         
        \label{fig:gworld}
    \end{subfigure}
    \caption{Comparison with PZRL-MF and PZRL-MB.}
    \label{fig:pzrl_compare}
\end{figure}
\citet{kar2024policy} allows the agent to play policies from a parametric class.~The latest version of~\citet{kar2024policy} proposes two new algorithms PZRL-MF and PZRL-MB\footnote{These replace the PZRL-H algorithm, that has been proposed in an earlier version of the same paper.}. Due to paucity of time, we could not compare PZRL-MF and PZRL-MB with~\algo~on the four environments discussed in Section~\ref{sec:sim}. However, here we compare PZRL-MB and PZRL-MF with \algo~on the following two environments: (i) \texttt{Continuous RiverSwim} and (ii) \texttt{Continuous GridWorld}. 

\texttt{Continuous RiverSwim:}~This environment has been discussed above. We use the following two policy parameterization schemes for PZRL-MF and PZRL-MB.
\begin{enumerate}
    \item $\phi(s;w) = w \cdot s$, $w \in [-1,1]$.
    \item $\phi(s;w) = w(1) + w(2)s^2$, $w = (w(1),w(2)) \in [-1,1]^2$.
\end{enumerate}

\texttt{Continuous GridWorld}: In GridWorld environment~\citep{sutton2018reinforcement}, the agent moves around a compact space, and the space contains a designated reward-yielding region such that the agent earns a reward of $1$ whenever it stays inside the reward-yielding region, and earns no reward otherwise.~We design a continuous version of the same environment; the reward-yielding region is taken to be a circle of radius $0.1$ units whose center is $[0.8,0.8]$. The state space is $[0,1]^2$ and the action space is $[0,2\pi]$. The state of the system evolves as follows:
\begin{align*}
    y_{t+1} &= s_t + \beta \begin{bmatrix} \cos{a_t}\\ \sin{a_t} \end{bmatrix} + w_t, \mbox{ and}\\
    s_t(i) &= (0\vee y_t(i)) \wedge 1, \mbox{ for } i=1,2,~\forall t \in \{0\}\cup \bN,
\end{align*}
where $w_t$ is a zero-mean i.i.d. Gaussian noise, and $\beta > 0$ is the step-size.~The standard deviation of $w_t$ is set to $0.1$, and we use a step size $\beta = 0.2$. For this environment, we parametrize the policies as follows: $\phi(s;w) = w(0) + s(0) w(1) + s(1) w(2)$, where $w \in [0,1]^3$ and $s \in [0,1]^2$.

We plot the cumulative rewards incurred by PZRL-MF, PZRL-MB, and \algo, averaged over $50$ runs for both the systems in Figure~\ref{fig:pzrl_compare}.

\textbf{Computing resources.} We have conducted experiments on a $11$-th Gen Intel Core-i7, $2.5$GHz CPU processor with $16$GB RAM using Python-$3$ and PyTorch library.
\section{Auxiliary Results}\label{app:aux_res}

In this section, we derive some useful properties of the algorithm that are used in the proof of regret upper-bound.~The first lemma shows that for any active cell $\zeta$ at time $t$, the quantity $\frac{1}{N_t(\zeta)}{\sum_{i=1}^{N_t(\zeta)}{\diamc{\zeta_{t_i}}}}$ is bounded above by $3~\diamc{\zeta}$.~We use this in concentration inequality for the transition kernel estimate.

\begin{lemma}\label{lem:avg_diam}
    For all $t \in [T-1]$ and $\zeta \in \cP_t$, let $t_i$ denote the time instance when $\zeta$ or any of its ancestor was visited by \algo~for the $i$-th time. Then
    \begin{align*}
        \frac{1}{N_t(\zeta)} \sum_{i=1}^{N_t(\zeta)}{\diamc{\zeta_{t_i}}} \leq 3~ \diamc{\zeta}.
    \end{align*}
\end{lemma}
\begin{proof}
    By the activation rule~\eqref{def:activationrule}, a cell $\zeta\up$ can be played at most $N_{\max}(\zeta\up) - N_{\min}(\zeta\up) = \tilde{c}_a 2^{2\ell(\zeta\up)} + \frac{\tilde{c}_a}{3} \ind{\zeta\up = \cS \times \cA}$ times while being active, where $\tilde{c}_a = 3 c_a d\inv \log{\br{\frac{T}{\eps \delta}}}~ \eps^{-d_\cS}$. We can write,
    \begin{align*}
        \frac{1}{N_t(\zeta)} \sum_{i=1}^{N_t(\zeta)}{\diamc{\zeta_{t_i}}} &= \frac{1}{N_t(\zeta)} \sum_{i=1}^{N_{\min}(\zeta)}{\diamc{\zeta_{t_i}}} + \frac{1}{N_t(\zeta)} \sum_{i=N_{\min}(\zeta)+1}^{N_t(\zeta)}{\diamc{\zeta_{t_i}}}\\
        &= \frac{\tilde{c}_a \sqrt{d}}{3 N_t(\zeta)} + \frac{\tilde{c}_a \sqrt{d}}{N_t(\zeta)} \sum_{\ell = 0}^{\ell(\zeta) - 1}{2^\ell} +  \frac{N_t(\zeta) - N_{\min}(\zeta) - 1}{N_t(\zeta)} \diamc{\zeta} \\
        &< \frac{\tilde{c}_a \sqrt{d}}{N_t(\zeta)} 2^{\ell(\zeta)} + \frac{N_t(\zeta) - N_{\min}(\zeta) - 1}{N_t(\zeta)} \diamc{\zeta} \\
        &= \frac{3 N_{\min}(\zeta)}{N_t(\zeta)} \diamc{\zeta} + \frac{N_t(\zeta) - N_{\min}(\zeta) - 1}{N_t(\zeta)} \diamc{\zeta}\\
        &=\frac{(N_t(\zeta) + 2 N_{\min}(\zeta) - 1)~ \diamc{\zeta}}{N_t(\zeta)}\\
        &\leq 3~ \diamc{\zeta},
    \end{align*}
    where the last step is due to the fact that $N_{\min}(\zeta) \leq N_t(\zeta)$.
\end{proof}
Next, we show that under Assumption~\ref{assum:bdd_der}, the total variation norm between $\bar{\wp}_{\cS \times \cA \to \cS_t,p}(z,\cdot)$ and $\bar{\wp}_{\cS \times \cA \to \cS\uc{\ell},p}(z,\cdot)$ is bounded above by the discretization width of the partition $\cQ\uc{\ell}$.~We use this result in Lemma~\ref{lem:conc_ineq}.
\begin{lemma}\label{lem:disc_dist}
    Let us fix any state-action pair $z$ and time $t$. Let $\ell = \ell(q_t\inv(z))$. Recall distributions $\bar{\wp}_{\cS \times \cA \to \cS_t,p}(z,\cdot)$ and $\bar{\wp}_{\cS \times \cA \to \cS\uc{\ell},p}(z,\cdot)$ from Lemma~\ref{lem:conc_ineq}. Under Assumption~\ref{assum:bdd_der}, we have that
    \begin{align*}
        \norm{\bar{\wp}_{\cS \times \cA \to \cS\uc{\ell},p}(z,\cdot) - \bar{\wp}_{\cS \times \cA \to \cS_t,p}(z,\cdot)}_{TV} \leq C_p \sqrt{d}~ 2^{-\ell}
    \end{align*}
    for every $z \in \cS \times \cA$.
\end{lemma}
\begin{proof}
    Recall that $\cS_t$ is the set of representative points of $\cQ_t$ and that $\cQ^{(\ell)}$ is a coarser partition of $\cS$ than $\cQ_t$. Let us fix $\xi \in \cQ\uc{\ell}$, and let us denote the Radon-Nikodym derivative of the distribution $p(z, \cdot)$ by $f$. Let $\bar{f} = p(z,\xi)/\lambda(\xi)$.~We have,
    \begin{align*}
        \sup_{B \subseteq \xi}{\abs{\bar{\wp}_{\cS \times \cA \to \cS\uc{\ell},p}(z, B) - p(z, B)}} &\leq \int_{\xi}{(f - \bar{f}) \ind{f \geq \bar{f}} d\lambda}\\
        &\leq \int_{\xi}{(\bar{f} + C_p \sqrt{d} \eps) \ind{f \geq \bar{f}} d\lambda} - \int_{\xi}{\bar{f} \ind{f \geq \bar{f}} d\lambda}\\
        &\leq C_p \sqrt{d} \eps \times \eps^{d_\cS},
    \end{align*}
    where $\eps = 2^{-\ell}$. Hence, by Assumption~\ref{assum:bdd_der}, we have that for every $z \in \cS \times \cA$ and for every $\xi \in \cQ\uc{\ell}$,
    \begin{align*}
        \sup_{B \subseteq \xi}{\abs{\bar{\wp}_{\cS \times \cA \to \cS\uc{\ell},p}(z,B) - \bar{\wp}_{\cS \times \cA \to \cS_t,p}(z, B)}} &\leq \sup_{B \subseteq \xi}{\abs{\bar{\wp}_{\cS \times \cA \to \cS\uc{\ell},p}(z,B) - p(z, B)}}\\
        &\leq C_p \sqrt{d} \eps \times \eps^{d_\cS}.
    \end{align*}
   As $\cQ\uc{\ell}$ is coarser than $\cQ$, it follows that
   \begin{align*}
        \norm{\bar{\wp}_{\cS \times \cA \to \cS\uc{\ell},p}(z,\cdot) - \bar{\wp}_{\cS \times \cA \to \cS_t,p}(z,\cdot)}_{TV} &\leq \sum_{\xi \in \cQ\uc{\ell}}{\sup_{B \subseteq \xi}{\abs{\bar{\wp}_{\cS \times \cA \to \cS\uc{\ell},p}(z,B) - p(z, B)}}}\\
        &\leq C_p \sqrt{d} \eps \times \eps^{d_\cS} \times \eps^{-d_\cS}\\
        &\leq C_p \sqrt{d} \eps.
    \end{align*}
    Hence, we have proven the claim.
\end{proof}
\section{Useful Results}
\label{app:use_res}
\subsection{Concentration Inequalities}
\begin{lemma}[Azuma-Hoeffding inequality]\label{lem:ah_ineq}
    Let $X_1, X_2, \ldots$ be a martingale difference sequence with $|X_i| \leq c \forall i$. Then for all $\epsilon > 0$ and $n \in \bN$,
    \begin{align}
        \bP\left\{\sum_{i = 1}^{n}{X_i} \geq \epsilon\right\} \leq e^{-\frac{\epsilon^2}{2nc^2}}
    \end{align}
\end{lemma}
The following inequality is Proposition $A.6.6$ of \cite{van1996weak}.
\begin{lemma}[Bretagnolle-Huber-Carol inequality]\label{lem:bhl_ineq}
    If the random vector $\paren{X_1, X_2, \ldots, X_n}$ is multinomially distributed with parameters $N$ and $\paren{p_1, p_2, \ldots, p_n}$, then for $\eps > 0$
    \begin{align}
        \bP\paren{\sum_{i=1}^{n}{\abs{X_i - N p_i}} \geq 2\sqrt{N} \eps} \leq 2^n e^{-2\eps^2}.
    \end{align}
    Alternatively, for $\delta > 0$
    \begin{align}
        \bP\paren{\sum_{i=1}^{n}{\abs{\frac{X_i}{N} - p_i}} < \sqrt{\frac{2n}{N} \log{\br{\frac{2}{\delta^\frac{1}{n}}}}}} \geq 1 - \delta.
    \end{align}
\end{lemma}

The following is essentially Theorem~1 of~\cite{abbasi2011improved}.
\begin{thm}[Self-Normalized Tail Inequality for Vector-Valued Martingales] \label{thm:self_norm}
    Let $\{\cF_t\}_{t=0}^{\infty}$ be a filtration. Let $\{\eta_t\}_{t=1}^{\infty}$ be a real-valued stochastic process such that $\eta_t$ is $\cF_t$ measurable and $\eta_t$ is conditionally $R$ sub-Gaussian for some $R>0$, i.e., 
    \begin{align*}
        \bE\left[ \exp(\lambda \eta_t) | \cF_{t-1}  \right] \le \exp\left( \lambda^2 R^2 \slash 2  \right), \forall \lambda\in \bR.
    \end{align*}
    Let $\{X_t\}_{t=1}^{\infty}$ be an $\bR^{d}$ valued stochastic process such that $X_t$ is $\cF_{t-1}$ measurable. Assume that $V$ is a $d\times d$ positive definite matrix. For any $t\ge 0$, define
    \begin{align*}
        \bar{V}_t := V + \sum_{s=1}^{t} X_s X^\top_s,
    \end{align*}
    and
    \begin{align*}
        S_t := \sum_{s=1}^{t} \eta_s X_s.
    \end{align*}
    Then, for any $\delta>0$, with a probability at least $1-\delta$, for all $t\ge 0$,
    \begin{align*}
        \|S_t\|^{2}_{\bar{V}^{-1}_t} \le 2 R^{2} \log{\br{\frac{\det(\bar{V}_t)^{1\slash 2} \det(V)^{-1\slash 2}}{\delta}}}.
    \end{align*}
\end{thm}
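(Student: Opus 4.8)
The plan is to prove this by the \emph{method of mixtures} (pseudo‑maximization), reproducing the argument of \citep{abbasi2011improved} and the classical trick of de la Pe\~na--Klass--Lai. First I would fix a deterministic vector $\lambda \in \bR^d$ and introduce the exponential process
\[
    M_t^{\lambda} := \exp\!\left(\lambda^\top S_t - \frac{R^2}{2}\,\lambda^\top\!\left(\sum_{s=1}^{t} X_s X_s^\top\right)\!\lambda\right), \qquad M_0^{\lambda} := 1 .
\]
Since $X_t$ is $\cF_{t-1}$‑measurable and $\eta_t$ is conditionally $R$‑sub‑Gaussian, conditioning on $\cF_{t-1}$ gives $\bE\!\left[\exp\!\left(\lambda^\top X_t\,\eta_t - \tfrac{R^2}{2}(\lambda^\top X_t)^2\right)\mid \cF_{t-1}\right] \le 1$, so $\{M_t^{\lambda}\}$ is a nonnegative supermartingale with $\bE[M_t^{\lambda}] \le 1$ for every $t$.

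Next I would average over $\lambda$. Draw $\lambda$, independently of the data, from the Gaussian density $h(\lambda) = (2\pi)^{-d/2} (R^2)^{d/2}(\det V)^{1/2}\exp\!\left(-\tfrac{R^2}{2}\lambda^\top V \lambda\right)$, and set $M_t := \int_{\bR^d} M_t^{\lambda}\, h(\lambda)\, d\lambda$. By Tonelli's theorem and the independence of $h$ from the filtration, $\{M_t\}$ is again a nonnegative supermartingale with $\bE[M_t] \le 1$. The key computation is the Gaussian integral: completing the square turns the exponent's quadratic form into $\lambda^\top\!\left(V + \sum_{s=1}^{t} X_s X_s^\top\right)\!\lambda = \lambda^\top \bar{V}_t \lambda$, and evaluating the resulting Gaussian integral yields the closed form
\[
    M_t = \left(\frac{\det V}{\det \bar{V}_t}\right)^{1/2} \exp\!\left(\frac{1}{2R^2}\,\norm{S_t}_{\bar{V}_t\inv}^2\right),
\]
which is exactly the self‑normalized quantity we want to control.

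Finally I would invoke a maximal inequality for $\{M_t\}$. Applying the optional‑stopping theorem to the stopped process $M_{t \wedge \tau}$ with $\tau := \inf\{t \ge 0 : M_t \ge 1/\delta\}$, and letting $t \to \infty$ with Fatou's lemma, gives Ville's inequality $\bP\!\left(\exists\, t \ge 0 : M_t \ge 1/\delta\right) \le \delta$. On the complementary event, $M_t < 1/\delta$ for all $t$; taking logarithms and rearranging the displayed formula for $M_t$ produces
\[
    \norm{S_t}_{\bar{V}_t\inv}^2 \le 2 R^2 \log\!\left(\frac{\det(\bar{V}_t)^{1/2}\det(V)^{-1/2}}{\delta}\right) \quad \text{for all } t \ge 0 ,
\]
which is the claim. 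I expect the only delicate step to be the pseudo‑maximization: justifying rigorously that the mixed process $M_t$ remains a supermartingale (a Fubini/Tonelli interchange on the conditional expectation, using that the mixing density does not depend on the data) and carrying out the Gaussian integral so that $S_t$ ends up normalized by $\bar{V}_t\inv$ rather than by the fixed matrix $V$; the one‑step sub‑Gaussian bound and the optional‑stopping argument are then routine.
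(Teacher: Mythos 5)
Your proof is correct: it is the standard method-of-mixtures (pseudo-maximization) argument, and the supermartingale step, the Gaussian integral yielding $M_t = \big(\det V/\det \bar V_t\big)^{1/2}\exp\big(\tfrac{1}{2R^2}\norm{S_t}_{\bar V_t\inv}^2\big)$, and the Ville/optional-stopping step all check out. The paper does not prove this statement at all --- it imports it verbatim as Theorem~1 of \citet{abbasi2011improved} --- and your argument is precisely the proof given in that cited reference, so there is nothing to compare beyond noting that you have correctly reconstructed the source's proof.
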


\begin{cor}[Self-Normalized Tail Inequality for Martingales] \label{cor:self_norm_vec}
	Let $\{\cF_i\}_{i=0}^{\infty}$ be a filtration. Let $\{\eta_i\}_{i=1}^{\infty}$ be a $\{\cF_i\}_{i=0}^{\infty}$ measurable stochastic process and $\eta_t$ is conditionaly $R$ sub-Gaussian for some $R > 0$. Let $\{ X_i \}_{i=1}^{\infty}$ be a $\{0,1\}$-valued $\cF_{i-1}$ measurable stochastic process.
	
	Then, for any $\delta>0$, with a probability at least $1-\delta$, for all $k \geq 0$,
	\begin{align*}
		\left|\sum_{i=1}^{k}{\eta_i X_i}\right| \leq R \sqrt{2 \left(1 + \sum_{i=1}^{k}{X_i}\right) \log{\br{\frac{1 + \sum_{i=1}^{k}{X_i}}{\delta}}}} .
	\end{align*}
\end{cor}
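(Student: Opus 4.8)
The plan is to specialize the vector-valued self-normalized inequality (Theorem~\ref{thm:self_norm}) to the scalar case $d=1$ and then simplify the resulting bound using the fact that the $X_i$ are $\{0,1\}$-valued. First I would invoke Theorem~\ref{thm:self_norm} with $d=1$ and the choice $V = 1$ (the $1\times 1$ positive-definite ``matrix''). The hypotheses transfer verbatim: $\eta_i$ is $\cF_i$-measurable and conditionally $R$-sub-Gaussian, while $X_i$ is $\cF_{i-1}$-measurable, exactly as the theorem demands. With these choices, for every $k \geq 0$ we have $\bar{V}_k = 1 + \sum_{i=1}^k X_i^2$ and $S_k = \sum_{i=1}^k \eta_i X_i$.

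The key simplification exploits that $X_i \in \{0,1\}$, so $X_i^2 = X_i$ and hence $\bar{V}_k = 1 + \sum_{i=1}^k X_i$. Consequently the scalar weighted norm collapses to $\norm{S_k}^2_{\bar{V}_k^{-1}} = S_k^2 / \bar{V}_k = \br{\sum_{i=1}^k \eta_i X_i}^2 / \br{1 + \sum_{i=1}^k X_i}$, and the determinant ratio becomes $\det(\bar{V}_k)^{1/2}\det(V)^{-1/2} = \br{1 + \sum_{i=1}^k X_i}^{1/2}$. Substituting into the conclusion of Theorem~\ref{thm:self_norm}, on a set of probability at least $1-\delta$ and for all $k \geq 0$,
\[
    \frac{\br{\sum_{i=1}^k \eta_i X_i}^2}{1 + \sum_{i=1}^k X_i} \leq 2R^2 \log\br{\frac{\br{1+\sum_{i=1}^k X_i}^{1/2}}{\delta}}.
\]

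The final step is a monotonicity bound on the logarithm. Since $1 + \sum_{i=1}^k X_i \geq 1$, we have $\br{1+\sum_{i=1}^k X_i}^{1/2} \leq 1 + \sum_{i=1}^k X_i$, so the right-hand side is dominated by $2R^2 \log\br{\br{1+\sum_{i=1}^k X_i}/\delta}$. Multiplying both sides by $1 + \sum_{i=1}^k X_i$ and taking square roots then yields $\abs{\sum_{i=1}^k \eta_i X_i} \leq R\sqrt{2\br{1+\sum_{i=1}^k X_i}\log\br{\br{1+\sum_{i=1}^k X_i}/\delta}}$, which is the claim. I do not expect any genuine obstacle: this is a routine specialization, and the only point requiring care is correctly tracking the $1/2$ exponent in the determinant term and absorbing it via monotonicity of $\log$, while the sub-Gaussianity constant and the filtration/measurability bookkeeping carry over unchanged from the theorem.
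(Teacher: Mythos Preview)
Your proposal is correct and follows exactly the approach of the paper: specialize Theorem~\ref{thm:self_norm} to the scalar case $d=1$ with $V=1$, use $X_i^2=X_i$ to get $\bar V_k = 1+\sum_i X_i$, and simplify. The paper's own proof is a two-line version of what you wrote; your additional step of bounding $(1+\sum_i X_i)^{1/2}\le 1+\sum_i X_i$ inside the logarithm is the natural way to handle the determinant factor and is implicitly assumed there.
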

\begin{proof}
	Taking $V = 1$, we have that $\bar{V}_t = 1 + \sum_{s = 1}^{t}{X_s}$. The claim follows from Theorem~\ref{thm:self_norm}.
\end{proof}

\subsection{Other Useful Results}

\begin{lemma}\label{lem:bdd_epi_tool}
    Consider the following function $f(x)$ such that $0 < a_0 \leq \frac{a_1}{4}$,
    \begin{align*}
        f(x) = a_0 x - \sqrt{a_1 x} - 1.
    \end{align*}
    Then for all $x \geq 1.5\frac{a_1}{a_0^2}$, $f(x) \geq 0$.
\end{lemma}
\begin{proof}
    See that $f(x) \geq 0$ for all $x \geq \paren{\frac{\sqrt{a_1} + \sqrt{a_1 + 4 a_0}}{2 a_0}}^2$. Since $a_1 \leq 4 a_0$, we have that for all $x \geq 1.5 \frac{a_1}{a_0^2}$ $f(x) \geq 0$.
\end{proof}

\begin{lemma}\label{lem:bdd_dotdifLv}
    Let $\mu_1$ and $\mu_2$ be two probability measures on $Z$ and let $v$ be an $\bR$-valued bounded function on $Z$. Then, the following holds.
    \begin{align*}
        \abs{\int_{Z}{(\mu_1 - \mu_2)(z) v(z) dz}} \leq \frac{1}{2}\norm{\mu_1 - \mu_2}_{TV} \spn{v}.&
    \end{align*}
\end{lemma}
\begin{proof}
    Denote $\lm(\cdot) := \mu_1(\cdot) - \mu_2(\cdot)$. Now let $Z_+,Z_- \subset Z$ be such that $\lm(B) \geq 0$ for every $B \subseteq Z_+$ and $\lm(B) < 0$ for every $B \subseteq Z_-$. We have that
    \begin{align}
        \lm(Z) = \lm(Z_+) + \lm(Z_-) = 0.
    \end{align}
    Also,
    \begin{align}
        \lm(Z_+) - \lm(Z_-) = \norm{\mu_1 - \mu_2}_{TV}.
    \end{align}
    Combining the above two, we get that 
    \begin{align}
        \lm(Z_+) = \frac{1}{2}\norm{\mu_1 - \mu_2}_{TV}.
    \end{align}
    Now,
    \begin{align*}
        \abs{\int_Z{\lm(z) v(z) dz}} &= \abs{\int_{Z_+}{\lm(z) v(z) dz} + \int_{Z_-}{\lm(z) v(z) dz}}\\
        &\leq \abs{\lm(Z_+) \sup_{z \in Z}{v(z)} + \lm(Z_-) \inf_{z \in Z}{v(z)}} \\
        &= \abs{\lm(Z_+) \sup_{z \in Z}{v(z)} - \lm(Z_+) \inf_{z \in Z}{v(z)} + \lm(Z_+) \inf_{z \in Z}{v(z)} + \lm(Z_-) \inf_{z \in Z}{v(z)}} \\
        &= \lm(Z_+) \paren{\sup_{z \in Z}{v(z)} - \inf_{z \in Z}{v(z)}} \\
        &= \frac{1}{2}\norm{\mu_1 - \mu_2}_{TV} \spn{v}.
    \end{align*}
    Hence, we have proven the lemma.
\end{proof}

\begin{lemma}\label{lem:diff_kern_comp}
    Let $\te_1$ and $\te_2$ be two transition probability kernels of two Markov chains with common state space $\cS$. Let $\max_{s \in \cS}{\norm{\te_1(s,\cdot) - \te_2(s,\cdot)}_{TV}} \leq c$. Then,
    \begin{align*}
        \norm{\te\uc{m}_1(s,\cdot) - \te\uc{m}_2(s,\cdot)}_{TV} \leq m\cdot c,~\forall m \in \bN.
    \end{align*}
    where $\te\uc{m}_i$ is the $m$-step transition kernel of the Markov chain with one-step transition kernel $\te_i$ for $i = 1, 2$.
\end{lemma}
\begin{proof}
    We shall prove this using mathematical induction. The base case is given. Let us assume that,
    \begin{align*}
        \norm{\te\uc{i}_1(s,\cdot) - \te\uc{i}_2(s,\cdot)}_{TV} \leq i\cdot c,~\forall i = 1, 2, \ldots, m-1.
    \end{align*}
    See that
    \begin{align*}
        \norm{\te\uc{m}_1(s,\cdot) - \te\uc{m}_2(s,\cdot)}_{TV} &= \left\lVert \int_{\cS}{\te\uc{m-1}_1(s,s\up) \te_1(s\up,\cdot) ds\up} - \int_{\cS}{\te\uc{m-1}_2(s,s\up) \te_1(s\up,\cdot) ds\up}\right.\\
        &\qquad \left. + \int_{\cS}{\te\uc{m-1}_2(s,s\up) \te_1(s\up,\cdot) ds\up} - \int_{\cS}{\te\uc{m-1}_2(s,s\up) \te_2(s\up,\cdot) ds\up}\right\rVert_{TV} \\
        &\leq 2\sup_{A\in \cB_\cS}{\int_{\cS}{\br{\te\uc{m-1}_1(s,s\up) - \te\uc{m-1}_2(s,s\up)}\te_1(s\up,A) ds\up}} \\
        &\quad + 2\sup_{A\in \cB_\cS}{\int_{\cS}{\te\uc{m-1}_2(s,s\up) \br{\te_1(s\up,A) - \te_2(s\up,A)} ds\up}} \\
        &\leq \norm{\te\uc{m-1}_1(s,\cdot) - \te\uc{m-1}_2(s,\cdot)}_{TV} \sup_{A\in \cB_\cS}{\spn{\te_1(\cdot,A)}} \\
        &\quad + \int_{\cS}{\te\uc{m-1}_2(s,s\up) \norm{\te_1(s\up,\cdot) - \te_2(s\up,\cdot)}_{TV} ds\up} \\
        &\leq \norm{\te\uc{m-1}_1(s,\cdot) - \te\uc{m-1}_2(s,\cdot)}_{TV} + \max_{s\up \in \cS}{\norm{\te_1(s\up,\cdot) - \te_2(s\up,\cdot)}_{TV}},
    \end{align*}
    where the first inequality follows from triangle inequality and from the definition of total variation distance, the second inequality follows from Lemma~\ref{lem:bdd_dotdifLv} and by taking the supremum inside integration.~This concludes the proof of the lemma.
\end{proof}

\end{document}